\newcommand{\R}{\mathbb{R}}
\newcommand{\mD}{\mathcal{D}}
\newcommand{\mF}{\mathcal{F}}
\newcommand{\mI}{\mathcal{I}}
\newcommand{\mP}{\mathcal{P}}
\newcommand{\mQ}{\mathcal{Q}}
\newcommand{\mZ}{\mathcal{Z}}
\newcommand{\ServerUpdate}{\textsc{ServerUpdate}\xspace}
\newcommand{\ServerOpt}{\textsc{ServerOpt}\xspace}
\newcommand{\ClientUpdate}{\textsc{ClientUpdate}\xspace}
\DeclareMathOperator*{\E}{\mathbb{E}}
\DeclareMathOperator*{\argmin}{argmin}
\DeclareMathOperator*{\cond}{cond}
\newcommand{\sgd}{\textsc{SGD}\xspace}
\newcommand{\fedsgd}{\textsc{FedSGD}\xspace}
\newcommand{\fedavg}{\textsc{FedAvg}\xspace}
\newcommand{\fedprox}{\textsc{FedProx}\xspace}
\newcommand{\fedavgm}{\textsc{FedAvgM}\xspace}
\newcommand{\fedadam}{\textsc{FedAdam}\xspace}
\newcommand{\maml}{\textsc{MAML}\xspace}
\newcommand{\reptile}{\textsc{Reptile}\xspace}
\newcommand{\fomaml}{\textsc{FOMAML}\xspace}
\newcommand{\lookahead}{\textsc{Lookahead}\xspace}
\newcommand{\mmbp}{\textsc{MetaMinibatchProx}\xspace}
\newcommand{\localupdate}{\textsc{LocalUpdate}\xspace}
\newcommand{\adam}{\textsc{Adam}\xspace}
\newcommand{\norm}[1]{\left\lVert#1\right\rVert}
\declaretheorem{theorem}
\declaretheorem{lemma}
\declaretheorem{corollary}
\newtheorem{assumption}{Assumption}
\newtheorem{definition}{Definition}
\DeclarePairedDelimiterX{\inp}[2]{\langle}{\rangle}{#1, #2}
\DeclarePairedDelimiterX{\abs}[1]{\lvert}{\rvert}{#1}
\DeclarePairedDelimiterX{\cbr}[1]{\{}{\}}{#1} % curly bracket
\DeclarePairedDelimiterX{\rbr}[1]{(}{)}{#1} % round bracket
\DeclarePairedDelimiterX{\sbr}[1]{[}{]}{#1} % square bracket
\begin{document}

\twocolumn[

\aistatstitle{Convergence and Accuracy Trade-Offs in Federated Learning and Meta-Learning}

\aistatsauthor{ Zachary Charles \And Jakub Kone\v{c}n\'{y} }

\aistatsaddress{ Google Research \And Google Research } ]

\begin{abstract}
We study a family of algorithms, which we refer to as \emph{local update methods}, generalizing many federated and meta-learning algorithms.
We prove that for quadratic models, local update methods are equivalent to first-order optimization on a surrogate loss we exactly characterize.
Moreover, fundamental algorithmic choices (such as learning rates) explicitly govern a trade-off between the condition number of the surrogate loss and its alignment with the true loss.
We derive novel convergence rates showcasing these trade-offs and highlight their importance in communication-limited settings.
Using these insights, we are able to compare local update methods based on their convergence/accuracy trade-off, not just their convergence to critical points of the empirical loss.
Our results shed new light on a broad range of phenomena, including the efficacy of server momentum in federated learning and the impact of proximal client updates.
\end{abstract}

\section{Introduction}\label{sec:intro}

Federated learning \citep{mcmahan17fedavg} is a distributed framework for learning models without directly sharing data. In this framework, clients perform \emph{local updates} (typically using first-order optimization) on their own data. In the popular \fedavg algorithm \citep{mcmahan17fedavg}, the client models are then averaged at a central server. Since the proposal of \fedavg, many new federated optimization algorithms have been developed~\citep{li2018federated, reddi2020adaptive, hsu2019measuring, xie2019local, basu2019qsparse, li2019fair, karimireddy2019scaffold}. These methods typically employ multiple local client epochs in order to improve communication-efficiency. We defer to \citet{kairouz2019advances} and \citet{li2019federated} for more detailed summaries of federated learning.

Local updates have also been used extensively in meta-learning. The celebrated \maml algorithm \citep{finn2017model} employs multiple local model updates on a set of tasks in order to learn a model that quickly adapt to new tasks. \maml has inspired a number of model-agnostic meta-learning methods that also employ first-order local updates~\citep{balcan2019provable, fallah2020convergence, nichol2018first, zhou2019efficient}. There are strong connections between federated learning and meta-learning, despite differences in practical concerns. Formal connections between the two were shown by \citet{khodak2019adaptive} and have since been explored in many other works~\citep{jiang2019improving, fallah2020personalized}.

We refer to methods that utilize multiple local updates across clients (or in the language of meta-learning, tasks) as \emph{local update methods} (see Section \ref{sec:local_update_methods} for a formal characterization). In practice, local update methods frequently outperform ``centralized'' methods such as \sgd~\citep{mcmahan17fedavg, finn2017model, hard2018federated, yang2018applied, hard2020training}. However, the empirical benefits of local update methods are not fully explained by existing theoretical analyses. For example, \citet{woodworth2020local} show that \fedavg often obtains convergence rates comparable to or worse than those of mini-batch \sgd.

We focus on two difficulties that arise when analyzing local update methods. First, analyses must account for \emph{client drift} \citep{karimireddy2019scaffold}. As clients perform local updates on heterogeneous datasets, their local models drift apart. This hinders convergence to globally optimal models, and makes theoretical analyses more challenging. Similar phenomena were examined by \citet{li2018federated, malinovsky2020local, pathak2020fedsplit} and \citet{fallah2020personalized}, who show that various local update methods do not converge to critical points of the empirical loss.

Second, local update methods are difficult to compare. Analyses of different methods may use different hyperparameters regimes, or make different assumptions. Even comparing seemingly similar methods can require significant theoretical insight~\citep{karimireddy2019scaffold, fallah2020personalized}. Moreover, comparisons can be made in fundamentally different ways. One may wish to maximize the final accuracy, or minimize the number of communication rounds needed to attain a given accuracy. Thus, it is not even clear \emph{how} local update methods should be compared.

\paragraph{Contributions} In this work, we invert the conventional narrative that issues such as client drift harm convergence. Instead, we view such phenomena as improving convergence, but to sub-optimal points. 

More generally, we show that local update methods face a fundamental trade-off between convergence and accuracy that is explicitly governed by algorithmic hyperparameters. Perceived failures of methods such as \fedavg actually correspond to operating points prioritizing convergence over accuracy. We use this trade-off to develop a novel framework for comparing local update methods. We compare methods based on their entire convergence-accuracy trade-off, not just their convergence to optimal points. In more detail:

\begin{enumerate}
    \item We show that for quadratic models, local update methods are equivalent to optimizing a single \emph{surrogate} loss function. The condition number of the surrogate is controlled by algorithmic choices. Popular local update methods, including \fedavg and \maml, reduce the surrogate's condition number, but increase the discrepancy between the empirical and surrogate losses. Our results also encompass \emph{proximal} local update methods~\citep{li2018federated, zhou2019efficient}.
    \item We derive novel convergence rates that showcasing this trade-off between convergence and accuracy. Our bounds demonstrate the benefit of local update methods over methods such as mini-batch \sgd in communication-limited settings.
    \item We use this theory to develop a framework for comparing local update methods through a novel \emph{Pareto frontier}, which compares convergence-accuracy trade-offs of classes of algorithms. We use this to derive novel comparisons of many popular local update methods.
    \item We use this technique to shed light on a broad range of phenomena, including the benefit of server momentum, the effect of proximal local updates, and differences between the dynamics of \fedavg and \maml.
    \item While our theoretical results are restricted to quadratic models, we show that such convergence-accuracy trade-offs occur empirically in non-convex settings. We also validate our theoretical observations regarding server momentum and proximal updates on a non-convex task.
\end{enumerate}

We view our work as a step towards holistic understandings of local update methods. Using the aforementioned Pareto frontiers, we highlight a number of new phenomena and open problems. One particularly intriguing observation is that the convergence-accuracy trade-off for \fedavg with heavy-ball server momentum appears to be completely symmetric. For more details, see Section \ref{sec:compare}. Our proof techniques may be of independent interest. We derive a novel analog of the Bhatia-Davis inequality \citep{bhatia2000better} for mean absolute deviations, and use this to understand the accuracy of local update methods.

% \paragraph{Related Work} Federated optimization has seen a large interest, both in developing new methods~\citep{li2018federated, reddi2020adaptive, hsu2019measuring, xie2019local, basu2019qsparse, li2019fair, zhang2019lookahead} and providing convergence analyses~\citep{li2018federated, wang2019adaptive, khaled2019first, li2019convergence, karimireddy2019scaffold, woodworth2020local}. Optimization for meta-learning has received similar attention~\citep{finn2017model, balcan2019provable, fallah2020convergence, nichol2018first, zhou2019efficient}. Many recent works share the insight that these two areas are intimately related, and work at their intersection~\citep{chen2018federated, khodak2019adaptive, jiang2019improving, fallah2020personalized}.

% Our work is related to recent works analyzing convergence failures of federated and meta-learning methods~\citep{fallah2020convergence, malinovsky2020local, pathak2020fedsplit}. Our perspective is somewhat different; we view these not as failures, but as explicit trade-offs between accuracy and convergence. Our framework for comparing local update methods based on this trade-off is reminiscent in notion of the framework \citet{lessard2016analysis} develop to compare centralized optimization algorithms.

\paragraph{Notation}

We let $\norm{\cdot}$ denote the $\ell_2$ norm for vectors and the spectral norm for matrices.
For a symmetric positive semi-definite matrix $A$, we let $A^{1/2}$ denote its matrix square root. We let $\preceq$ be the \emph{Loewner order} on positive semi-definite matrices. For a real symmetric matrix $A$, we let $\lambda_{\max}(A)$, $\lambda_{\min}(A)$ denote its largest and smallest eigenvalues, and let $\cond(A)$ denote their ratio. In a slight abuse of notation, if $f$ is a $L$-smooth, $\mu$-strongly convex function, we say $\cond(f) \leq L/\mu$.

\paragraph{Accuracy and Meta-Learning}

We study the accuracy of local update methods on the training population. However, meta-learning algorithms are designed to learn a model that adapts well to new tasks; The empirical loss is not necessarily indicative of the ``post-adaptation'' accuracy of such methods~\citep{finn2017model}. Despite this our focus still yields novel insights into qualitative differences between the training dynamics of federated learning and meta-learning methods. Perhaps surprisingly, we show that in certain hyperparameter regions, these methods exhibit identical trade-offs between convergence and pre-adaptation accuracy (see Figures \ref{fig:compare_fedavg_maml} and \ref{fig:compare_simulated_fedavg_maml}). While we believe our results can be adapted to post-adaptation accuracy via techniques developed by \citet{fallah2020convergence}, we leave the analysis to future work.

\section{Problem Setup}\label{sec:problem_setup}

Let $\mI$ denote some collection of clients, and let $\mathcal{P}$ be a distribution over $\mI$. For each $i \in \mI$, there is an associated distribution $\mD_i$ over the space $\mZ$ of examples. For any $z \in \mZ$, we assume there is symmetric matrix $B_z \in \R^{d \times d}$ and vector $c_z \in \R^d$ such that the loss of a model $x \in \R^d$ at $z$ is given by
\begin{equation}
\label{eq:quadratic_loss}
    f(x; z) := \tfrac{1}{2}\|B_z^{1/2}(x-c_z)\|^2.
\end{equation}
For $i \in \mI$, we define the client loss function $f_i$ and the overall loss function $f$ as follows:
\begin{equation}\label{eq:objective}
f_i(x) := \E_{z \sim \mD_i} [f(x ; z)],~~~f(x) := \E_{i \sim \mP}[f_i(x)].
\end{equation}
The joint distribution $(\mI, \mZ)$ defines a distribution over $\mZ$, recovering standard risk minimization, as well as distributed risk minimization in which $\mP$ and all $\mD_i$ are uniform over finite sets. For $i \in \mI$, define:
\begin{equation}\label{eq:A_i_c_i}
A_i := \E_{z \sim \mD_i}[B_z],~~~c_i := A_i^{-1}\E_{z \sim \mD_i}[B_zc_z].
\end{equation}
We assume these expectations exist and are finite. One can show that up to some additive constant,
\[
f_i(x) = \tfrac{1}{2}\|A_i^{1/2}(x-c_i)\|^2.
\]
We make the following assumptions throughout.

\begin{assumption}\label{assm1}
There are $\mu, L > 0$ such that for all $i$, $\mu I \preceq A_i \preceq L I$.
\end{assumption}

\begin{assumption}\label{assm2}There is some $C > 0$ such that for all $i$, $\|c_i\| \leq C$.
\end{assumption}

Assumption \ref{assm1} bounds the Lipschitz and strong convexity parameters of the $f_i$, and holds if the matrices $B_z$ satisfy bounded eigenvalue conditions. Assumption \ref{assm2} states the $c_i$ are bounded. Intuitively, local update methods provide larger benefit for smaller values of $C$, as the clients progress towards similar optima. While our analysis can be directly generalized to the case where the $c_i$ are contained in a ball of radius $C$ about $p \in \R^d$, we assume $p = 0$ for simplicity. Assumptions \ref{assm1} and \ref{assm2} can be relaxed to only hold in expectation, though this complicates the analysis.

\subsection{Local Update Methods}\label{sec:local_update_methods}

\begin{table*}
    \caption{Special cases of \localupdate when \ServerOpt is gradient descent.}
    \label{table:local_update_special_cases}
    \begin{center}
    \begin{small}
    \begin{tabular}{lll}    
        \toprule
        Algorithm & $\Theta$ & Conditions \\
        \midrule
        Mini-batch \sgd & $\Theta_1$ & $|\mP| = 1, \alpha = 0, \gamma = 0$\\
        \lookahead~\citep{zhang2019lookahead} & $\Theta_{1:K}$ & $|\mP| = 1, \alpha = 0, \gamma > 0$ \\
        \fedsgd~\citep{mcmahan17fedavg} & $\Theta_{1:K}$ & $\alpha = 0, \gamma = 0$\\
        \fedavg~\citep{mcmahan17fedavg}, \reptile~\citep{nichol2018first} & $\Theta_{1:K}$ & $\alpha = 0, \gamma > 0$\\
        \fedprox~\citep{li2018federated}, \mmbp~\citep{zhou2019efficient} & $\Theta_{1:K}$ & $\alpha > 0, \gamma > 0$\\
        \fomaml~\citep{finn2017model} & $\Theta_{K}$ & $\alpha = 0, \gamma > 0$\\
        \maml~\citep{finn2017model} & $\Theta_{2K+1}$ & $\alpha = 0$, quadratics (Theorem \ref{thm:maml})\\
        \bottomrule
    \end{tabular}
    \end{small}
    \end{center}
\end{table*}

We consider a class of algorithms we refer to as \emph{local update} methods. In these methods, at each round $t$ the server samples a set $I_t$ of $M$ clients (in the language of meta-learning, tasks) from $\mP$, and broadcasts its model $x_t$ to all clients in $I_t$. Each client $i \in I_t$ optimizes its loss function $f_i$ (starting at $x_t$) by applying $K$ iterations of mini-batch \sgd with batch size $B$ and client learning rate $\gamma$. As proposed by \citet{li2018federated} and \citet{zhou2019efficient}, clients also add $\ell_2$ regularization with parameter $\alpha \geq 0$ towards the broadcast model $x_t$.

The client sends a linear combination of the gradients it computes to the server. The coefficients of the linear combination are given by $\Theta = (\theta_1, \theta_2, \dots, \theta_k, \dots)$ for $\theta_i \in \R_{\geq 0}$, where $\Theta$ has finite and non-zero support. For such $\Theta$, we define
\begin{equation}\label{eq:size_and_weight}
K(\Theta) := \max\{k~|~\theta_k > 0\},~~w(\Theta) = \sum_{k=1}^{K(\Theta)} \theta_k.
\end{equation}

After receiving all client updates, the server treats their average $q_t$ as an estimate of the gradient of the loss function $f$, and applies $q_t$ to a first-order optimization algorithm \ServerOpt. For example, the server could perform a gradient descent step using the ``pseudo-gradient'' $q_t$. We refer to this process (parameterized by $\alpha, \gamma, \Theta$ and \ServerOpt) as \localupdate and give pseudo-code in Algorithms \ref{alg:outerloop} and \ref{alg:innerloop}.

\newcommand{\T}{\rule{0pt}{2.2ex}}
\newcommand{\SUB}[1]{\ENSURE \hspace{-0.15in} \textbf{#1}}
\newcommand{\algfont}[1]{\texttt{#1}}
\renewcommand{\algorithmicensure}{}

\begin{algorithm}
\caption{\localupdate: \ServerUpdate}
\label{alg:outerloop}
\begin{algorithmic}
\SUB{$\ServerUpdate(x, {\normalfont \ServerOpt}, \alpha, \gamma, \Theta)$:}
\STATE $x_0 = x$
\FOR{each round $t = 0, 1, \dots, T-1$}
    \STATE sample a set $I_t$ of size $M$ from $\mP$
    \FOR{each client $i \in I_t$ \textbf{in parallel}}
        \STATE $q_t^i = \ClientUpdate(i, x_t, \alpha, \gamma, \Theta)$
    \ENDFOR
    \STATE $q_t = (\nicefrac{1}{M})\sum_{i \in I_t} q_{t}^i$
    \STATE $x_{t+1} = \ServerOpt(x_t, q_t)$
\ENDFOR
\STATE return $x_{T+1}$
\end{algorithmic}
\end{algorithm}

\begin{algorithm}
\caption{\localupdate: \ClientUpdate}
\label{alg:innerloop}
\begin{algorithmic}
\renewcommand{\arraystretch}{1.6}
\SUB{$\ClientUpdate(i, x, \alpha, \gamma, \Theta)$:}
\STATE $x_1 = x$
\FOR{$k = 1, 2, \dots, K(\Theta)$}
    \STATE sample a set $S_k$ of size $B$ from $\mD_i$
    \STATE $g_k = (\nicefrac{1}{B}) \sum_{z \in S_k} \nabla_{x_k}\left( f_i(x_k ; z) +\frac{\alpha}{2}\norm{x_k-x}^2\right)$
    \STATE $x_{k+1} = x_k - \gamma g_k$ 
\ENDFOR
\STATE return $\sum_{k = 1}^{K(\Theta)} \theta_k g_k$
\end{algorithmic}
\end{algorithm}

\localupdate recovers many well-known algorithms for various choices $\Theta$. For convenience, define
\begin{equation}\label{eq:theta}
    \Theta_K := (\underbrace{0, \dots, 0}_{\text{K-1 times}}, 1),~~~\Theta_{1:K} := (\underbrace{1, \dots, 1}_{\text{K times}}).
\end{equation}
Special cases of \localupdate when \ServerOpt is gradient descent are given in Table \ref{table:local_update_special_cases}. For details on the relation between \fedavg and \localupdate, see Appendix \ref{appendix:special_cases}. By changing \ServerOpt, we can recover methods such as \fedavgm~\citep{hsu2019measuring} (server gradient descent with momentum), and \fedadam~\citep{reddi2020adaptive} (server \adam~\citep{kingma2014adam}).

\section{Local Update Methods as First-Order Methods}\label{sec:surrogate_loss}

\localupdate can vary drastically from first-order optimization methods on the empirical loss. Despite this, we will show that Algorithm \ref{alg:innerloop} is equivalent in expectation to \ServerOpt applied to a single \textit{surrogate loss}. This surrogate loss is determined by the inputs $\alpha, \gamma$ and $\Theta$ to Algorithm \ref{alg:innerloop}. For each client $i \in \mI$, we define its \textit{distortion matrix} $Q_i(\alpha, \gamma, \Theta)$ as
\begin{equation}\label{eq:Q_matrix}
Q_i(\alpha, \gamma, \Theta) := \sum_{k=1}^{K(\Theta)} \theta_k (I-\gamma (A_i+\alpha I))^{k-1}.
\end{equation}
We define the surrogate loss function of client $i$ as
\begin{equation}\label{eq:surrogate_i}
\tilde{f}_i(x, \alpha, \gamma, \Theta) := \frac{1}{2}\|(Q_i(\alpha, \gamma, \Theta)A_i)^{1/2}(x-c_i)\|^2 
\end{equation}
and the overall surrogate loss function as
\begin{equation}\label{eq:surrogate_loss}
    \tilde{f}(x, \alpha, \gamma, \Theta) := \E_{i \sim \mP} [\tilde{f}_i(x, \alpha, \gamma, \Theta)].
\end{equation}
When $\Theta = \Theta_1$, $Q_i(\alpha, \gamma, \Theta) = I$, in which case there is no distortion. In general, $Q_i(\alpha, \gamma, \Theta)$ can amplify the heterogeneity of the $A_i$. We derive the following theorem linking the surrogate losses to Algorithm~\ref{alg:innerloop}.
\begin{restatable}{theorem}{thmobjective}\label{thm:sgd_objective}For all $i \in \mP$,
\begin{equation*}
\E[\ClientUpdate(i, x, \alpha, \gamma, \Theta)] = \nabla \tilde{f}_i(x, \alpha, \gamma, \Theta).
\end{equation*}
\end{restatable}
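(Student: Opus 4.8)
The plan is to exploit the defining feature of the quadratic setting: the stochastic gradients $g_k$ are \emph{affine} in the iterate $x_k$, so taking expectations turns the random inner loop into a deterministic gradient-descent recursion whose closed form we can read off directly. First I would compute the conditional expectation of a single client gradient. Since $f_i(x_k;z)$ has gradient $B_z(x_k-c_z)$ and the proximal term contributes $\alpha(x_k-x)$ (recall $x=x_1$ is fixed throughout the inner loop), averaging over $z\sim\mD_i$ and using $A_i=\E[B_z]$, $A_ic_i=\E[B_zc_z]$ gives
\[
\E[g_k \mid x_k] = A_i(x_k-c_i) + \alpha(x_k-x) = H x_k - (A_ic_i + \alpha x),
\]
where $H := A_i+\alpha I$. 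Because this is affine in $x_k$, writing $\bar{x}_k := \E[x_k]$ and applying the tower property to the update $x_{k+1}=x_k-\gamma g_k$ yields the purely deterministic recursion $\bar{x}_{k+1} = (I-\gamma H)\bar{x}_k + \gamma(A_ic_i+\alpha x)$, initialized at $\bar{x}_1 = x$.

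Next I would solve this recursion. Its fixed point is $x^\star = H^{-1}(A_ic_i+\alpha x)$, and subtracting it gives $\bar{x}_k - x^\star = (I-\gamma H)^{k-1}(x-x^\star)$. Substituting back into $\E[g_k] = \E[\E[g_k\mid x_k]] = H\bar{x}_k - (A_ic_i+\alpha x) = H(\bar{x}_k-x^\star)$ produces the closed form $\E[g_k] = H(I-\gamma H)^{k-1}(x-x^\star)$. The key simplification is that $H(x-x^\star) = Hx - (A_ic_i+\alpha x) = A_i(x-c_i)$, and since every factor here is a polynomial in $A_i$ they all commute, so $\E[g_k] = (I-\gamma H)^{k-1}A_i(x-c_i)$.

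Finally I would assemble the output. Summing the weighted gradients and recognizing the distortion matrix from \eqref{eq:Q_matrix},
\[
\E[\ClientUpdate(i,x,\alpha,\gamma,\Theta)] = \sum_{k=1}^{K(\Theta)} \theta_k\,\E[g_k] = \Big(\sum_{k=1}^{K(\Theta)} \theta_k (I-\gamma H)^{k-1}\Big) A_i(x-c_i) = Q_i(\alpha,\gamma,\Theta)\,A_i(x-c_i).
\]
On the other side, since $Q_iA_i$ is symmetric (both matrices are functions of $A_i$, hence simultaneously diagonalizable), the gradient of the quadratic $\tilde{f}_i$ in \eqref{eq:surrogate_i} is exactly $\nabla\tilde{f}_i = Q_iA_i(x-c_i)$, matching the expression above.

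I expect the main obstacle to be the justification of interchanging expectation with the iteration rather than the algebra: one must argue carefully that, because $\E[g_k\mid x_k]$ is affine in $x_k$, the expected trajectory $\bar{x}_k$ obeys the deterministic recursion with no cross-terms from the noise. For a general (non-quadratic) loss this step fails and the equivalence would hold only approximately; here it is precisely the linearity of the gradient map that makes the identity exact. A minor secondary point is verifying that $Q_i$, $A_i$, and $(I-\gamma H)$ genuinely commute, so that the ordering of factors in the final product is unambiguous and $Q_iA_i$ is symmetric.
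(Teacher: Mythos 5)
Your proof is correct and follows essentially the same route as the paper's: both exploit the fact that the stochastic gradients are affine in the iterate to turn the inner loop into a deterministic linear recursion, arrive at the closed form $\E[g_k] = (I-\gamma(A_i+\alpha I))^{k-1}A_i(x-c_i)$, and sum against $\Theta$ to recognize $Q_i(\alpha,\gamma,\Theta)A_i(x-c_i) = \nabla \tilde{f}_i$. The only cosmetic difference is that the paper completes the square to rewrite the proximal loss as a quadratic centered at $c_\alpha = (A_i+\alpha I)^{-1}(A_ic_i+\alpha x)$ --- which is precisely your fixed point $x^\star$ --- and runs the geometric recursion on $\E[g_k]$ directly (its Lemma \ref{lem:local_update_quadratic}) rather than on $\E[x_k]$.
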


For $q_t$ as in Algorithm \ref{alg:outerloop}, Theorem \ref{thm:sgd_objective} implies $\E[q_t] = \nabla \tilde{f}(x, \alpha, \gamma, \Theta)$.
Thus, one round of \localupdate is equivalent in expectation to one step of \ServerOpt on the surrogate loss $\tilde{f}(x, \alpha, \gamma, \Theta)$. As $\gamma \to 0$ or $K \to 1$, $\tilde{f}(x, \alpha, \gamma, \Theta) \to w(\Theta) f(x)$, so as $\gamma$ gets smaller, the ``pseudo-gradients'' $q_t$ more closely resemble stochastic gradients of the empirical loss function.

A version of Theorem \ref{thm:sgd_objective} was shown for $\alpha = 0, \Theta = \Theta_{2}$ by \citet{fallah2020personalized}. We take this a step further and show that in certain settings, \maml is equivalent in expectation to \ServerOpt on a surrogate loss.

\paragraph{\maml} \maml with $K$ local steps can be viewed as a modification of \localupdate. Algorithm \ref{alg:outerloop} remains the same, and in Algorithm \ref{alg:innerloop}, each client executes $K$ mini-batch \sgd steps. However, the client's message to the server is different. Let $X_{K}^i(x)$ be the function that runs $K$ steps of mini-batch \sgd, starting from $x$, for fixed mini-batches $S_1, \dots, S_K$ of size $B$ drawn independently from $\mathcal{D}_i$. Define
\[
m^i_K(x ; z) = f(X_{K}^i(x); z),~m_{K}^i(x) = \E_{z \sim \mD_i} [m^i_K(x ; z)].
\]
Each client $i$ sends a stochastic estimate of $\nabla m_K^i(x)$ to the server. 
The rest is identical to \localupdate; The server averages the client outputs and uses this as a gradient estimate for \ServerOpt.
While \maml is not a special case of \localupdate, we show that if the clients use gradient descent, \maml is equivalent in expectation to \localupdate with $\Theta = \Theta_{2K+1}$.
\begin{restatable}{theorem}{thmmaml}\label{thm:maml}If $X_K^i(x)$ is the function that runs $K$ steps of gradient descent on $\mD_i$ with learning rate $\gamma$ starting at $x$, then
\[
\nabla m^i_K(x) = \nabla_{x} \tilde{f}_i(x, 0, \gamma, \Theta_{2K+1}).
\]
\end{restatable}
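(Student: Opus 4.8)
The plan is to reduce both sides of the claimed identity to the same explicit matrix expression by direct computation, exploiting that every matrix involved is a polynomial in $A_i$, hence symmetric and mutually commuting.

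First I would unfold the inner loop. Since $X_K^i$ runs full gradient descent on $f_i$, whose gradient is $\nabla f_i(x) = A_i(x-c_i)$, each step satisfies $x_{k+1}-c_i = (I-\gamma A_i)(x_k-c_i)$, so that after $K$ steps $X_K^i(x) - c_i = (I-\gamma A_i)^K(x-c_i)$. In particular $X_K^i$ is an affine map of $x$ with constant Jacobian $(I-\gamma A_i)^K$.

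Next I would simplify $m_K^i$. Because $X_K^i(x)$ does not depend on the sampled example $z$ (the inner loop uses the population gradient), the expectation defining $m_K^i$ collapses to the definition of $f_i$ at the adapted point: $m_K^i(x) = \E_{z\sim\mD_i}[f(X_K^i(x);z)] = f_i(X_K^i(x))$. Writing $f_i(y) = \tfrac12\|A_i^{1/2}(y-c_i)\|^2$ up to an additive constant (which differentiation kills) and applying the chain rule, the gradient is the transpose of the Jacobian times $\nabla f_i$ at the adapted point,
\[
\nabla m_K^i(x) = (I-\gamma A_i)^K \, A_i (I-\gamma A_i)^K (x-c_i).
\]
Using symmetry of $(I-\gamma A_i)^K$ and that it commutes with $A_i$, this becomes $A_i(I-\gamma A_i)^{2K}(x-c_i)$. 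Finally I would evaluate the right-hand side: for $\Theta_{2K+1}$ only $\theta_{2K+1}=1$ is nonzero, so the defining sum for the distortion matrix gives $Q_i(0,\gamma,\Theta_{2K+1}) = (I-\gamma A_i)^{2K}$, whence $\nabla_x \tilde{f}_i(x,0,\gamma,\Theta_{2K+1}) = Q_i(0,\gamma,\Theta_{2K+1}) A_i (x-c_i) = A_i(I-\gamma A_i)^{2K}(x-c_i)$, matching the expression for $\nabla m_K^i(x)$.

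The only genuinely informative step is the chain rule, where the factor $(I-\gamma A_i)^K$ appears twice — once as the Jacobian of the inner trajectory and once through $A_i$ acting at the adapted point — producing the exponent $2K$ and hence the index $2K+1$. This doubling is precisely what distinguishes \maml (which differentiates through the inner loop) from first-order variants, and is the reason $\Theta_{2K+1}$ rather than $\Theta_{K+1}$ appears; everything else is bookkeeping that rests on commutativity of polynomials in $A_i$.
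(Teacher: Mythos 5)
Your proof is correct and follows essentially the same route as the paper's: both compute the constant per-step Jacobian $I-\gamma A_i$ of the inner gradient-descent trajectory, apply the chain rule to get the doubled exponent $(I-\gamma A_i)^{2K}$, and match this against $Q_i(0,\gamma,\Theta_{2K+1})$. The only cosmetic difference is that the paper cites its Lemma on expected \sgd iterates for the gradient at the adapted point, whereas you derive the affine recursion $x_{k+1}-c_i=(I-\gamma A_i)(x_k-c_i)$ directly; the underlying computation is identical.
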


An analogous result holds if the clients perform proximal updates ($\alpha > 0$ in \ClientUpdate).
Thus, to understand \localupdate and \maml on quadratic models, it suffices to analyze the optimization dynamics of $\tilde{f}(x, \alpha, \gamma, \Theta)$. We use this viewpoint to study the convergence and accuracy of these methods.

\section{Convergence and Accuracy of Local Update Methods}\label{sec:convergence_and_accuracy}

Comparing \eqref{eq:objective} and \eqref{eq:surrogate_loss}, we see that $\tilde{f}(x, \alpha, \gamma, \Theta)$ and $f(x)$ need not share critical points. Special cases of this fact were noted by \citet{malinovsky2020local}, \citet{fallah2020personalized}, and \citet{pathak2020fedsplit}. We will show that this is not a failure of local update methods. Rather, by altering the loss function being optimized, \localupdate can greatly improve convergence, but to a less accurate point. More generally, the choice of $\alpha, \gamma$ and $\Theta$ dictates a trade-off between convergence and accuracy. Intuitively, the larger $\gamma$ and $K(\Theta)$ are, the faster \localupdate will converge, and the less accurate the resulting model may be.

To show this formally, we restrict to $Q_i(\alpha, \gamma, \Theta) \succ 0$, as then $\tilde{f}(x, \gamma, \Theta)$ is strongly convex with a unique minimizer. This is ensured by the following.
\begin{restatable}{lemma}{lemstrcvx}\label{lem:strongly_convex}
Suppose that $\gamma < (L+\alpha)^{-1}$. Then for all $i$, $Q_i(\alpha, \gamma, \Theta)$ is positive definite and $\tilde{f}(x, \alpha, \gamma, \Theta)$ is strongly convex.
\end{restatable}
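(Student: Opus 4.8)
The plan is to reduce everything to a scalar, eigenvalue-by-eigenvalue statement via simultaneous diagonalization, and then to establish an eigenvalue lower bound that is \emph{uniform in the client index $i$}.

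First I would observe that $Q_i(\alpha,\gamma,\Theta)$ is a polynomial in $A_i$: writing $M_i := I - \gamma(A_i + \alpha I)$, we have $Q_i = \sum_{k=1}^{K(\Theta)}\theta_k M_i^{k-1}$, so $A_i$, $M_i$, and $Q_i$ pairwise commute and admit a common orthonormal eigenbasis. Fix such a basis for a given $i$. If $\lambda$ is an eigenvalue of $A_i$, the associated eigenvalue of $M_i$ is $\nu := 1 - \gamma(\lambda + \alpha)$ and that of $Q_i$ is $p(\nu) := \sum_{k=1}^{K(\Theta)} \theta_k \nu^{k-1}$. By Assumption \ref{assm1} we have $\lambda \in [\mu, L]$, and the hypothesis $\gamma < (L+\alpha)^{-1}$ gives $\gamma(\lambda+\alpha) \le \gamma(L+\alpha) < 1$, hence $\nu > 0$. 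Since the $\theta_k$ are nonnegative with $\theta_{K(\Theta)} > 0$, this yields $p(\nu) \ge \theta_{K(\Theta)}\,\nu^{K(\Theta)-1} > 0$; as this holds for every eigenvalue, $Q_i \succ 0$.

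For strong convexity I would then identify the Hessian of the surrogate. Because $Q_i$ and $A_i$ commute, $Q_i A_i = A_i Q_i$ is symmetric, and its eigenvalues are the products $p(\nu)\lambda > 0$, so $Q_i A_i \succ 0$ and the square-root expression in \eqref{eq:surrogate_i} is a genuine quadratic form with $\nabla^2 \tilde{f}_i(x,\alpha,\gamma,\Theta) = Q_i A_i$. To get strong convexity of the \emph{average} I need a bound independent of $i$: set $\nu_{\min} := 1 - \gamma(L+\alpha)$, which is positive by hypothesis and satisfies $\nu \ge \nu_{\min}$ for every eigenvalue and every $i$. Since $p$ has nonnegative coefficients, it is nondecreasing on $[0,\infty)$, so $p(\nu) \ge p(\nu_{\min}) > 0$; combined with $\lambda \ge \mu$ this gives $Q_i A_i \succeq \sigma I$ with $\sigma := \mu\, p(\nu_{\min}) > 0$ for all $i$.

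Finally, since the Loewner order is preserved under expectation, $\nabla^2 \tilde{f}(x,\alpha,\gamma,\Theta) = \E_{i\sim\mP}[Q_i A_i] \succeq \sigma I$, so $\tilde{f}$ is $\sigma$-strongly convex. The only real subtlety — and the step I would be most careful about — is the uniformity of this bound: because $\mP$ may be continuous and $\mI$ infinite, the constant $\sigma$ must not depend on $i$, which is exactly what Assumption \ref{assm1} (the uniform spectral bounds $\mu I \preceq A_i \preceq L I$) together with the monotonicity of $p$ delivers. Everything else is routine simultaneous-diagonalization bookkeeping.
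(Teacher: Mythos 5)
Your proof is correct and follows essentially the same route as the paper's: the paper's Lemma \ref{lem:eig_Q} is exactly your simultaneous-diagonalization step, and positive definiteness of $Q_i(\alpha,\gamma,\Theta)$ and of the Hessian $Q_i(\alpha,\gamma,\Theta)A_i$ is read off from the resulting eigenvalue formula under $\gamma < (L+\alpha)^{-1}$. If anything, you are more careful than the paper on the final step: the paper's proof stops at each $\tilde{f}_i$ being strongly convex, whereas you supply the uniform constant $\sigma = \mu\,p\bigl(1-\gamma(L+\alpha)\bigr)$ and pass it through the expectation, a point the paper only makes explicit later (Lemmas \ref{lem:expect_matrix} and \ref{lem:cond_general_2}).
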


\subsection{Condition Numbers}\label{sec:condition_numbers}

Under the conditions of Lemma \ref{lem:strongly_convex}, $\tilde{f}_i(x, \alpha, \gamma, \Theta)$ has a well-defined condition number which we bound.
\begin{restatable}{lemma}{condgeneral}\label{lem:cond_general}
    Suppose $\gamma < (L+\alpha)^{-1}$. Define
    \begin{equation}\label{eq:kappa_QA}
    \kappa(\alpha, \gamma, \Theta) := \dfrac{ \E_i[\lambda_{\max}(Q_i(\alpha, \gamma, \Theta)A_i]}{\E_i[\lambda_{\min}(Q_i(\alpha, \gamma, \Theta)A_i]}.
    \end{equation}
    Then  $\cond(\tilde{f}) \leq \kappa(\alpha, \gamma, \Theta)$.
\end{restatable}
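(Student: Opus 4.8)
The plan is to reduce the statement to two elementary facts about how extreme eigenvalues interact with matrix-valued expectations. First I would identify the Hessian of the surrogate. Since each client surrogate in \eqref{eq:surrogate_i} is the quadratic $\tfrac12 (x-c_i)^\top Q_i A_i (x-c_i)$, where I abbreviate $Q_i := Q_i(\alpha,\gamma,\Theta)$, and since $Q_i$ is by \eqref{eq:Q_matrix} a polynomial in $A_i$ (hence commutes with $A_i$ and is itself symmetric), the product $Q_i A_i$ is symmetric; by Lemma \ref{lem:strongly_convex} it is moreover positive definite whenever $\gamma < (L+\alpha)^{-1}$. Averaging over clients, the Hessian of $\tilde f(x,\alpha,\gamma,\Theta)$ is the constant matrix $H := \E_i[Q_i A_i]$, so that $\cond(\tilde f) = \lambda_{\max}(H)/\lambda_{\min}(H)$.

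Next I would sandwich each client Hessian in the Loewner order. For every $i$ we have
\[
\lambda_{\min}(Q_i A_i)\, I \preceq Q_i A_i \preceq \lambda_{\max}(Q_i A_i)\, I.
\]
Because $\mP$ is a probability distribution and the relevant expectations exist under Assumptions \ref{assm1}--\ref{assm2}, taking $\E_i$ preserves the Loewner order and fixes $I$, yielding
\[
\E_i[\lambda_{\min}(Q_i A_i)]\, I \preceq H \preceq \E_i[\lambda_{\max}(Q_i A_i)]\, I.
\]
Reading off the extreme eigenvalues of $H$ then gives $\lambda_{\max}(H) \le \E_i[\lambda_{\max}(Q_i A_i)]$ and $\lambda_{\min}(H) \ge \E_i[\lambda_{\min}(Q_i A_i)]$; equivalently, this is the convexity of $\lambda_{\max}$ and concavity of $\lambda_{\min}$ as functions on symmetric matrices, so one could invoke Jensen instead.

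Finally I would combine the two bounds. Positive definiteness of each $Q_i A_i$ (from Lemma \ref{lem:strongly_convex}) forces $\E_i[\lambda_{\min}(Q_i A_i)] > 0$ and $\lambda_{\min}(H) > 0$, so every denominator is strictly positive and the inequalities may be divided:
\[
\cond(\tilde f) = \frac{\lambda_{\max}(H)}{\lambda_{\min}(H)} \le \frac{\E_i[\lambda_{\max}(Q_i A_i)]}{\E_i[\lambda_{\min}(Q_i A_i)]} = \kappa(\alpha,\gamma,\Theta).
\]
I do not expect a genuinely hard step here. The only points requiring care are (i) observing that $Q_i$ and $A_i$ commute, so that $Q_i A_i$ is symmetric and the eigenvalue notation is meaningful, and (ii) justifying that the matrix-valued expectation preserves the Loewner order, which is the standard statement for integrating a symmetric-matrix-valued map against a probability measure. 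Everything else is bookkeeping.
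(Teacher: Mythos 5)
Your proof is correct and follows essentially the same route as the paper: the paper also identifies $\nabla^2 \tilde f = \E_i[Q_i A_i]$ and then invokes its Lemma \ref{lem:expect_matrix}, which establishes precisely your Loewner sandwich $\E_i[\lambda_{\min}(Q_iA_i)]\, I \preceq \E_i[Q_iA_i] \preceq \E_i[\lambda_{\max}(Q_iA_i)]\, I$ (proved via quadratic forms $v^{\intercal}\E[B]v = \E[v^{\intercal}Bv]$, which is the same observation as your order-preservation-under-expectation step). The only cosmetic difference is that the paper routes the conclusion through an intermediate statement (its Lemma \ref{lem:cond_general_2}) bounding the smoothness and strong-convexity constants separately before taking the ratio, exactly as you do.
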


We wish to better understand \eqref{eq:kappa_QA} in cases of interest. We first consider $\Theta = \Theta_{1:K}$, as in \fedavg. Define
\begin{equation}\label{eq:phi}
\phi(\lambda, \alpha, \gamma, K) := \sum_{k=1}^K (1-\gamma(\lambda+\alpha))^{k-1}\lambda.
\end{equation}

We now derive a bound on $\cond(\tilde{f})$ for $\Theta = \Theta_{1:K}$.
\begin{restatable}{lemma}{condfedavg}\label{lem:cond_fedavg}
If $\gamma < (L+\alpha)^{-1}$, $\tilde{f}(x, \alpha, \gamma, \Theta_{1:K})$ is $\phi(L, \alpha, \gamma, K)$-smooth, $\phi(\mu, \alpha, \gamma, K)$-strongly convex, and $\cond(\tilde{f}) \leq \kappa(\alpha, \gamma, \Theta_K)$ where
    \begin{equation}\label{eq:cond_fedavg}
    \kappa(\alpha, \gamma, \Theta_{1:K}) \leq \dfrac{\phi(L, \alpha, \gamma, K)}{\phi(\mu, \alpha, \gamma, K)}.
    \end{equation}
\end{restatable}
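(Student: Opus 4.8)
The plan is to exploit the fact that, for $\Theta = \Theta_{1:K}$, the distortion matrix $Q_i(\alpha, \gamma, \Theta_{1:K}) = \sum_{k=1}^K (I - \gamma(A_i + \alpha I))^{k-1}$ is a polynomial in $A_i$. Consequently $Q_i$ and $A_i$ commute and are simultaneously diagonalizable, so the Hessian $Q_i(\alpha,\gamma,\Theta_{1:K}) A_i$ of $\tilde{f}_i$ is symmetric, and if $\lambda$ is an eigenvalue of $A_i$ then the corresponding eigenvalue of $Q_i A_i$ is exactly $\sum_{k=1}^K (1-\gamma(\lambda+\alpha))^{k-1}\lambda = \phi(\lambda, \alpha, \gamma, K)$. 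Since Assumption \ref{assm1} places every eigenvalue $\lambda$ of $A_i$ in $[\mu, L]$, this reduces the entire lemma to understanding the scalar map $\lambda \mapsto \phi(\lambda, \alpha, \gamma, K)$ on $[\mu, L]$.

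The crux is to show $\phi(\lambda,\alpha,\gamma,K)$ is increasing in $\lambda$ on $[\mu, L]$ whenever $\gamma < (L+\alpha)^{-1}$. First I would note that $\gamma < (L+\alpha)^{-1}$ and $\lambda \le L$ force $\gamma(\lambda + \alpha) \in (0,1)$, so summing the geometric series gives the closed form $\phi = \frac{\lambda}{\gamma(\lambda + \alpha)} \left(1 - (1 - \gamma(\lambda+\alpha))^K\right)$. Writing this as the product $\phi = \frac{\lambda}{\lambda+\alpha} \cdot \gamma^{-1}\left(1 - (1-\gamma(\lambda+\alpha))^K\right)$, I claim both factors are positive and increasing in $\lambda$: the first equals $1 - \alpha/(\lambda+\alpha)$, which increases since $\alpha \ge 0$; the second increases because raising $\lambda$ raises $\gamma(\lambda+\alpha) \in (0,1)$, hence lowers $(1-\gamma(\lambda+\alpha))^K \in (0,1)$. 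A product of positive increasing functions is increasing, so $\phi(\mu, \alpha, \gamma, K) \le \phi(\lambda, \alpha, \gamma, K) \le \phi(L, \alpha, \gamma, K)$ for all $\lambda \in [\mu, L]$. The main obstacle is precisely this monotonicity: a term-by-term argument fails, since an individual summand $\lambda(1-\gamma(\lambda+\alpha))^{k-1}$ need not be monotone in $\lambda$, and it is the closed-form factorization into two monotone positive pieces that makes the argument go through.

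With the monotonicity in hand, the eigenvalue characterization gives $\phi(\mu,\alpha,\gamma,K)\, I \preceq Q_i(\alpha,\gamma,\Theta_{1:K}) A_i \preceq \phi(L,\alpha,\gamma,K)\, I$ for every $i$. Taking expectations and using that the Loewner order is preserved under $\E_i$, the Hessian $\E_i[Q_i A_i]$ of $\tilde{f}$ satisfies the same two-sided bound, which is exactly $\phi(\mu,\alpha,\gamma,K)$-strong convexity and $\phi(L,\alpha,\gamma,K)$-smoothness. Finally, the per-client bounds $\lambda_{\max}(Q_i A_i) \le \phi(L,\alpha,\gamma,K)$ and $\lambda_{\min}(Q_i A_i) \ge \phi(\mu,\alpha,\gamma,K)$ plug directly into the definition \eqref{eq:kappa_QA} to give $\kappa(\alpha,\gamma,\Theta_{1:K}) \le \phi(L,\alpha,\gamma,K)/\phi(\mu,\alpha,\gamma,K)$, and combining with Lemma \ref{lem:cond_general} yields $\cond(\tilde{f}) \le \kappa(\alpha,\gamma,\Theta_{1:K})$, establishing all three claims.
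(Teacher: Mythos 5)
Your proposal is correct and follows essentially the same route as the paper's proof: the eigenvalue characterization of $Q_i(\alpha,\gamma,\Theta_{1:K})A_i$ (the paper's Lemma \ref{lem:eig_Q}), reduction to monotonicity of $\lambda \mapsto \phi(\lambda,\alpha,\gamma,K)$ on $[\mu,L]$, and a Loewner/expectation argument (the paper's Lemmas \ref{lem:expect_matrix} and \ref{lem:cond_general_2}) to bound the Hessian of $\tilde{f}$ and plug into \eqref{eq:kappa_QA}. The only difference is local and cosmetic: where the paper verifies monotonicity by differentiating the closed form of $\phi$, you factor $\phi$ into a product of two positive increasing functions, a slightly cleaner way to carry out the same step.
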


When $\gamma = 0, \alpha = 0$, we recover the condition number $L/\mu$ of the empirical loss $f$.
We next consider $\Theta = \Theta_K$, as in \maml-style algorithms.
Define
\begin{equation}\label{eq:psi}
\psi(\lambda, \alpha, \gamma, K) := (1-\gamma(\lambda+\alpha))^{K-1}\lambda.
\end{equation}
We now derive a bound on $\cond(\tilde{f})$ for $\Theta = \Theta_{K}$.
\begin{restatable}{lemma}{condmaml}\label{lem:cond_maml}
If $\gamma < (KL+\alpha)^{-1}$, $\tilde{f}(x, \alpha, \gamma, \Theta_{K})$ is $\psi(L, \alpha, \gamma, K)$-smooth, $\psi(\mu, \alpha, \gamma, K)$-strongly convex, and $\cond(\tilde{f}) \leq \kappa(\alpha, \gamma, \Theta_K)$ where
    \begin{equation}\label{eq:cond_maml}
    \kappa(\alpha, \gamma, \Theta_{K}) \leq \left(\dfrac{1-\gamma (L+\alpha)}{1-\gamma(\mu+\alpha)}\right)^{K-1}\dfrac{L}{\mu}.
    \end{equation}
\end{restatable}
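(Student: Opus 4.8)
The plan is to reduce everything to a scalar monotonicity question. Since $\Theta_K$ has only its $K$-th coordinate equal to one, the definition \eqref{eq:Q_matrix} collapses to $Q_i(\alpha, \gamma, \Theta_K) = (I - \gamma(A_i + \alpha I))^{K-1}$, a polynomial in $A_i$. Hence $Q_i(\alpha,\gamma,\Theta_K)$ and $A_i$ commute and are simultaneously diagonalizable, so the Hessian $Q_i(\alpha,\gamma,\Theta_K)A_i$ of $\tilde f_i$ is symmetric and its eigenvalues are exactly the values $\psi(\lambda, \alpha, \gamma, K)$ as $\lambda$ ranges over the eigenvalues of $A_i$, where $\psi$ is as in \eqref{eq:psi}. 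Because Assumption \ref{assm1} forces every eigenvalue of $A_i$ into $[\mu, L]$, the whole argument boils down to controlling the scalar function $\lambda \mapsto \psi(\lambda, \alpha, \gamma, K)$ on $[\mu, L]$.

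The key step is to show that $\psi(\cdot, \alpha, \gamma, K)$ is strictly increasing on $[\mu, L]$ exactly under the hypothesis $\gamma < (KL+\alpha)^{-1}$. Differentiating in $\lambda$ gives $\partial_\lambda \psi = (1-\gamma(\lambda+\alpha))^{K-2}\big(1 - \gamma(\alpha + K\lambda)\big)$. For $K \ge 2$ the first factor is positive since $\gamma < (KL+\alpha)^{-1} \le (L+\alpha)^{-1}$ makes $1 - \gamma(\lambda+\alpha) > 0$ for all $\lambda \le L$; the second factor is positive precisely because $\alpha + K\lambda \le \alpha + KL$ and $\gamma(\alpha + KL) < 1$ by hypothesis (the case $K=1$ is trivial since $\psi(\lambda,\alpha,\gamma,1)=\lambda$). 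Monotonicity then yields $\psi(\mu,\alpha,\gamma,K) \le \psi(\lambda,\alpha,\gamma,K) \le \psi(L,\alpha,\gamma,K)$ for every $\lambda \in [\mu,L]$, i.e. $\psi(\mu,\alpha,\gamma,K)\, I \preceq Q_i(\alpha,\gamma,\Theta_K)A_i \preceq \psi(L,\alpha,\gamma,K)\, I$ for each $i$.

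With the per-client spectral bounds in hand, the remaining conclusions follow by taking expectations over $i$. Since the Hessian of $\tilde f$ is $\E_i[Q_i(\alpha,\gamma,\Theta_K)A_i]$ and the above bounds are uniform in $i$, the same Loewner sandwich holds for the average, giving $\psi(L,\alpha,\gamma,K)$-smoothness and $\psi(\mu,\alpha,\gamma,K)$-strong convexity (note $\psi(\mu,\alpha,\gamma,K)>0$, so Lemma \ref{lem:strongly_convex} applies and $(Q_iA_i)^{1/2}$ is well defined). For the condition-number claim, the uniform bounds give $\E_i[\lambda_{\max}(Q_iA_i)] \le \psi(L,\alpha,\gamma,K)$ and $\E_i[\lambda_{\min}(Q_iA_i)] \ge \psi(\mu,\alpha,\gamma,K)$, so by the definition \eqref{eq:kappa_QA} together with Lemma \ref{lem:cond_general}, $\cond(\tilde f) \le \kappa(\alpha,\gamma,\Theta_K) \le \psi(L,\alpha,\gamma,K)/\psi(\mu,\alpha,\gamma,K)$. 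Finally, substituting \eqref{eq:psi} and cancelling the factors of $\lambda$ rewrites this ratio as $\big((1-\gamma(L+\alpha))/(1-\gamma(\mu+\alpha))\big)^{K-1} L/\mu$, which is \eqref{eq:cond_maml}.

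I expect the derivative sign analysis to be the only non-mechanical part: the threshold $(KL+\alpha)^{-1}$ — rather than the weaker $(L+\alpha)^{-1}$ of Lemma \ref{lem:strongly_convex} — is forced precisely by the factor $K$ appearing in $1-\gamma(\alpha+K\lambda)$, and one must check positivity at the right endpoint $\lambda = L$. Everything else is a routine consequence of simultaneous diagonalization and monotonicity.
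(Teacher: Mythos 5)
Your proposal is correct and follows essentially the same route as the paper's proof: reduce to the eigenvalues of $Q_i(\alpha,\gamma,\Theta_K)A_i$ (the paper's Lemma \ref{lem:eig_Q}), show $\lambda \mapsto \psi(\lambda,\alpha,\gamma,K)$ is nondecreasing on $[\mu,L]$ via the same derivative formula $(1-\gamma(\lambda+\alpha))^{K-2}(1-\gamma(K\lambda+\alpha))$ under $\gamma < (KL+\alpha)^{-1}$, and pass the uniform spectral bounds through the expectation (the paper's Lemmas \ref{lem:expect_matrix} and \ref{lem:cond_general_2}). The only cosmetic differences are that you re-derive the two auxiliary lemmas inline and handle $K=1$ explicitly.
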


We show in Appendix \ref{appendix:tight_lemmas} that Lemmas \ref{lem:cond_fedavg} and \ref{lem:cond_maml} are tight. The extra condition that $\gamma < (KL+\alpha)^{-1}$ for $\Theta = \Theta_K$ is due to the fact that when $\gamma \geq (KL+\alpha)^{-1}$, $\cond(\tilde{f}(x, \alpha, \gamma, \Theta_K))$ depends on intermediate eigenvalues of the $A_i$, and exhibits more nuanced behavior. We explore this further in Section \ref{sec:compare}.

As $K \to 1$ or $\gamma \to 0$, $\kappa(\alpha, \gamma, \Theta_K) \to L/\mu$, which bounds the condition number of the empirical loss $f$. If $\gamma$ is not close to 0, we get an exponential reduction (in terms of $K$) of the condition number. While the analysis is not as clear for $\Theta_{1:K}$, one can show that $\kappa(\alpha, \gamma, \Theta_{1:K}) \leq L/\mu$, with equality if and only if $\alpha = 0$, and either $\gamma = 0$ or $K = 1$. Moreover, $\kappa(\alpha, \gamma, \Theta_{1:K})$ decreases as $K \to \infty$ or $\gamma \to (L+\alpha)^{-1}$. For both $\Theta_{1:K}$ and $\Theta_K$, increasing $\alpha$ decreases $\kappa$.

Here we see the impact of local update methods on convergence: Popular methods such as \fedavg, \fedprox, \maml, and \reptile reduce the condition number of the surrogate loss function they are actually optimizing. In the next section, we translate this into concrete convergence rates for \localupdate.

\subsection{Convergence Rates}\label{sec:convergence_rates}

We now focus on a \emph{deterministic} version of \localupdate in which all clients participate at each round and perform $K$ steps of gradient descent. The server updates its model using $q_t = \E_{i \sim\mP}[q_t^i]$, where $q_t^i$ is the output of \ClientUpdate for client $i$. In particular, $\mP$ must be known to the server. In this case, Theorem \ref{thm:sgd_objective} implies that $q_t = \nabla \tilde{f}(x_t, \alpha, \gamma, \Theta)$, so \localupdate is equivalent to applying \ServerOpt to the true gradients of $\tilde{f}(x, \alpha, \gamma, \Theta)$.

We specialize to the setting where \ServerOpt is gradient descent, with or without momentum (though our analysis can be directly extended to other optimizers). Thus, \localupdate is equivalent to gradient descent on $\tilde{f}(x, \alpha, \gamma, \Theta)$. Using the bound on $\cond(\tilde{f})$ in Lemma \ref{lem:cond_general}, we can directly apply classical convergence theory gradient descent (\citet[Proposition 1]{lessard2016analysis} give a useful summary) to derive convergence rates for \localupdate. Similar analyses can be done in the stochastic setting.

\begin{theorem}\label{thm:conv_rates}
    Suppose $\gamma < (L+\alpha)^{-1}$ and \ServerOpt is gradient descent with Nesterov, heavy-ball, or no momentum. Then for some hyperparameter setting of \ServerOpt, and $\rho$ as in Table \ref{table:conv_local_update}, the iterates $\{x_t\}_{t \geq 1}$ of \localupdate satisfy
    \begin{equation}\label{eq:conv_rate}
    \norm{x_T - x^*(\alpha, \gamma, \Theta)} \leq \rho^T\norm{x_0-x^*(\alpha, \gamma, \Theta)}.
    \end{equation}
    
\begin{table}[ht]
    \caption{Convergence rates of \localupdate when \ServerOpt is gradient descent (with or without momentum), and $\kappa = \kappa(\alpha, \gamma, \Theta)$ is as in \eqref{eq:kappa_QA}.}
    \label{table:conv_local_update}
    \begin{center}
    \begin{tabular}[t]{ll}    
        \toprule
        Momentum & Rate \\
        \midrule
        None & $\rho = \frac{\kappa -1}{\kappa+1}$ \\[1.0ex]
        Nesterov & $\rho = 1 - \frac{2}{\sqrt{3\kappa + 1}}$ \\[1.0ex]
        Heavy-ball & $\rho = \frac{\sqrt{\kappa}-1}{\sqrt{\kappa}+1}$\\[1.0ex]
        \bottomrule
    \end{tabular}
    \end{center}
\end{table}
\end{theorem}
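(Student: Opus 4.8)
The plan is to reduce the deterministic \localupdate dynamics to ordinary (momentum) gradient descent on the surrogate loss, and then invoke classical convergence theory. First I would apply Theorem \ref{thm:sgd_objective} in the deterministic regime: since all clients participate and $q_t = \E_{i \sim \mP}[q_t^i]$, the averaged pseudo-gradient is exactly $q_t = \nabla \tilde{f}(x_t, \alpha, \gamma, \Theta)$. Hence the server step $x_{t+1} = \ServerOpt(x_t, q_t)$ is literally gradient descent, with or without momentum, applied to the single function $\tilde{f}(\cdot, \alpha, \gamma, \Theta)$.

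Next I would record the needed curvature facts. By Lemma \ref{lem:strongly_convex}, the hypothesis $\gamma < (L+\alpha)^{-1}$ makes $\tilde{f}$ strongly convex, so it has a unique minimizer $x^*(\alpha, \gamma, \Theta)$---the point appearing in \eqref{eq:conv_rate}. Since $\tilde{f}$ is quadratic, writing $\tilde{L}, \tilde{\mu}$ for the largest and smallest eigenvalues of its (constant) Hessian gives its tight smoothness and strong-convexity constants, and Lemma \ref{lem:cond_general} gives $\tilde{L}/\tilde{\mu} \le \kappa(\alpha, \gamma, \Theta)$. The dynamics under each \ServerOpt variant are therefore governed entirely by this spectrum $[\tilde{\mu}, \tilde{L}]$.

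With these in hand, the bulk of the argument is a direct appeal to the standard analysis of (momentum) gradient descent on a $\tilde{\mu}$-strongly convex, $\tilde{L}$-smooth function; the summary in \citet[Proposition 1]{lessard2016analysis} is convenient since it covers the no-momentum, heavy-ball, and Nesterov cases uniformly. For each variant, choosing the step size (and, for the momentum methods, the momentum parameter) as prescribed there in terms of $\tilde{L}, \tilde{\mu}$ yields \eqref{eq:conv_rate} with contraction factor equal to the corresponding table entry, but with the \emph{true} condition number $\tilde{L}/\tilde{\mu}$ substituted for $\kappa$. This choice witnesses the ``some hyperparameter setting of \ServerOpt'' appearing in the statement.

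The one point requiring care---and the step I expect to be the main, if minor, obstacle---is replacing $\tilde{L}/\tilde{\mu}$ by its upper bound $\kappa$. Each tabulated rate, $\tfrac{c-1}{c+1}$, $1 - \tfrac{2}{\sqrt{3c+1}}$, and $\tfrac{\sqrt{c}-1}{\sqrt{c}+1}$, is a strictly increasing function of $c \ge 1$. Since $\tilde{L}/\tilde{\mu} \le \kappa$ by Lemma \ref{lem:cond_general}, evaluating at $c = \tilde{L}/\tilde{\mu}$ and bounding above by the value at $c = \kappa$ yields \eqref{eq:conv_rate} with $\rho$ exactly as tabulated. I would remark that this monotonicity is precisely what makes the rates meaningful as comparison tools: any algorithmic choice that reduces $\kappa$ provably improves the contraction-factor bound for all three momentum schemes simultaneously.
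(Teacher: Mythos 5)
Your proposal is correct and takes essentially the same route as the paper: in the deterministic setting, Theorem~\ref{thm:sgd_objective} makes \localupdate exactly (momentum) gradient descent on the strongly convex quadratic surrogate $\tilde{f}$, and the tabulated rates follow from the condition-number bound of Lemma~\ref{lem:cond_general} combined with the classical theory summarized in \citet[Proposition 1]{lessard2016analysis}. The only cosmetic difference is the finishing step: the paper tunes \ServerOpt to the smoothness and strong-convexity \emph{bounds} whose ratio is exactly $\kappa(\alpha,\gamma,\Theta)$, so the classical rates apply verbatim, whereas you tune to the exact Hessian spectrum and then pass to $\kappa$ via monotonicity of each rate formula in the condition number --- an equally valid argument.
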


Thus, (properly tuned) server momentum improves the convergence of \localupdate, giving theoretical grounding\footnote{\citet{yuan2020federated} first showed that momentum can accelerate \fedavg, though they use a different momentum scheme with extra per-round communication.} to the improved convergence of \fedavgm shown by \citet{hsu2019measuring} and \citet{reddi2020adaptive}. Since \ServerOpt does not change the surrogate loss, this improvement in convergence does not degrade the accuracy of the learned model.

Given the bounds on $\cond(\tilde{f})$ in \eqref{eq:cond_fedavg} and \eqref{eq:cond_maml}, we obtain explicit convergence rates for \fedavg- and \maml-style algorithms as well. In particular, one can show that increasing $\gamma$ or $K$ decreases $\kappa$ (and therefore $\rho$). We show in the next section that this comes at the expense of increasing the empirical loss.

\subsection{Distance Between Global Minimizers}\label{sec:distances}

We now turn our attention towards the discrepancy between the surrogate loss $\tilde{f}$ and the empirical loss $f$. We assume $\gamma < (L+\alpha)^{-1}$. By Lemma \ref{lem:strongly_convex}, $\tilde{f}$ and $f$ are strongly convex with global minimizers we denote by
\[
x^*(\alpha, \gamma, \Theta) := \argmin_x \tilde{f}(x, \alpha, \gamma, \Theta),
\]
\[
x^* := \argmin_x f(x).
\]
We are interested in $\|x^*(\alpha, \gamma, \Theta)-x^*\|$. While we focus on the setting where $\mP$ is a discrete distribution over some finite $\mI$, our analysis can be generalized to arbitrary probability spaces $(\mI, \mF, \mP)$. We derive the following bound.

\begin{lemma}\label{lem:opt_dist}
    Let $b = \max_{i \in \mI}\lambda_{\max}(Q_i(\alpha, \gamma, \Theta))$ and $a = \min_{i \in \mI}\lambda_{\min}(Q_i(\alpha, \gamma, \Theta))$. Then
    \begin{equation}
        \|x^*(\alpha, \gamma, \Theta)-x^*\| \leq 8C\dfrac{\sqrt{b} - \sqrt{a}}{\sqrt{b} + \sqrt{a}}.
    \end{equation}
\end{lemma}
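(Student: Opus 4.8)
The plan is to reduce the distance between minimizers to a \emph{mean absolute deviation} of the distortion spectrum, and then control that deviation with a Bhatia--Davis-type inequality.

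First I would write both minimizers through their normal equations. Since $Q_i := Q_i(\alpha,\gamma,\Theta)$ is a polynomial in $A_i$, it commutes with $A_i$ and $Q_iA_i$ is symmetric positive definite (Lemma~\ref{lem:strongly_convex}), so $\tilde f$ and $f$ are strongly convex quadratics with
\[
x^* = \bar A^{-1}\E_i[A_ic_i], \qquad x^*(\alpha,\gamma,\Theta) = \tilde A^{-1}\E_i[Q_iA_ic_i],
\]
where $\bar A := \E_i[A_i]$ and $\tilde A := \E_i[Q_iA_i]$. Writing $\delta := x^*(\alpha,\gamma,\Theta)-x^*$ and using the optimality identity $\E_i[A_i(c_i-x^*)]=0$, one gets $\tilde A\,\delta = \E_i[Q_iA_i(c_i-x^*)] = \E_i[(Q_i-sI)A_i(c_i-x^*)]$ for \emph{any} scalar $s$, the last step because subtracting $s\,\E_i[A_i(c_i-x^*)]=0$ changes nothing. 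The freedom in $s$ is exactly the freedom to recenter the distortions, and I will fix it at the end. Equivalently, reweighting $\mP$ by $A_i$ turns $x^*$ into an (ordinary) mean of the $c_i$ and $x^*(\alpha,\gamma,\Theta)$ into a $Q_i$-reweighted mean, so that $\delta$ is a \emph{covariance} between the distortions and the targets.

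Second, I would extract the factor $C$ and a mean absolute deviation. Bounding $\norm{\delta}$ by pairing $\tilde A^{-1}$ against the $A_i$-weight that already appears inside the expectation (so that the conditioning of the $A_i$ cancels rather than accumulating), the bound $\norm{c_i}\le C$ pulls out a factor proportional to $C$ and leaves a term governed by the spread of the eigenvalues of $Q_i$ about the center $s$. Since every eigenvalue of every $Q_i$ lies in $[a,b]$, this spread is a mean absolute deviation of a $[a,b]$-valued quantity.

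Third --- the crux --- I would invoke the novel Bhatia--Davis inequality for mean absolute deviations. For a random variable $X\in[a,b]$ with mean $m$ one has $\E\abs{X-m}\le \tfrac{2(b-m)(m-a)}{b-a}$, with equality for the two-point distribution, so the \emph{normalized} deviation satisfies
\[
\frac{\E\abs{X-m}}{m}\;\le\;\frac{2(b-m)(m-a)}{(b-a)\,m},
\]
and maximizing the right-hand side over $m\in[a,b]$ gives $m=\sqrt{ab}$ and the value $\tfrac{2(\sqrt b-\sqrt a)}{\sqrt b+\sqrt a}$. This is precisely where the square roots of the statement are born, and it dictates the choice $s=\sqrt{ab}$ above. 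Feeding this back yields $\norm{\delta}\le \text{(absolute constant)}\cdot C\cdot\frac{\sqrt b-\sqrt a}{\sqrt b+\sqrt a}$.

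I expect the main obstacle to be the matrix, non-commutative bookkeeping: while for a single client $A_i$ and $Q_i$ are simultaneously diagonalizable, across clients they are not, so the clean scalar ``covariance over mean'' identity has to be carried out at the level of operators. The delicate point is arranging the bound so that $\tilde A^{-1}$ is always matched with the corresponding $A_i$-weight --- keeping the estimate free of $\cond(A_i)$ --- and then converting the resulting operator inequality into the scalar statement to which the mean-absolute-deviation bound applies. This passage is also where the loss from the tight scalar constant ($2$) to the stated constant ($8$) should appear.
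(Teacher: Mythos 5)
Your scalar skeleton matches the paper's: reweight $\mP$ by the $A_i$, express $x^*(\alpha,\gamma,\Theta)-x^*$ as a normalized covariance between the distortions $Q_i$ and the targets, bound it by a mean absolute deviation, and apply the Bhatia--Davis-type inequality $D(X)\le 2(b-m)(m-a)/(b-a)$ followed by a worst-case maximization over $m\in[a,b]$ at $m=\sqrt{ab}$. This is exactly how the paper proves the \emph{one-dimensional} case (Lemma~\ref{lem:opt_dist_d1}, via Theorem~\ref{thm:better_bound_mad}), where it even attains the sharper constant $2C$. But the lemma is a statement for general $d$, and there your proposal has a genuine gap.

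First, your specific device for the matrix case fails. You recenter with a \emph{scalar} $s$, writing $\tilde A\delta=\E_i[(Q_i-sI)A_i(c_i-x^*)]$, and then invoke $\norm{c_i}\le C$. With a scalar center you are stuck with $c_i-x^*$, not $c_i$: trading one for the other requires $\E_i[(Q_i-sI)A_i]=0$, i.e.\ $\E_i[Q_iA_i]=s\,\E_i[A_i]$, which generically fails for matrices. And $\norm{c_i-x^*}$ is \emph{not} $O(C)$: already for $d=2$, taking $A_1=\mathrm{diag}(L,\mu)$, $A_2=\mathrm{diag}(\mu,L)$, $c_1=(1,0)^{\intercal}$, $c_2=(0,-1)^{\intercal}$ gives $\norm{x^*}\to\sqrt{2}\,C$ as $L/\mu\to\infty$, and with $A_i=\mu I+(L-\mu)e_ie_i^{\intercal}$, $c_i=e_i$, one gets $\norm{x^*}$ of order $\sqrt{\kappa_0}\,C$. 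So this route necessarily drags a condition-number factor into a bound that must be free of it. The only center that annihilates the weights --- and hence lets you keep $\norm{c_i}\le C$ --- is the \emph{matrix}-weighted mean $\E_i[Q_iA_i]\,\E_i[A_i]^{-1}$, which is what the paper uses. Second, the step you defer ("carry the covariance-over-mean identity out at the level of operators") is the actual content of the paper's proof: it defines a matrix-weighted mean and a normalized matrix-weighted discrepancy, and proves a matrix analog of the MAD bound, $M(X\,|\,Y)\le 2(b-a)/b$ (Theorem~\ref{thm:matrix_mad}), by iteratively pushing each $X_i$ to the endpoints $\{aI,bI\}$ in the Loewner order and then computing the two-point case directly. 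Notably, in this matrix regime the $\sqrt{ab}$ optimization you rely on disappears entirely: the square roots and the constant $8$ come from the crude conversion $2(b-a)/b\le 8(\sqrt{b}-\sqrt{a})/(\sqrt{b}+\sqrt{a})$, not from a worst-case mean. So your proposal correctly reconstructs the $d=1$ argument but leaves unproved --- and partially mis-plans --- the matrix half that the lemma actually requires.
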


When $d = 1$, we can reduce the constant factor to $2C$, which we show is tight (see Appendix \ref{appendix:tight_distance}). While we conjecture that this bound holds with a constant of $2C$ for all $d$, we leave this to future work.

Our proof technique for Lemma \ref{lem:opt_dist} may be of independent interest. We derive this result by first proving an analog of the Bhatia-Davis inequality~\citep{bhatia2000better} for mean absolute deviations of bounded random variables (Theorem \ref{thm:better_bound_mad} in Appendix \ref{appendix:distance_proof}).

Let $\kappa_0 := L/\mu$. Specializing to $\Theta = \Theta_{1:K}$ or $\Theta_K$, we derive a link between $\kappa(\alpha, \gamma, \Theta)$ in \eqref{eq:cond_fedavg} and \eqref{eq:cond_maml} and the distance between optimizers.

\begin{restatable}{lemma}{qcond}\label{lem:dist_fedavg_maml}
    Suppose that either (I) $\gamma < (L+\alpha)^{-1}$ and $\Theta = \Theta_{1:K}$ or (II) $\gamma < (KL+\alpha)^{-1}$ and $\Theta = \Theta_{K}$. Then for all $i \in \mP$, $\cond(Q_i(\alpha, \gamma, \Theta)) \leq \kappa_0\kappa(\alpha, \gamma, \Theta)^{-1}$.
\end{restatable}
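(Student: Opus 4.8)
The plan is to exploit that $Q_i(\alpha,\gamma,\Theta)$ is a matrix function of $A_i$ and hence diagonalizes in the same eigenbasis, reducing the statement to a one-dimensional claim about scalar functions of the eigenvalues $\lambda \in [\mu,L]$ of $A_i$. Writing $q(\lambda)$ for the scalar map such that $Q_i$ has eigenvalue $q(\lambda)$ wherever $A_i$ has eigenvalue $\lambda$, we have $q(\lambda) = \sum_{k=1}^K (1-\gamma(\lambda+\alpha))^{k-1}$ in case (I) and $q(\lambda) = (1-\gamma(\lambda+\alpha))^{K-1}$ in case (II). Correspondingly $Q_iA_i$ has eigenvalue $p(\lambda) := q(\lambda)\lambda$, which equals $\phi(\lambda,\alpha,\gamma,K)$ in case (I) and $\psi(\lambda,\alpha,\gamma,K)$ in case (II).

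First I would establish that $q$ is positive and (weakly) decreasing on $[\mu,L]$. This needs only $\gamma < (L+\alpha)^{-1}$, which guarantees $1-\gamma(\lambda+\alpha) \in (0,1)$ for every $\lambda \in [\mu,L]$; each summand in case (I) is then a nonincreasing function of $\lambda$, and the single term in case (II) is strictly decreasing. Since every eigenvalue of $A_i$ lies in $[\mu,L]$, monotonicity gives $\lambda_{\max}(Q_i) = q(\lambda_{\min}(A_i)) \le q(\mu)$ and $\lambda_{\min}(Q_i) = q(\lambda_{\max}(A_i)) \ge q(L)$, so $\cond(Q_i) \le q(\mu)/q(L)$, uniformly in $i$.

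Next I would rewrite this ratio in terms of $p$. Since $q(\lambda) = p(\lambda)/\lambda$,
\[
\frac{q(\mu)}{q(L)} = \frac{p(\mu)/\mu}{p(L)/L} = \frac{L}{\mu}\cdot\frac{p(\mu)}{p(L)} = \kappa_0\,\frac{p(\mu)}{p(L)}.
\]
I would then invoke Lemma \ref{lem:cond_fedavg} in case (I) and Lemma \ref{lem:cond_maml} in case (II), both of which assert exactly $\kappa(\alpha,\gamma,\Theta) \le p(L)/p(\mu)$ (in case (II) identifying the stated bound $\big(\tfrac{1-\gamma(L+\alpha)}{1-\gamma(\mu+\alpha)}\big)^{K-1}L/\mu$ with $\psi(L)/\psi(\mu)$). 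Rearranging gives $p(\mu)/p(L) \le \kappa^{-1}$, and chaining yields $\cond(Q_i) \le \kappa_0\,p(\mu)/p(L) \le \kappa_0\,\kappa^{-1}$, as desired.

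The main subtlety to flag is the role of the two different step-size conditions. Monotonicity of $q$ — the only place the hypothesis enters my direct argument — needs only $\gamma < (L+\alpha)^{-1}$, but in case (II) the appeal to Lemma \ref{lem:cond_maml} silently relies on the stronger $\gamma < (KL+\alpha)^{-1}$: this is precisely the threshold below which $\psi(\lambda) = (1-\gamma(\lambda+\alpha))^{K-1}\lambda$ is increasing on $[\mu,L]$, so that $\psi(L)$ and $\psi(\mu)$ are genuinely the extreme eigenvalues of $Q_iA_i$ and the bound $\kappa \le \psi(L)/\psi(\mu)$ is valid; without it the identification of $p(L),p(\mu)$ with the largest and smallest eigenvalues could fail. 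I would therefore be careful to cite each lemma together with its hypothesis so this point is not glossed over.
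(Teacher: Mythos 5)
Your proposal is correct and follows essentially the same route as the paper's proof: diagonalize $Q_i$ in the eigenbasis of $A_i$ (the paper's Lemma \ref{lem:eig_Q}), use monotonicity of the eigenvalue map in $\lambda$ to bound $\lambda_{\max}(Q_i) \leq \phi(\mu,\alpha,\gamma,K)/\mu$ and $\lambda_{\min}(Q_i) \geq \phi(L,\alpha,\gamma,K)/L$ (resp.\ with $\psi$), and then chain with Lemma \ref{lem:cond_fedavg} (resp.\ Lemma \ref{lem:cond_maml}). Your closing remark on why case (II) genuinely needs the stronger hypothesis $\gamma < (KL+\alpha)^{-1}$ makes explicit a point the paper compresses into ``an almost identical proof,'' but the underlying argument is the same.
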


Combining this with Lemma \ref{lem:opt_dist}, we get:
\begin{theorem}\label{thm:dist_fedavg_maml}
Under the same settings as Lemma \ref{lem:dist_fedavg_maml},
\begin{equation}\label{eq:dist_fedavg_maml}
    \|x^*(\alpha, \gamma, \Theta)-x^*\| \leq 8C\frac{\sqrt{\kappa_0} - \sqrt{\kappa(\alpha, \gamma, \Theta)}}{\sqrt{\kappa_0} + \sqrt{\kappa(\alpha, \gamma, \Theta)}}.
\end{equation}
\end{theorem}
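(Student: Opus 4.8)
The plan is to combine Lemma \ref{lem:opt_dist} with Lemma \ref{lem:dist_fedavg_maml}, reducing everything to a single scalar inequality. Writing $r = b/a$, the right-hand side of Lemma \ref{lem:opt_dist} reads $8C\,g(r)$ where $g(r) = \frac{\sqrt{r}-1}{\sqrt{r}+1} = 1 - \frac{2}{\sqrt{r}+1}$ is strictly increasing for $r \geq 1$, and the target \eqref{eq:dist_fedavg_maml} is $8C\,g(\kappa_0/\kappa(\alpha,\gamma,\Theta))$. Since $b \geq a > 0$ (positive definiteness of the $Q_i$ follows from Lemma \ref{lem:strongly_convex}, as $\gamma < (L+\alpha)^{-1}$ holds in both cases, the bound $\gamma < (KL+\alpha)^{-1}$ of case (II) being stronger) and $\kappa(\alpha,\gamma,\Theta) \leq \kappa_0$, both ratios are at least $1$; by monotonicity of $g$ it therefore suffices to prove $b/a \leq \kappa_0\,\kappa(\alpha,\gamma,\Theta)^{-1}$.

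First I would pass to a scalar description of the distortion matrices. As $A_i$ is symmetric, $Q_i(\alpha,\gamma,\Theta)$ is a polynomial in $A_i$, so the two are simultaneously diagonalizable and the eigenvalues of $Q_i$ are exactly $q(\lambda)$ for $\lambda \in \mathrm{spec}(A_i)$, where $q(\lambda) := \sum_{k} \theta_k (1-\gamma(\lambda+\alpha))^{k-1}$; concretely $q(\lambda) = \sum_{k=1}^{K}(1-\gamma(\lambda+\alpha))^{k-1}$ for $\Theta_{1:K}$ and $q(\lambda) = (1-\gamma(\lambda+\alpha))^{K-1}$ for $\Theta_K$. Under the stated hypotheses the base $1-\gamma(\lambda+\alpha)$ lies in $(0,1)$ for all $\lambda \in [\mu, L]$, so $q$ is positive and non-increasing on $[\mu, L]$. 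Because Assumption \ref{assm1} forces $\mathrm{spec}(A_i) \subseteq [\mu, L]$ for every client, this monotonicity gives, uniformly in $i$, $\lambda_{\max}(Q_i) = q(\lambda_{\min}(A_i)) \leq q(\mu)$ and $\lambda_{\min}(Q_i) = q(\lambda_{\max}(A_i)) \geq q(L)$. Taking the maximum and minimum over $i$ yields $b \leq q(\mu)$ and $a \geq q(L)$, hence $b/a \leq q(\mu)/q(L)$.

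It remains to show $q(\mu)/q(L) \leq \kappa_0\,\kappa(\alpha,\gamma,\Theta)^{-1}$, which is essentially the content of Lemma \ref{lem:dist_fedavg_maml}. For $\Theta_{1:K}$ I would write $q(\mu)/q(L) = \frac{\phi(\mu,\alpha,\gamma,K)/\mu}{\phi(L,\alpha,\gamma,K)/L} = \kappa_0\,\frac{\phi(\mu,\alpha,\gamma,K)}{\phi(L,\alpha,\gamma,K)}$ and then invoke $\kappa(\alpha,\gamma,\Theta_{1:K}) \leq \phi(L,\alpha,\gamma,K)/\phi(\mu,\alpha,\gamma,K)$ from Lemma \ref{lem:cond_fedavg} to get $q(\mu)/q(L) \leq \kappa_0\,\kappa(\alpha,\gamma,\Theta)^{-1}$. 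The case $\Theta = \Theta_K$ is identical after replacing $\phi$ by $\psi$ and citing Lemma \ref{lem:cond_maml}. Chaining $b/a \leq q(\mu)/q(L) \leq \kappa_0\,\kappa(\alpha,\gamma,\Theta)^{-1}$ with the monotonicity of $g$ and Lemma \ref{lem:opt_dist} then delivers \eqref{eq:dist_fedavg_maml}.

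The step I expect to be the main obstacle is the cross-client reduction in the second paragraph. The quantities $b$ and $a$ in Lemma \ref{lem:opt_dist} are extremes taken over the \emph{entire} client population, so a priori the ratio $b/a$ can far exceed any single client's condition number $\cond(Q_i)$ --- pooling clients whose eigenvalue ranges are disjoint inflates the ratio, and the per-client bound of Lemma \ref{lem:dist_fedavg_maml} does not by itself control $b/a$. What rescues the argument, and must be stated with care, is that every $Q_i$ is the \emph{same} fixed scalar profile $q$ applied to a matrix whose spectrum lives in the \emph{common} interval $[\mu, L]$; this is precisely what collapses the population-level ratio $b/a$ back to the worst-case single-client value $q(\mu)/q(L)$ that Lemma \ref{lem:dist_fedavg_maml} bounds.
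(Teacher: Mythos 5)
Your proposal is correct and takes essentially the same route as the paper: Theorem \ref{thm:dist_fedavg_maml} is obtained by feeding the condition-number control of Lemma \ref{lem:dist_fedavg_maml} (itself resting on Lemmas \ref{lem:cond_fedavg} and \ref{lem:cond_maml} and the spectral mapping of $Q_i$) into Lemma \ref{lem:opt_dist}. Your flagged "main obstacle" is a genuine subtlety the paper's one-line derivation glosses over --- the per-client bound $\cond(Q_i(\alpha,\gamma,\Theta)) \leq \kappa_0\kappa(\alpha,\gamma,\Theta)^{-1}$ stated in Lemma \ref{lem:dist_fedavg_maml} does not by itself control the cross-client ratio $b/a$ --- and the uniform bounds $\lambda_{\max}(Q_i) \leq \phi(\mu,\alpha,\gamma,K)/\mu$ and $\lambda_{\min}(Q_i) \geq \phi(L,\alpha,\gamma,K)/L$ you supply to close it are exactly the ones appearing inside the paper's proof of that lemma, so your write-up is the same argument with the implicit step made explicit.
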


Applying Theorem \ref{thm:conv_rates}, we bound the convergence of \localupdate to the empirical minimizer $x^*$.

\begin{corollary}\label{cor:conv_rates}
    Under the same settings as Theorem \ref{thm:conv_rates}, for some hyperparameter setting of \ServerOpt, the iterates $\{x_t\}_{t \geq 1}$ of \localupdate satisfy
\begin{align*}
    & \norm{x_T - x^*} \leq \rho^T\norm{x_0-x^*(\alpha, \gamma, \Theta)} + 8C\frac{\sqrt{\kappa_0} - \sqrt{\kappa}}{\sqrt{\kappa_0} + \sqrt{\kappa}}
\end{align*}
    where $\kappa = \kappa(\alpha, \gamma, \Theta)$ is given in \eqref{eq:cond_fedavg} and \eqref{eq:cond_maml}, and $\rho$ is given in Table \ref{table:conv_local_update}.
\end{corollary}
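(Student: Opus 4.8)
The plan is to observe that this corollary is not a new estimate but a composition of two results already in hand, stitched together by the triangle inequality. Writing $\hat{x} := x^*(\alpha,\gamma,\Theta)$ for the surrogate minimizer, I would begin from
\[
\norm{x_T - x^*} \;\leq\; \norm{x_T - \hat{x}} \;+\; \norm{\hat{x} - x^*},
\]
which splits the error into an \emph{optimization} term (how far the iterates are from the point \localupdate actually converges to) and a \emph{bias} term (how far that point is from the empirical minimizer).

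For the first term I would invoke Theorem \ref{thm:conv_rates}: since we are in its setting ($\gamma < (L+\alpha)^{-1}$, \ServerOpt gradient descent with Nesterov, heavy-ball, or no momentum, and \localupdate deterministic so that $q_t = \nabla\tilde f(x_t,\alpha,\gamma,\Theta)$ exactly), there is a hyperparameter setting of \ServerOpt under which $\norm{x_T - \hat{x}} \leq \rho^T \norm{x_0 - \hat{x}}$ with $\rho$ as in Table \ref{table:conv_local_update}. For the second term I would invoke Theorem \ref{thm:dist_fedavg_maml}, which—under the $\Theta = \Theta_{1:K}$ or $\Theta = \Theta_K$ hypotheses encoded by using the $\kappa$ from \eqref{eq:cond_fedavg} and \eqref{eq:cond_maml}—gives $\norm{\hat{x} - x^*} \leq 8C\,(\sqrt{\kappa_0}-\sqrt{\kappa})/(\sqrt{\kappa_0}+\sqrt{\kappa})$. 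Substituting both bounds into the triangle inequality yields the stated result.

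The only point requiring care—and the closest thing to an obstacle—is verifying that the two theorems can be applied \emph{simultaneously} with a single choice of hyperparameters, rather than each under its own regime. This is where I would be explicit: the client-side parameters $\alpha,\gamma,\Theta$ alone determine the surrogate loss $\tilde f$, hence both $\kappa(\alpha,\gamma,\Theta)$ and the minimizer $\hat{x}$, and therefore the entire bias term; the free hyperparameters of \ServerOpt appear only through $\rho$ and leave $\tilde f$ (and thus $\hat{x}$) untouched, since by Theorem \ref{thm:sgd_objective} \ServerOpt merely follows the true gradients of $\tilde f$. Consequently the two results constrain disjoint sets of parameters, the rate-achieving \ServerOpt tuning from Theorem \ref{thm:conv_rates} does not perturb the distance bound, and the standing conditions $\gamma < (L+\alpha)^{-1}$ (or the stricter $\gamma < (KL+\alpha)^{-1}$ in the $\Theta_K$ case) carry over verbatim. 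With this compatibility noted, the estimate follows immediately and no genuine computation is needed.
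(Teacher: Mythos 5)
Your proof is correct and matches the paper's own (implicit) argument exactly: the corollary is obtained by the triangle inequality $\norm{x_T - x^*} \leq \norm{x_T - x^*(\alpha,\gamma,\Theta)} + \norm{x^*(\alpha,\gamma,\Theta) - x^*}$, bounding the first term via Theorem \ref{thm:conv_rates} and the second via Theorem \ref{thm:dist_fedavg_maml}. Your additional remark that the \ServerOpt hyperparameters leave the surrogate loss and its minimizer untouched—so the two bounds are compatible under one parameter choice—is a sound and worthwhile clarification of the same argument.
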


Here we see the benefit of local update methods in \emph{communication-limited} settings. When $T$ is small and $\norm{x_0-x^*}$ is large, we can achieve better convergence by decreasing $\rho$ and leaving the second term fixed. In such settings, \fedavg can arrive at a neighborhood of a critical point in fewer communication rounds than mini-batch \sgd, but may not ever actually reach the critical point. If $\norm{x_0-x^*}$ is small, we may be better served by using mini-batch \sgd instead.

\section{Comparing Local Update Methods}\label{sec:compare}

Comparing optimization algorithms is a fundamental theoretical effort. Many past works compare local update methods based on their convergence to critical points of the empirical loss. By Theorem \ref{thm:sgd_objective}, \localupdate is only guaranteed to converge to critical points of $f$ if $\gamma = 0$ or $K(\Theta) = 1$. Thus, existing analyses ignore many useful cases of \localupdate.

To remedy this, we compare local update algorithms on the basis of both convergence and accuracy. Instead of fixing $\gamma$ and $\Theta$, we analyze \localupdate as $\gamma$ and $K(\Theta)$ vary. To do so, we use our theory from Section \ref{sec:convergence_and_accuracy}. Given $\alpha, \gamma$ and $\Theta$, we define the \textbf{convergence rate} $\rho(\alpha, \gamma, \Theta)$ as the infimum over all $\rho$ such that for all $T \geq 1$, \eqref{eq:conv_rate} holds. Values of $\rho$ when \ServerOpt is gradient descent are given in Table \ref{table:conv_local_update}. For $\Theta = \Theta_{1:K}$ or $\Theta_K$, we define the \textbf{suboptimality} $\Delta(\alpha, \gamma, \Theta)$ by
\begin{equation}\label{eq:Delta}
\Delta(\alpha, \gamma, \Theta) := \dfrac{\sqrt{\kappa_0} - \sqrt{\kappa(\alpha, \gamma, \Theta)}}{\sqrt{\kappa_0} +\sqrt{\kappa(\alpha, \gamma, \Theta}}.
\end{equation}
By Theorem \ref{thm:dist_fedavg_maml}, this captures the asymptotic worst-case suboptimality of \localupdate.

Note that $\rho, \Delta \in [0, 1]$. Therefore, fixing $\mu, L$ and \ServerOpt, we obtain a \textbf{Pareto frontier} in $[0, 1]^2$ by plotting $(\rho, \Delta)$ for various $\gamma$ and $K(\Theta)$. This curve represents the worst-case convergence/accuracy trade-off of a class of local update methods. We generally want the curve to be as close to $(0, 0)$ as possible.

\begin{figure}
\centering
    \begin{subfigure}{.43\linewidth}
    \centering
    \includegraphics[width=\linewidth]{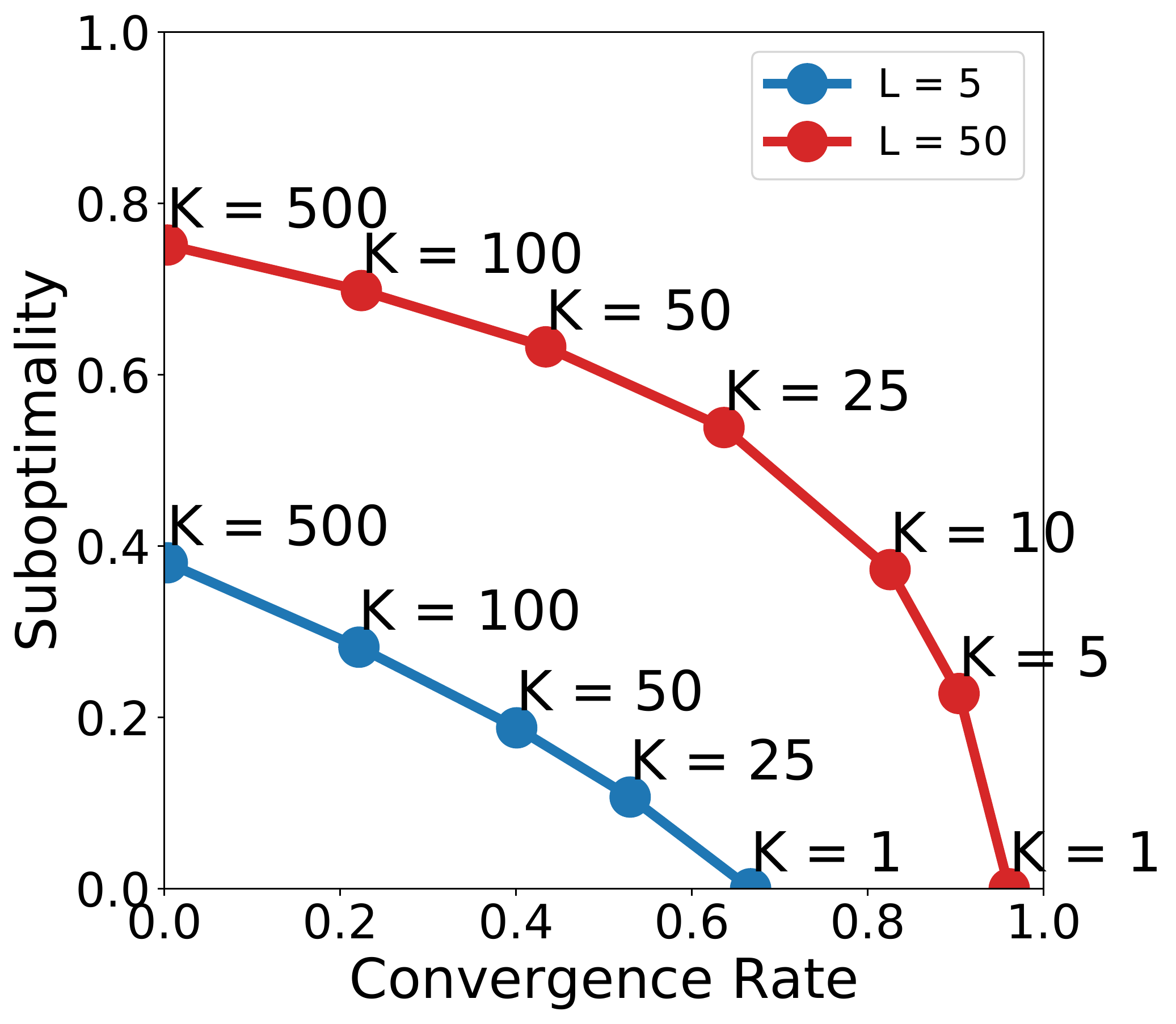}
    % \caption{Varying $\gamma$.}
    \end{subfigure}
    \begin{subfigure}{.47\linewidth}
    \centering
    \includegraphics[width=\linewidth]{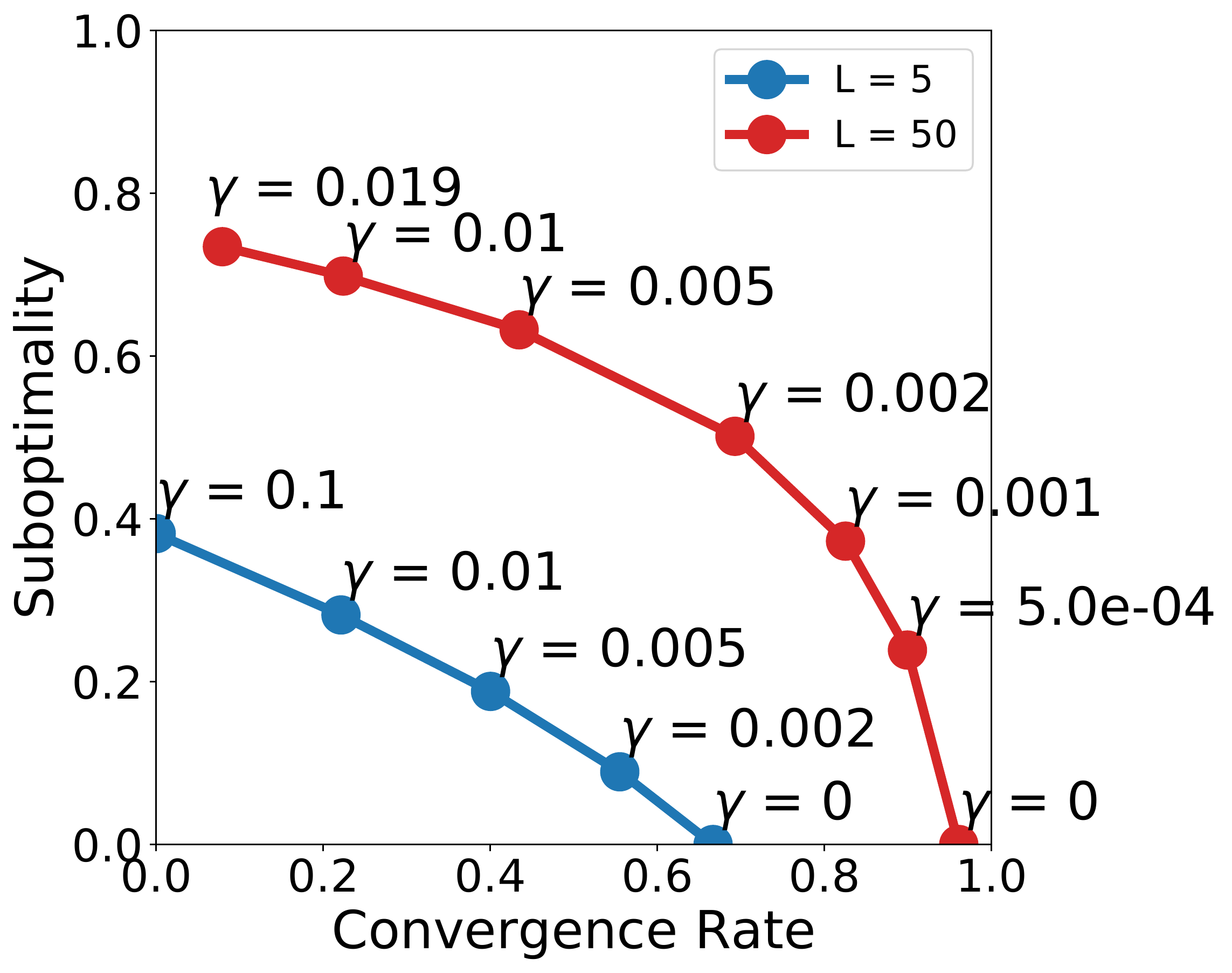}
    % \caption{Varying $K$.}
    \end{subfigure}
\caption{Pareto frontiers for $\mu = 1, \alpha = 0, \Theta = \Theta_{1:K}$ and $L \in \{5, 50\}$. We fix $\gamma = 0.01$ and vary $K$ (left), and fix $K = 100$ and vary $\gamma$ (right).}
\label{fig:basic_fedavg_compare}
\end{figure}

For example, in Figure \ref{fig:basic_fedavg_compare} we let \ServerOpt be gradient descent and set $\alpha = 0, \Theta = \Theta_{1:K}$. We plot $(\rho, \Delta)$ as we vary $K$ and fix $\gamma$, and vice-versa. When $L = 5$, we obtain nearly identical curves. The curves for $L = 50$ are similar, except that when we fix $K$ and vary $\gamma$, we do not reach $\rho \approx 0$. While $\gamma$ and $K$ have similar impacts on convergence-accuracy trade-offs, varying $K$ leads a larger set of attainable $(\rho, \Delta)$. Formally, this is because in \eqref{eq:cond_fedavg}, $\lim_{\gamma \to L^{-1}}(\kappa) \neq 0$. Intuitively, $K \to \infty$ recovers one-shot averaging while $\gamma \to L^{-1}$ does not. Notably, the convergence-accuracy trade-off becomes closer to a linear trade-off as $L/\mu$ decreases.

The Pareto frontiers contain more information than just the convergence rate to a critical point (the curve's intersection with the $x$-axis). This information is useful in communication-limited regimes, where we wish to minimize the number of rounds needed to attain a given accuracy. The curves also help visualize various hyperparameter settings of an algorithms simultaneously. To illustrate this, we use the Pareto frontiers to derive novel findings regarding server momentum, proximal client updates, and qualitative differences between \fedavg and \maml. The results are all given below. For more results, see Appendix \ref{appendix:extra_comparisons}.

\begin{figure}
\centering
    \begin{subfigure}{.95\linewidth}
    \centering
    \includegraphics[width=\linewidth]{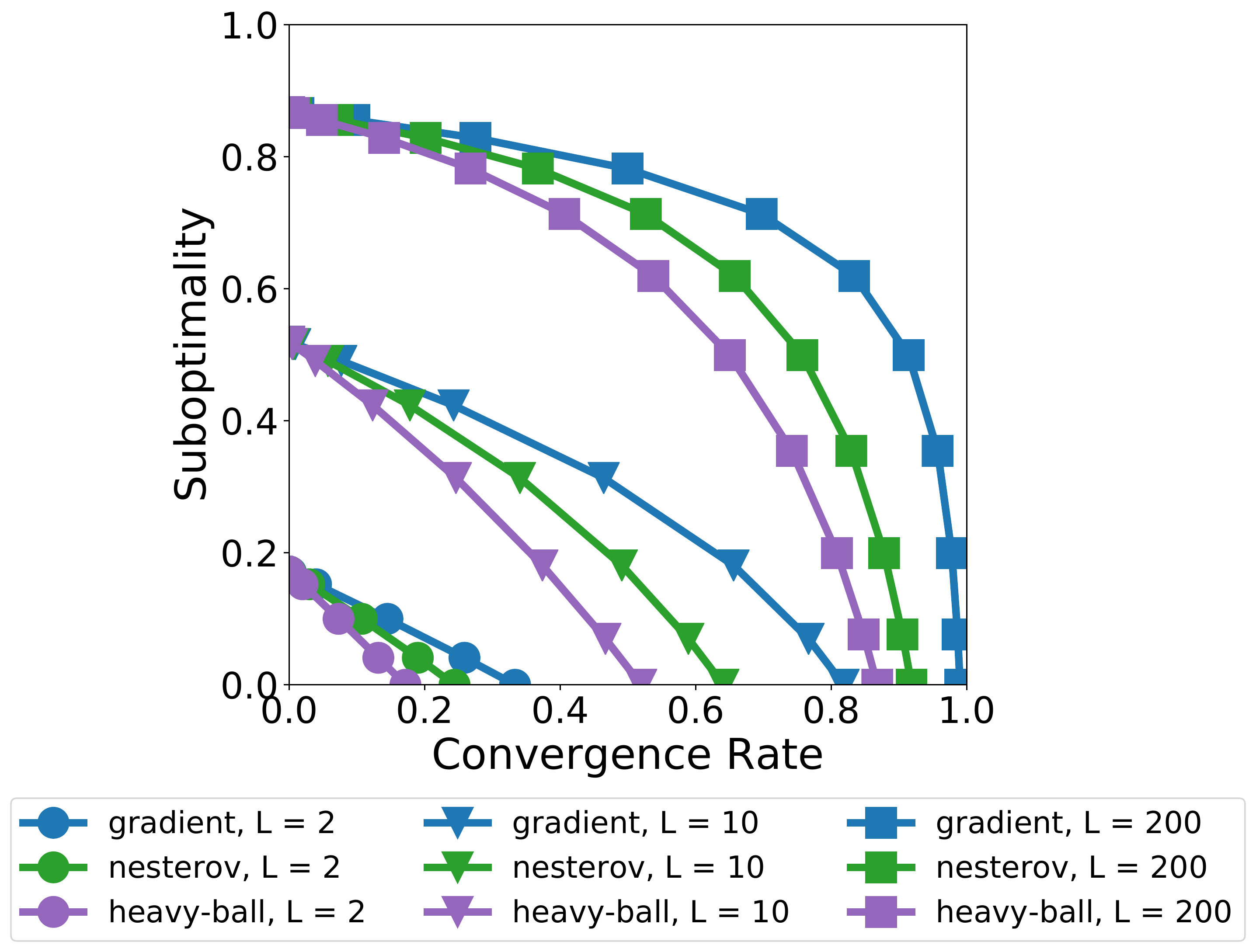}
    \end{subfigure}
\caption{Pareto frontiers for $\mu=1$, varying $L$, and where \ServerOpt is gradient descent with different types of momentum. We let $\alpha = 0, \gamma = (2L)^{-1}$ and $\Theta = \Theta_{1:K}$ for varying $K \in [1, 10^6]$.}
\label{fig:compare_server_opt}
\end{figure}

\paragraph{Impact of Server Momentum} As shown empirically by \citet{hsu2019measuring} and as reflected in Table \ref{table:conv_local_update}, server momentum can improve convergence. To understand this, in Figure \ref{fig:compare_server_opt} we compare Pareto frontiers where $\Theta = \Theta_{1:K}$ and \ServerOpt is gradient descent with various types of momentum (Nesterov, heavy-ball, or no momentum). We see a strict ordering of the server optimization methods. Heavy-ball momentum is better than Nesterov momentum, which is better than no momentum.

One important finding is that the benefit of momentum is more pronounced as $L/\mu$ increases. On the other hand, the benefit of server momentum diminishes for sufficiently large $K$: In Figure \ref{fig:compare_fedavg_maml}, the various types of momentum lead to similar suboptimality when the convergence rate is close to 0. Intuitively, as $K \to \infty$, we recover one-shot averaging, which converges in a single communication round with or without momentum.

Another intriguing observation: The Pareto frontiers for heavy-ball momentum appear to be symmetric about the line $\rho = \Delta$. We conjecture this is true for any $\mu, L$. While we believe that this may be provable by careful algebraic manipulation of our results above, ideally a proof would explain the root causes of this symmetry. Thus, we leave a proof to future work.

\begin{figure}[ht]
\centering
    \begin{subfigure}{.8\linewidth}
    \centering
    \includegraphics[width=\linewidth]{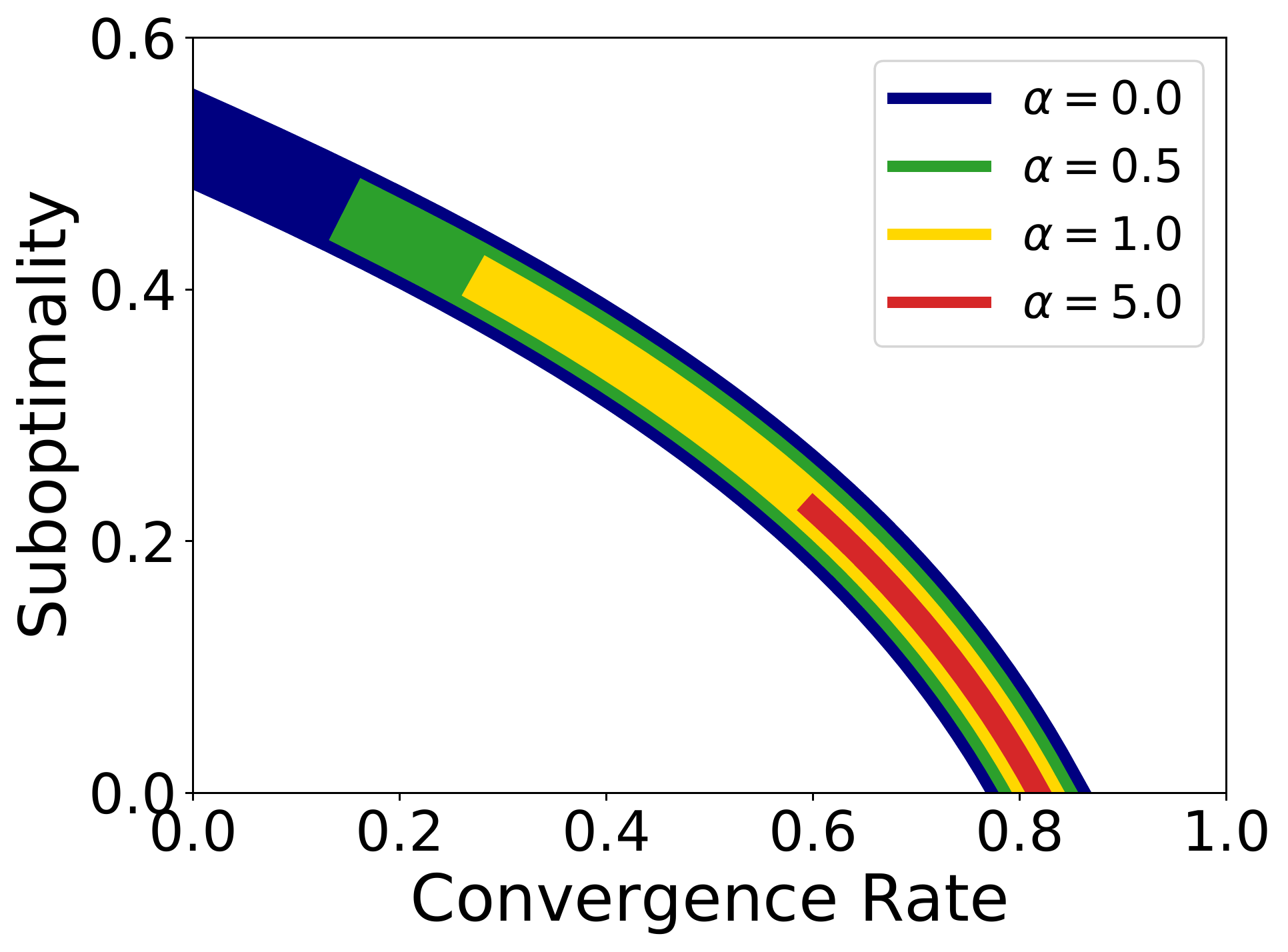}
    \end{subfigure}
\caption{Pareto frontiers for $\mu=1, L = 10, \Theta = \Theta_{1:K}$. We set $\gamma = \nicefrac{1}{2}(L+\alpha)^{-1}$, \ServerOpt as gradient descent, vary $\alpha$ and $K$ over $\{0, 0.5, 1.0, 5.0\}$ and $[1, 10^6]$.}
\label{fig:compare_fedprox}
\end{figure}

\paragraph{Proximal Client Updates}

So far we have only considered $\alpha = 0$. One might posit that as $\alpha$ varies, the Pareto frontier moves closer to the origin. This appears to not be the case. In all settings we examined, changing $\alpha$ did not bring the Pareto frontier closer to $0$. Instead, the frontier for $\alpha > 0$ was simply a subset of the frontier for $\alpha = 0$.

To illustrate this, we plot Pareto frontiers for varying $\alpha$ in Figure \ref{fig:compare_fedprox}. As $\alpha$ increases, the frontier becomes a smaller subset of the frontier for $\alpha = 0$. Thus, proximal client updates may not enable faster convergence. Rather, their benefit may be in guarding against setting $\gamma$ too small or $K$ too large. Figure \ref{fig:compare_fedprox} shows that \fedavg can always attain the same $(\rho, \Delta)$ as \fedprox, but it may require different hyperparameters. The reverse is not true, as \fedprox cannot recover one-shot averaging. Our findings are consistent with work by \citet{wang2020tackling}, who show that \fedprox can reduce the ``objective inconsistency'' of \fedavg, at the expense of increasing convergence time.

\begin{figure}
\centering
    \begin{subfigure}{.8\linewidth}
    \centering
    \includegraphics[width=\linewidth]{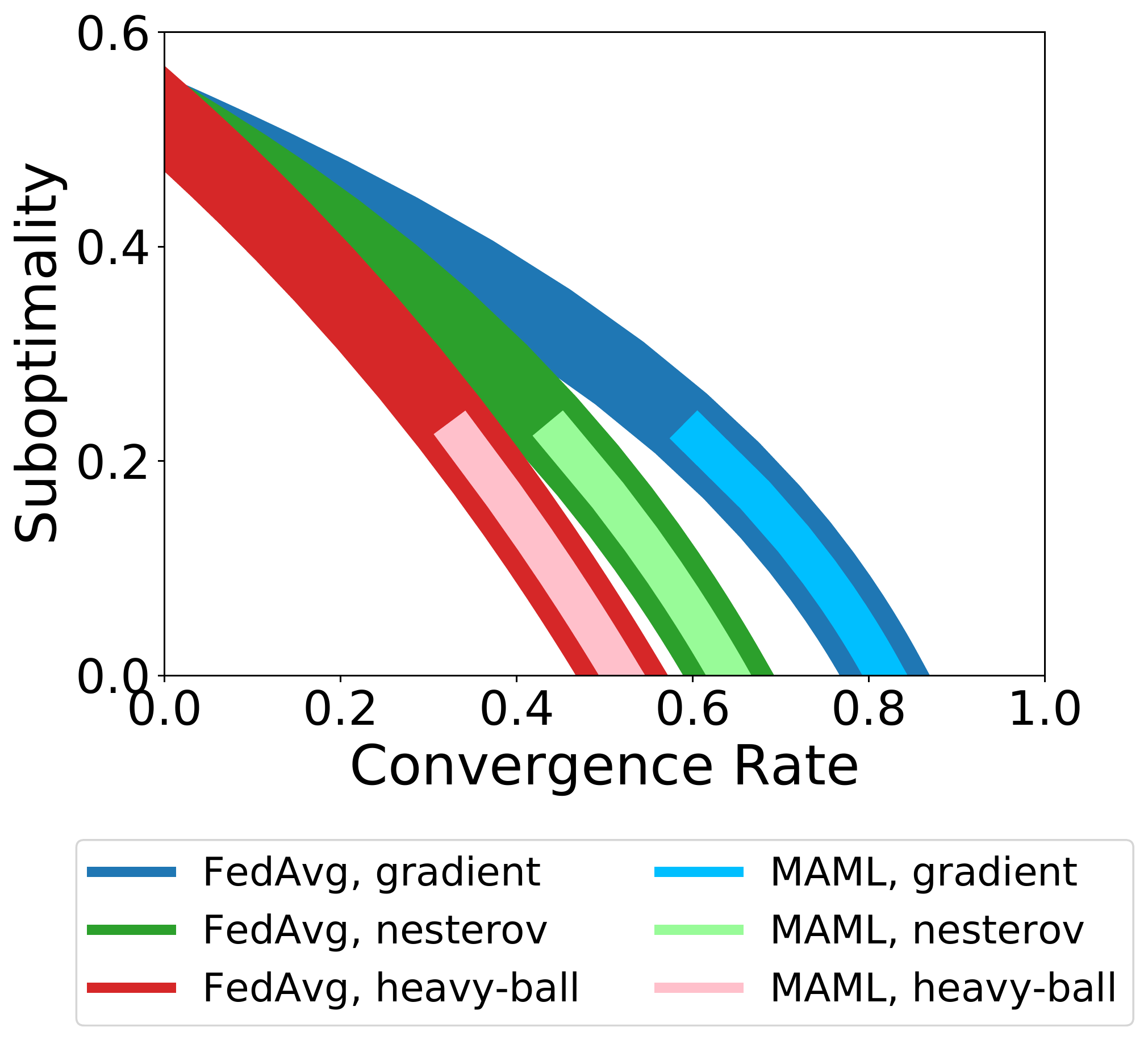}
    \end{subfigure}
\caption{Pareto frontiers for $\mu=1$, $L = 10$, $\alpha = 0$, $\Theta = \Theta_{1:K}$ (\fedavg) and $\Theta_{K}$ (\maml) for varying $K \in [0, 10^6]$. \ServerOpt is gradient descent, with Nesterov, heavy-ball, or no momentum. For $\Theta_{1:K}$, we use $\gamma = 0.001$, and for $\Theta_K$, we use $\gamma = (2LK)^{-1}$.}
\label{fig:compare_fedavg_maml}
\end{figure}

\paragraph{Comparing \maml to \fedavg} We now turn our attention to comparing \fedavg-style algorithms ($\Theta = \Theta_{1:K}$) to \maml-style algorithms ($\Theta = \Theta_K)$. We plot Pareto frontiers for the $\rho, \Delta$ guaranteed by Theorems \ref{thm:conv_rates} and \ref{thm:dist_fedavg_maml}. The results are in Figure \ref{fig:compare_fedavg_maml}.

For each \ServerOpt, the \maml frontier is a subset of the \fedavg frontier.
Recall that in Theorem \ref{thm:conv_rates}, we require $\gamma < (L+\alpha)^{-1}$ for \fedavg, but $\gamma < (KL+\alpha)^{-1}$ for \maml.
In Figure \ref{fig:compare_fedavg_maml} this causes the frontier for $\Theta_{K}$ to be more restrictive than for $\Theta_{1:K}$. However, it is still notable that these two fundamentally different methods, attain the same frontier when $\rho$ is large.

By Lemma \ref{lem:strongly_convex}, $\rho$ and $\Delta$ are still well-defined for $\Theta_K$ when $(KL+\alpha)^{-1} \leq \gamma < (L+\alpha)^{-1}$.
To understand what happens in this regime, we generate random symmetric $A \in \R^{d\times d}$ satisfying $\cond(A) = L/\mu$ and compute $Q(\alpha, \gamma, \Theta_K)$ as in \eqref{eq:Q_matrix}. We then compute $\kappa$ via \eqref{eq:kappa_QA}, and plug this into Table \ref{table:conv_local_update} and \eqref{eq:Delta} to get $\rho, \Delta$. This gives us a simulated Pareto frontier for $\Theta_K$, which we compare to $\Theta_{1:K}$ in Figure \ref{fig:compare_simulated_fedavg_maml}. For details and additional experiments, see Appendix \ref{appendix:simulate_maml}.

\begin{figure}
\centering
    \begin{subfigure}{.7\linewidth}
    \centering
    \includegraphics[width=\linewidth]{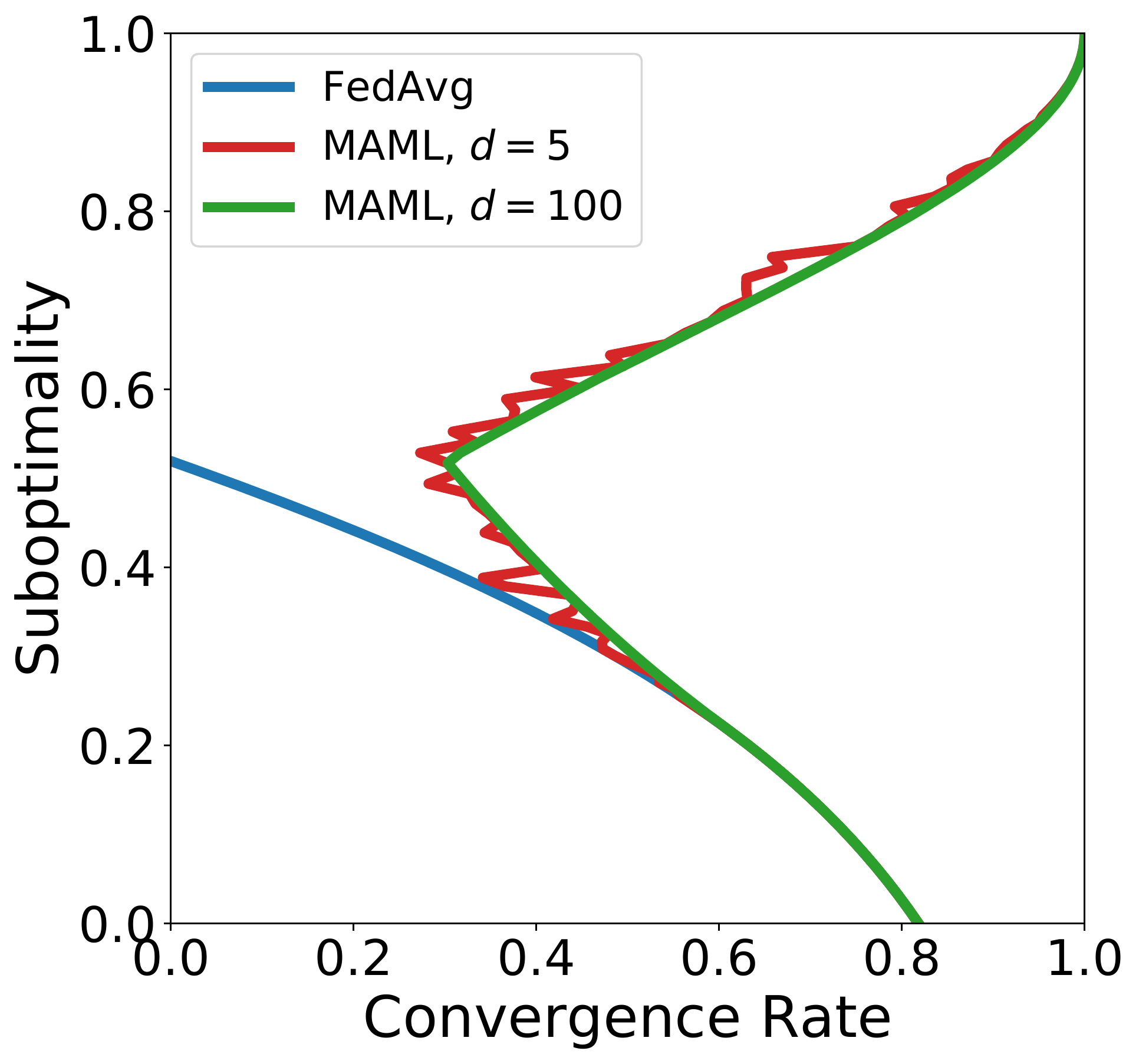}
    \end{subfigure}
\caption{Simulated Pareto frontiers for $\mu=1$, $L = 10$, $\alpha = 0$, $\gamma = 0.001$, $\Theta = \Theta_{K}$ (\maml). \ServerOpt is gradient descent. We randomly sample $A \in \R^{d\times d}$ with $\mu \preceq A \preceq L$, and compute $(\rho, \Delta)$ for various $K \in [1, 10^6]$ and $d \in \{5, 100\}$. We compare to the Pareto frontier for $\Theta_{1:K}$ (\fedavg).}
\label{fig:compare_simulated_fedavg_maml}
\end{figure}

The Pareto frontiers are identical for small $K$ (mirroring Figure \ref{fig:compare_fedavg_maml}), but diverge when $\gamma \geq (KL+\alpha)^{-1}$. The frontier for \maml then moves further from 0. Intuitively, \fedavg tries to learn a global model, while \maml tries to learn a model that adapts quickly to new tasks~\citep{finn2017model}; \maml need not minimize $(\rho, \Delta)$.
The \maml frontier is noisy for $d = 5$ (as $\rho$, $\Delta$ depend on random eigenvalues of $A$), but stabilizes for $d = 100$. While we posit that this reflects a semi-circle law for eigenvalues of random matrices \citep{alon2002concentration}, we leave an analysis to future work.

One final observation that highlights the similarities and differences of \fedavg- and \maml-style methods: In Figure \ref{fig:compare_fedavg_maml}, the curve for \maml when $d = 100$ has a clear cusp. This seems to occur at the same suboptimality (ie. $y$-value) as the intersection of the \fedavg curve with the $y$-axis. In other words, the behavior of \maml diverges substantially from \fedavg, but only after it reaches the same suboptimality as \fedavg for $K \to \infty$ (which corresponds to one-shot averaging). We are unsure why the suboptimality of one-shot averaging corresponds to a cuspidal operating point of \maml, but this observation highlights significant nuance in the behavior of these methods.

\section{Limitations and Discussion}

Our convergence-accuracy framework and the resulting Pareto frontiers can be useful tools in understanding how algorithmic choices impact local update methods. The obvious limitation is that they only apply to quadratic models. While this is restrictive, we show empirically in Appendix \ref{appendix:additional_experiments} that even for non-convex functions, the client learning rate governs a convergence-accuracy trade-off for \fedavg.

Our framework may also be useful in identifying important phenomena underlying \localupdate, even in non-quadratic settings. To demonstrate this, we show that many of the observations in Section \ref{sec:compare} hold in non-convex settings. We train a CNN on the FEMNIST dataset~\citep{caldas2018leaf} using \localupdate where $\Theta = \Theta_{1:50}$. We tune client and server learning rates. See Appendix \ref{appendix:experiment_setup} for full details. In Figure \ref{fig:emnist_results}, we illustrate how server momentum and $\alpha$ change convergence. Our results match the Pareto frontiers in Figures \ref{fig:compare_server_opt} and \ref{fig:compare_fedprox}: Server momentum improves convergence, while $\alpha$ has little to no effect, provided we tune learning rates.

\begin{figure}[ht]
\centering
    \begin{subfigure}{.49\linewidth}
    \centering
    \includegraphics[width=\linewidth]{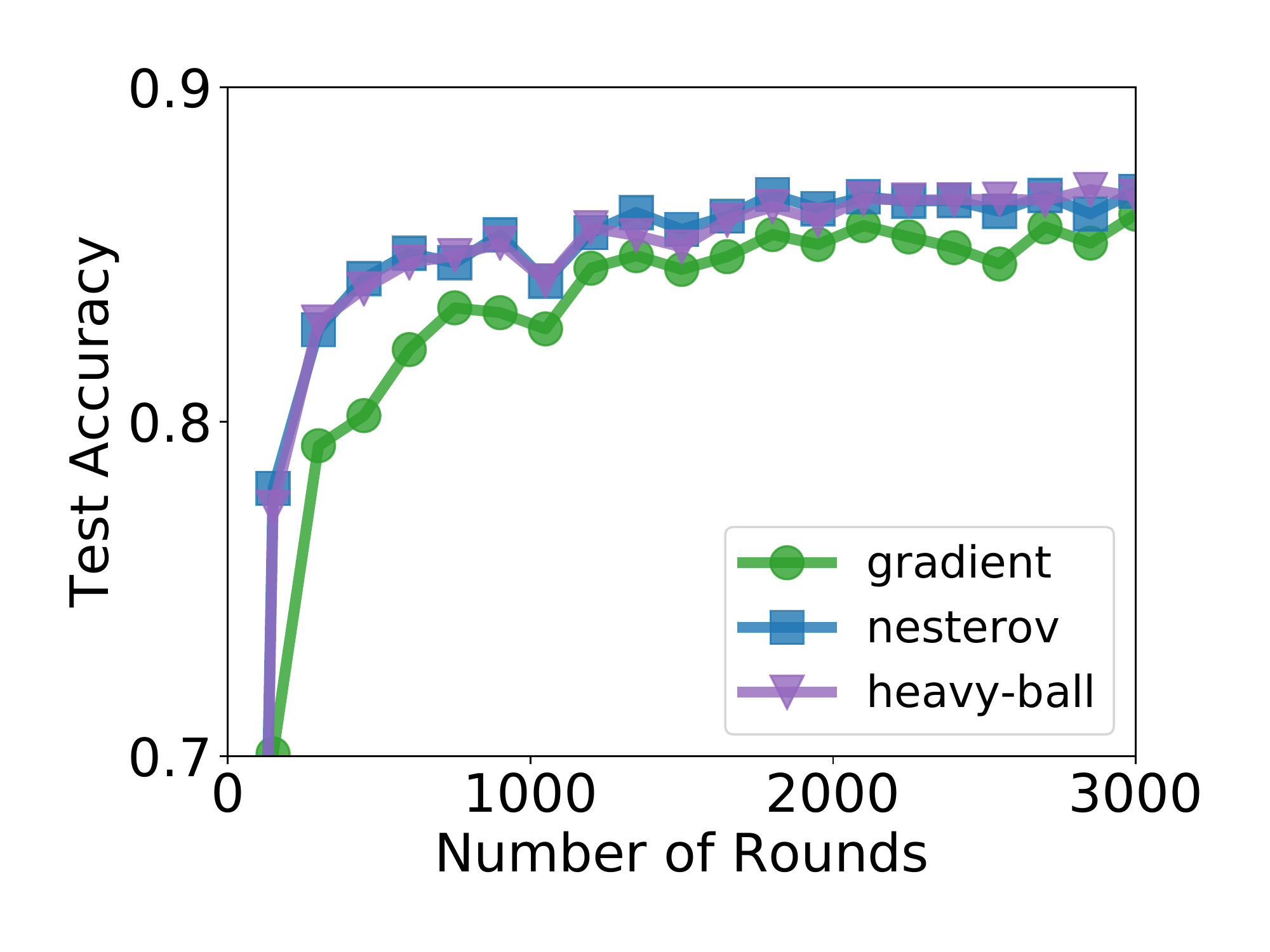}
    \end{subfigure}
    \begin{subfigure}{.49\linewidth}
    \centering
    \includegraphics[width=\linewidth]{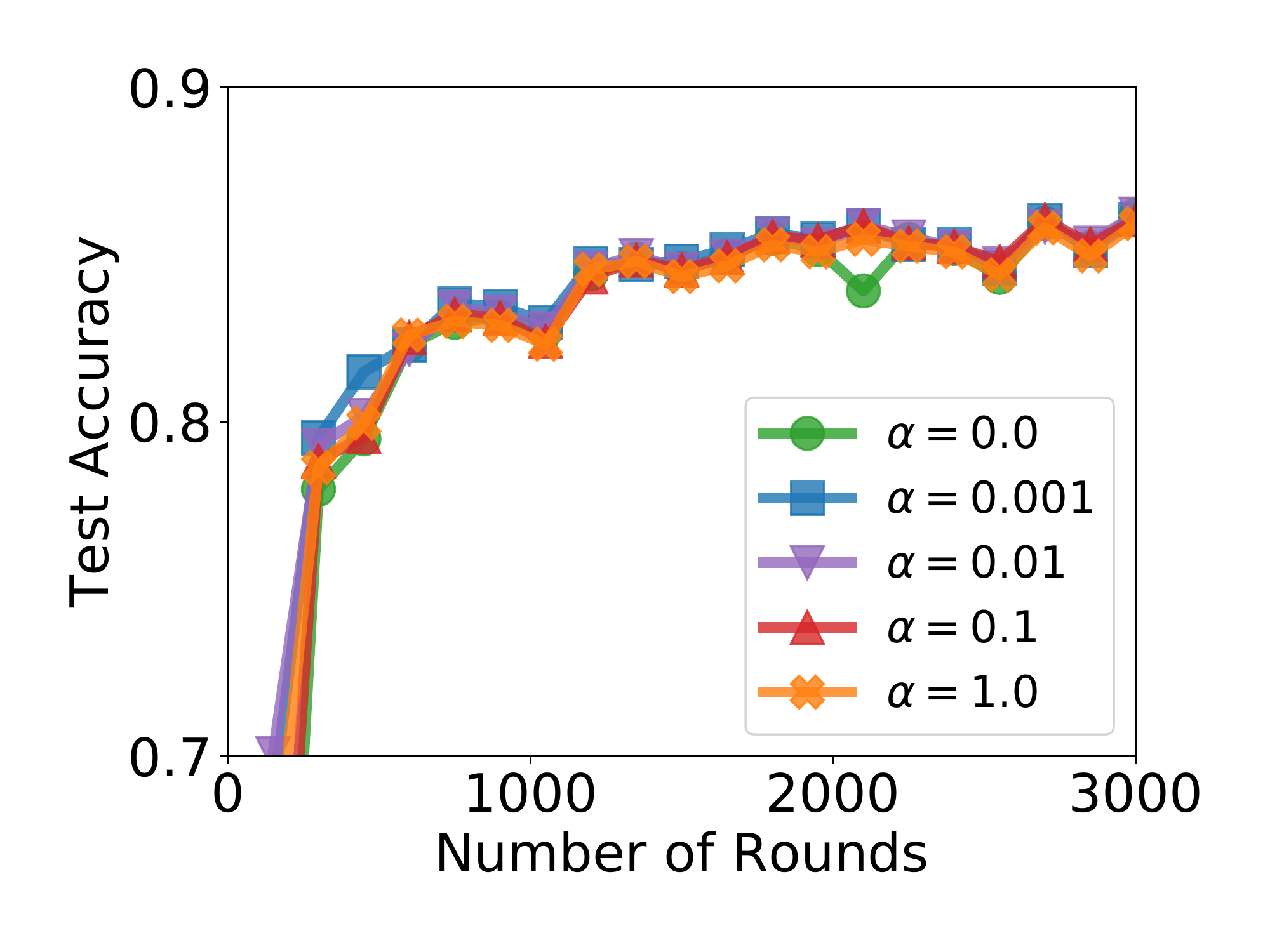}
    \end{subfigure}
\caption{Test accuracy of \localupdate with $\Theta = \Theta_{1:50}$ on FEMNIST with tuned learning rates. (Left) Varying types of server momentum, $\alpha = 0$. (Right) No momentum and varying $\alpha$.}
\label{fig:emnist_results}
\end{figure}

This brief example illustrates that our framework can identify crucial facets of local update methods. While our framework may not capture all relevant details of such methods, we believe it greatly simplifies their analysis, comparison, and design. In the future, we hope to extend this framework to more general loss functions. Other important extensions include stochastic settings with partial client participation, as well as trade-offs between convergence and post-adaptation accuracy of local update methods.

\bibliographystyle{plainnat}
\bibliography{main}

\appendix

\onecolumn

\section{Relations between \fedavg, \fedprox, and \localupdate}\label{appendix:special_cases}

We focus on the following (simplified) version of \fedavg, otherwise known as Local \sgd~\citep{zinkevich2010parallelized, stich2018local}: At each iteration $t$, we sample some set of clients $I_t$ of size $M$ from the client population $\mI$. Each client $i \in I_t$ receives the server's model $x_t$, and applies $K$ steps of mini-batch \sgd to its local model, resulting in an updated local model $x_t^i$. The server receives these models from the sampled clients, and updates its model via
\[
x_{t+1} = \dfrac{1}{M}\sum_{i \in I_t} x_t^i.
\]
Fix $t$, and let $g_k^i$ denote the $k$-th mini-batch gradient of client $i$. Suppose we use a learning rate of $\gamma$ on each client when performing mini-batch \sgd. Then we have
\begin{align*}
    x_{t+1} &= \dfrac{1}{M}\sum_{i \in I_t} x_t^i\\
    &= x_t - \dfrac{1}{M}\sum_{i \in I_t}\left(x_t - x_t^i\right)\\
    &= x_t - \dfrac{1}{M}\sum_{i \in I_t}x_t - \left(x_t - \gamma \sum_{k = 1}^K g_k^i\right)\\
    &= x_t - \gamma\dfrac{1}{M}\sum_{i \in I_t} \sum_{k=1}^K g_k^i.
\end{align*}
A similar analysis holds for \fedprox, but with the usage of a proximal term with parameter $\alpha > 0$. In both cases, this is exactly \localupdate (see Algorithms \ref{alg:outerloop} and \ref{alg:innerloop}) with $\Theta = \Theta_{1:K}$, $\eta = \gamma$, and where \ServerOpt is gradient descent. However, by instead using a server learning rate of $\eta$ that is allowed to vary independently of $\gamma$ in \localupdate, we can obtain markedly different convergence behavior. We note that a form of this decoupling has previously been explored by \citet{karimireddy2019scaffold} and \citet{reddi2020adaptive}. However, these versions instead perform averaging on the so-called ``model delta'', in which the server model is updated via
\begin{align*}
x_{t+1} &= x_t - \dfrac{\eta}{M}\sum_{i \in I_t} (x_t - x_t^i)\\
&= x_t - \dfrac{\eta}{M}\sum_{i \in I_t}\left(x_t - \left(x_t - \gamma \sum_{k=1}^K g_k^i\right)\right)\\
&= x_t - \dfrac{\eta\gamma}{M}\sum_{i \in I_t} \sum_{k=1}^K g_k^i.
\end{align*}
Thus, while this does decouple $\eta$ and $\gamma$ to some degree, it does not fully do so. In particular, if we set $\gamma = 0$, then $x_{t+1} = x_t$, in which case we can make no progress overall. This is particularly important because, as implied by Theorem \ref{thm:sgd_objective}, for $K(\Theta) > 1$ we can only guarantee that the surrogate loss has the same critical points as the true loss by setting $\gamma = 0$. More generally, we see that the effective learning rate used in the model-delta approach is the product $\eta\gamma$. This can result in conflations between the effect of changing the server learning rate $\gamma$ and changing  the client learning rate $\eta$. By disentangling these, we can better understand differences in the impact of these parameters on the underlying optimization dynamics.

\section{Omitted Proofs}

\paragraph{Notation} For a matrix $A \in \R^{m \times n}$ we let $A^{\intercal} \in \R^{n \times m}$ denote its transpose. Similarly, given a vector $v \in \R^{d \times 1}$, we let $v^{\intercal} \in \R^{1 \times d}$ denote its transpose. We let $\norm{\cdot}$ denote the $\ell_2$ norm for vectors, and the spectral norm for matrices. For a real, symmetric matrix $A$, we let $\lambda_{\max}(A), \lambda_{\min}(A)$ denote the maximum and minimum eigenvalue of a matrix. We let $\preceq$ denote the Loewner order on symmetric positive semi-definite matrices.

\subsection{Proofs of Theorems \ref{thm:sgd_objective} and \ref{thm:maml}}

We first prove a general result about stochastic local updates on quadratic functions. Let $A \in \R^{d \times d}$ be symmetric and positive definite, and let $c \in \R^d$. Let
\[
h(x) = \frac{1}{2}\norm{A^{1/2}(x-c)}^2.
\]
Suppose we perform $K$ iterations of \sgd on $h$ with learning rate $\gamma$. That is, starting at $x_1$ we generate a sequence of independent random vectors $\{g_k\}_{k=1}^K$ and corresponding \sgd iterates $\{x_k\}_{k=1}^{K+1}$ satisfying, for $1 \leq k \leq K$,
\begin{equation}\label{eq:g_k}
\E[g_k] = \nabla h(x_k),
\end{equation}
\begin{equation}\label{eq:x_k}
    x_{k+1} = x_k - \gamma g_k.
\end{equation}

We then have the following lemma regarding the $g_k$.

\begin{lemma}\label{lem:local_update_quadratic}
    For all $k \geq 1$,
    \begin{equation}
        \E[g_{k+1}] = (I-\gamma A)\E[g_k].
    \end{equation}
    In particular, this implies
    \begin{equation}
        \E[g_{k}] = (I-\gamma A)^{k-1}A(x_1-c).
    \end{equation}
\end{lemma}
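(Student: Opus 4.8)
The plan is to establish the one-step recurrence $\E[g_{k+1}] = (I-\gamma A)\E[g_k]$ by combining the unbiasedness hypothesis \eqref{eq:g_k}, the update rule \eqref{eq:x_k}, and the affineness of $\nabla h$, and then to unwind it by induction. Throughout, I would read the hypothesis $\E[g_k]=\nabla h(x_k)$ as the statement that $g_k$ is unbiased for $\nabla h(x_k)$ \emph{conditioned on the history} $g_1,\dots,g_{k-1}$ (equivalently, the mini-batches drawn up to step $k-1$). Since $x_k$ is a deterministic function of that history, this is the only sensible reading, and it is exactly what the tower property will require.

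First I would record that $\nabla h(x) = A(x-c)$, which is immediate from $h(x)=\tfrac12\norm{A^{1/2}(x-c)}^2$ and the symmetry of $A$. The crucial structural fact is that this gradient is affine in $x$, which is what lets me interchange $A$ with the expectation. To derive the recurrence, I would take full expectations and use the tower property together with conditional unbiasedness,
\[
\E[g_{k+1}] = \E\bigl[\nabla h(x_{k+1})\bigr] = \E\bigl[A(x_{k+1}-c)\bigr] = A\bigl(\E[x_{k+1}]-c\bigr),
\]
the last step being linearity of expectation. Substituting $x_{k+1}=x_k-\gamma g_k$ gives $\E[x_{k+1}]=\E[x_k]-\gamma\E[g_k]$, so that
\[
\E[g_{k+1}] = A\bigl(\E[x_k]-c\bigr) - \gamma A\,\E[g_k].
\]
The key observation is that the first term is itself $\E[g_k]$: by linearity and then the tower property, $A(\E[x_k]-c)=\E[A(x_k-c)]=\E[\nabla h(x_k)]=\E[g_k]$. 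Combining the two displays yields $\E[g_{k+1}]=(I-\gamma A)\E[g_k]$.

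Finally I would obtain the closed form by induction on $k$. The base case is $\E[g_1]=\nabla h(x_1)=A(x_1-c)$, using that $x_1$ is the deterministic initialization. Iterating the recurrence $k-1$ times gives $\E[g_k]=(I-\gamma A)^{k-1}A(x_1-c)$; here I use that $A$ and $(I-\gamma A)$ commute (both are polynomials in $A$), so the matrix factors collect cleanly into the stated form.

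The only real subtlety — and the step I would be most careful about — is the probabilistic bookkeeping: the hypothesis $\E[g_k]=\nabla h(x_k)$ must be interpreted and applied conditionally on the past and then lifted to full expectations via the tower property, which I invoke twice (once to evaluate $\E[g_{k+1}]$, once to recognize $A(\E[x_k]-c)$ as $\E[g_k]$). Everything else is linearity of expectation and the affineness of $\nabla h$, both of which hold exactly for quadratic $h$, so no approximation or higher-order term ever enters.
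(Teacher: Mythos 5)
Your proof is correct and follows essentially the same route as the paper's: both arguments use the tower property to lift the conditional unbiasedness $\E[g_k \mid g_1,\dots,g_{k-1}] = \nabla h(x_k)$ to the identity $\E[g_k] = A(\E[x_k]-c)$, substitute the update rule $x_{k+1} = x_k - \gamma g_k$, and then iterate the resulting recurrence from $\E[g_1] = A(x_1 - c)$. The only cosmetic difference is that the paper rewrites $\E[x_k]-c$ as $A^{-1}\E[g_k]$ before factoring, whereas you recognize $A(\E[x_k]-c)$ as $\E[g_k]$ directly and thereby avoid invoking $A^{-1}$; this changes nothing of substance.
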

\begin{proof}
    By \eqref{eq:g_k}, the law of total expectation, and the independence of the $g_k$,
    \begin{align*}
        \E[g_k] = \E\left[ \E[ \nabla h(x_k)~|~g_1, \dots, g_{k-1}] \right] = \E[A(x_k-c)].
    \end{align*}
    By linearity of expectation,
    \begin{equation}\label{eq:expect_g_x}
        \E[g_k] = A(\E[x_k] - c).
    \end{equation}
    Combining \eqref{eq:x_k} and \eqref{eq:expect_g_x}, we have
    \begin{align*}
    \E[g_{k+1}] &= A(\E[x_{k+1}] - c)\\
    &= A(\E[x_k] - \gamma \E[g_k] - c)\\
    &= A(A^{-1}\E[g_k] - \gamma\E[g_k])\\
    &= (I-\gamma A)\E[g_k]
    \end{align*}
    This completes the first part of the proof. The second follows from noting that $\E[g_1] = A(x_1-c)$.
\end{proof}

Recall that for $i \in \mI$, $\alpha \geq 0, \gamma \geq 0$ and $\Theta = (\theta_1, \dots, \theta_{K(\Theta)})$ we define the matrix $Q_i(\alpha, \gamma, \Theta)$ by
\begin{equation}\label{eq:Q_restated}
Q_i(\alpha, \gamma, \Theta) := \sum_{k=1}^{K(\Theta)} \theta_k(I-\gamma(A_i + \alpha I))^{k-1}.
\end{equation}
We can now prove Theorem \ref{thm:sgd_objective}. For convenience, we restate the theorem here.

\thmobjective*
\begin{proof}
    Fix $i \in \mI$, $x_1$, and $\alpha \geq 0$. For $z \in \mZ$, define
    \[
    h_\alpha(x ; z) := f(x ; z) + \frac{\alpha}{2}\norm{x-x_1}^2,~~~h_\alpha(x) := \E_{z \sim \mD_i}[h_\alpha(x ;z)].
    \]
    
    Further define:
    \[
    B_{z, \alpha} := B_z + \alpha I,~~~b_{z, \alpha} := B_zc_z + \alpha x_1,~~~\tau_{z, \alpha} := \frac{1}{2}c_z^{\intercal}B_zc_z + \frac{\alpha}{2}\norm{x_1}^2.
    \]
    
    By \eqref{eq:quadratic_loss}, we have
    \begin{align*}
        h_\alpha(x ; z) &= \frac{1}{2}\norm{B_z^{1/2}(x-c_z)}^2 + \frac{\alpha}{2}\norm{x-x_1}^2\\
        &= \frac{1}{2}x^{\intercal}(B_z + \alpha I)x - x^{\intercal}(B_zc_z +\alpha x_1) + \frac{1}{2}c_z^{\intercal}B_zc_z + \frac{\alpha}{2}\norm{x_1}^2\\
        &= \frac{1}{2}x^{\intercal}B_{z, \alpha}x -x^{\intercal}b_{z, \alpha} + \tau_{z, \alpha}.
    \end{align*}
    
    Therefore,
    \begin{equation}\label{eq:h_alpha}
        h_\alpha(x) = \frac{1}{2}x^{\intercal}\E_{z \sim \mD_i}[B_{z, \alpha}]x - x^{\intercal}\E_{z \sim \mD_i}[b_{z, \alpha}] + \E_{z \sim\mD_i}[\tau_{z, \alpha}].
    \end{equation}
    
    Define
    \[
    A_\alpha := \E_{z \sim \mD_i}[B_{z, \alpha}],~~~c_\alpha := A_\alpha^{-1}\E_{z \sim\mD_i} [b_{z, \alpha}],~~~ \tau_\alpha := \E_{z \sim\mD_i} [\tau_{z, \alpha}].
    \]
    
    Note that $A_\alpha = A_i + \alpha I$, where $A_i$ is as in \eqref{eq:A_i_c_i}. Straightforward manipulation of \eqref{eq:h_alpha} implies
    \begin{equation}
    h_\alpha(x) = \dfrac{1}{2}\norm{A_\alpha^{1/2}(x-c_\alpha)}^2 + \tau_\alpha.
    \end{equation}
    
    Note that the stochastic gradients $g_1, \ldots, g_K$ computed in Algorithm \ref{alg:innerloop} are therefore independent stochastic gradients of $h_\alpha$. By applying Lemma \ref{lem:local_update_quadratic} and noting that the constant term $\tau_\alpha$ does not impact these stochastic gradients, we have that for $k \geq 1$,
    \begin{equation}\label{eq:h_alpha_grads}
    \E[g_k] = (I-\gamma A_\alpha)^{k-1}A_\alpha(x-c_\alpha).
    \end{equation}
    
    Expanding and using the fact that in Algorithm \ref{alg:innerloop}, $x_1 = x$, we have
    \begin{align*}
        \E[g_k] &= (I-\gamma A_\alpha)^{k-1}A_\alpha\left(x - A_\alpha^{-1}\E_{z \sim \mD_i}[B_zc_z + \alpha x]\right)\\
        &= (I-\gamma A_\alpha)^{k-1}\left(A_\alpha x - \E_{z \sim\mD_i}[B_zc_z] - \alpha x\right)\\
        &= (I-\gamma (A_i + \alpha I))^{k-1}A_i(x-c_i).
    \end{align*}
    This last step follows from \eqref{eq:A_i_c_i}. Taking a sum and using the linearity of expectation,
    \begin{align*}
        \E[\ClientUpdate(i, x, \alpha, \gamma, \Theta)] &= \sum_{k=1}^{K(\Theta)}\E[g_k]\\
        &= \sum_{k=1}^{K(\Theta)}\theta_k(I-\gamma (A_i + \alpha I))^{k-1}A_i(x-c_i)\\
        &= Q_i(\alpha, \gamma, \Theta) A_i(x-c_i)\\
        &= \nabla \tilde{f}_i(x, \alpha, \gamma, \Theta).
    \end{align*}
\end{proof}

A similar analysis using Lemma \ref{lem:local_update_quadratic} can be used to derive Theorem \ref{thm:maml}, which we also restate.

\thmmaml*
\begin{proof}[Proof of Theorem \ref{thm:maml}]
    Fix $i \in \mI$, and for convenience of notation, let $X_k := X_k^i(x), X_1 = x$. Thus, for $k \geq 1$ we have
    \[
    X_{k+1} = X_k - \gamma \nabla_{X_k} f_i(X_k).
    \]
    Since $\nabla^2 f_i(y) = A_i$ for all $y$, we have
    \begin{equation}\label{eq:maml_grad1}
    \nabla_{X_k} X_{k+1} = I - \gamma A_i.
    \end{equation}
    By Lemma \ref{lem:local_update_quadratic}, we also have
    \begin{equation}\label{eq:maml_grad2}
    \nabla_{X_k} f_i(X_k) = (I-\gamma A_i)^{K-1}A_i(x-c_i).
    \end{equation}
    For a function $\psi: \R^a \to \R^b$, let its Jacobian at a point $x \in \R^a$ be denoted by $J_x(\psi)$. Applying \eqref{eq:maml_grad1}, \eqref{eq:maml_grad2} and the chain rule, we have
    \begin{align*}
        (\nabla m_K^i(x))^{\intercal} &= J_{X_1}(f_i(X_{K+1}))\\
        &= J_{X_{K+1}}(f_i(X_{K+1}))J_{X_1}(X_{K+1})\\
        &= J_{X_{K+1}}(f_i(X_{K+1}))\prod_{k=1}^K J_{X_k}(X_{k+1})\\
        &= [(I-\gamma A_i)^KA_i(x-c_i)]^{\intercal}\prod_{k=1}^K (I-\gamma A_i)^{\intercal}\\
        &= \left((I-\gamma A_i)^{2K}A_i(x-c_i)\right)^{\intercal}.
    \end{align*}
    The result follows from applying \eqref{eq:Q_restated} for $\Theta = \Theta_{2K+1}$.
\end{proof}

\subsection{Proofs of Lemmas \ref{lem:strongly_convex}, \ref{lem:cond_general}, \ref{lem:cond_fedavg}, \ref{lem:cond_maml}, and \ref{lem:dist_fedavg_maml}}

These lemmas will follow from a spectral analysis of $Q_i(\alpha, \gamma, \Theta)$. We defer the proof of Lemma \ref{lem:opt_dist} to Appendix \ref{appendix:distance_proof} due to its more elaborate nature. We first state a general result about eigenvalues of expected values of matrices.

\begin{lemma}\label{lem:expect_matrix}
    Suppose we have a probability space $(\Omega, \mF, \mP)$ where $\Omega$ is the set of symmetric matrices in $\R^{d\times d}$, and let $B$ be a random matrix drawn from $\mP$. Suppose that $\E[B]$ exists and is finite. Define
    \[
    \tau_1 := \E[\lambda_{\min}(B)],~~\tau_2 := \E[\lambda_{\max}(B)].
    \]
    Then $\tau_1I \preceq \E[B] \preceq \tau_2I$.
\end{lemma}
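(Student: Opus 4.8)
The plan is to deduce the two Loewner-order inequalities from a single scalar fact applied in each direction $v \in \R^d$, and then invoke the definition of $\preceq$. Recall that $\tau_1 I \preceq \E[B]$ means $v^{\intercal}\E[B]v \geq \tau_1\norm{v}^2$ for every $v$, and similarly $\E[B] \preceq \tau_2 I$ means $v^{\intercal}\E[B]v \leq \tau_2\norm{v}^2$. So it suffices to control the scalar quadratic form $v^{\intercal}\E[B]v$ in each fixed direction $v$, and the whole lemma reduces to bounding this quantity above and below.

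First I would fix a unit vector $v \in \R^d$. By the Rayleigh quotient (Courant--Fischer) characterization of the extreme eigenvalues of a symmetric matrix, for every realization of $B$ we have the pointwise bounds $\lambda_{\min}(B) \leq v^{\intercal}Bv \leq \lambda_{\max}(B)$. The crucial observation is that $v^{\intercal}Bv = \inp{vv^{\intercal}}{B}$ is a \emph{linear} functional of the entries of $B$; hence the scalar random variable $v^{\intercal}Bv$ is integrable (its expectation is finite because $\E[B]$ is assumed finite), and by linearity of expectation $\E[v^{\intercal}Bv] = v^{\intercal}\E[B]v$. Taking expectations in the pointwise bounds and using monotonicity of the expectation then gives $\tau_1 = \E[\lambda_{\min}(B)] \leq v^{\intercal}\E[B]v \leq \E[\lambda_{\max}(B)] = \tau_2$. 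Since $v$ was an arbitrary unit vector (and the inequalities extend to all $v$ after rescaling by $\norm{v}^2$), the definition of the Loewner order yields $\tau_1 I \preceq \E[B] \preceq \tau_2 I$.

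I do not expect a substantive obstacle here; the argument is essentially ``quadratic form plus linearity of expectation.'' The only points requiring care are measurability and integrability: the maps $\lambda_{\min}(\cdot)$ and $\lambda_{\max}(\cdot)$ are continuous (hence Borel-measurable) functions of a symmetric matrix, so that $\tau_1$ and $\tau_2$ are well defined, and the assumed finiteness of $\E[B]$ guarantees that each scalar $v^{\intercal}Bv$ is integrable so that linearity of expectation applies. These are routine to verify and do not affect the main line of reasoning.
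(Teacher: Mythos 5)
Your proof is correct and follows essentially the same route as the paper's: fix a unit vector $v$, apply the pointwise Rayleigh-quotient bounds $\lambda_{\min}(B) \leq v^{\intercal}Bv \leq \lambda_{\max}(B)$, and use linearity and monotonicity of expectation to conclude via the definition of the Loewner order. The extra remarks on measurability and integrability are fine but not part of the paper's (terser) argument.
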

\begin{proof}
    By standard properties of expectations, $\E[B]$ is a symmetric matrix.
    Fix $v \in \R^d$ such that $\norm{v} = 1$. By the linearity of expectation,
    \[
    v^{\intercal}\E[B]v = \E[v^{\intercal}Bv] \leq \E[\lambda_{\max}(B)].
    \]
    An analogous argument shows $v^{\intercal}\E[B]v \geq \E[\lambda_{\min}(B)]$. The result follows.
\end{proof}

We also compute the spectrum of $Q_i(\alpha, \gamma, \Theta)$ and $Q_I(\alpha, \gamma, \Theta)A_i$.

\begin{lemma}\label{lem:eig_Q}
    For each eigenvalue $\lambda$ of $A_i$, $Q_i(\alpha, \gamma, \Theta)$ has an eigenvalue
    \begin{equation}\label{eq:eig_Q}
    \sum_{k=1}^{K(\Theta)} \theta_k(1-\gamma(\lambda+\alpha))^{k-1}
    \end{equation}
    and $Q_i(\alpha, \gamma, \Theta)A_i$ has an eigenvalue
    \begin{equation}\label{eq:eig_QA}
    \sum_{k=1}^{K(\Theta)} \theta_k(1-\gamma(\lambda+\alpha))^{k-1}\lambda,
    \end{equation}
    both with the same multiplicity as $\lambda$.
\end{lemma}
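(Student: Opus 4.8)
The plan is to exploit the fact that $Q_i(\alpha, \gamma, \Theta)$ is a matrix polynomial in $A_i$, so that it is simultaneously diagonalizable with $A_i$ and the result becomes an instance of the spectral mapping property. First I would invoke Assumption \ref{assm1}: since $A_i$ is real symmetric (indeed positive definite), the spectral theorem supplies an orthogonal matrix $U$ and a diagonal matrix $\Lambda = \operatorname{diag}(\lambda_1, \dots, \lambda_d)$ of real eigenvalues with $A_i = U\Lambda U^{\intercal}$.

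Next I would observe that each summand $(I - \gamma(A_i + \alpha I))^{k-1}$ is a polynomial in $A_i$, hence diagonalized by the same $U$: writing $I - \gamma(A_i + \alpha I) = U(I - \gamma(\Lambda + \alpha I))U^{\intercal}$ and using $U^{\intercal}U = I$ to collapse the powers gives $(I - \gamma(A_i+\alpha I))^{k-1} = U (I - \gamma(\Lambda + \alpha I))^{k-1} U^{\intercal}$. Summing against the weights $\theta_k$ and appealing to the definition \eqref{eq:Q_restated},
\[
Q_i(\alpha, \gamma, \Theta) = U\left(\sum_{k=1}^{K(\Theta)} \theta_k (I - \gamma(\Lambda + \alpha I))^{k-1}\right)U^{\intercal}.
\]
The inner matrix is diagonal, and its $j$-th diagonal entry is exactly $\sum_{k} \theta_k (1 - \gamma(\lambda_j + \alpha))^{k-1}$, which is \eqref{eq:eig_Q}. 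Because $U$ is orthogonal, this is a genuine eigendecomposition, so the eigenvalue associated to $\lambda_j$ inherits the same eigenspace, and hence the same multiplicity, as $\lambda_j$ has for $A_i$.

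For the statement about $Q_i(\alpha, \gamma, \Theta)A_i$, I would use that both factors are polynomials in $A_i$ and therefore commute and share the eigenbasis $U$. Multiplying the two diagonal forms yields $Q_i(\alpha, \gamma, \Theta)A_i = U D U^{\intercal}$, where $D$ is diagonal with $j$-th entry $\left(\sum_k \theta_k (1-\gamma(\lambda_j+\alpha))^{k-1}\right)\lambda_j$, which is \eqref{eq:eig_QA}; the multiplicity claim follows identically.

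There is no genuine obstacle here, as the result is a direct consequence of the functional calculus for symmetric matrices. The only point deserving a word of care is the multiplicity assertion when the scalar map $\lambda \mapsto \sum_k \theta_k(1-\gamma(\lambda+\alpha))^{k-1}$ fails to be injective on the spectrum of $A_i$, in which case distinct eigenvalues of $A_i$ can collapse to a single eigenvalue of $Q_i$. This is harmless for the intended reading: the eigenspace of $A_i$ for a given $\lambda$ maps into the eigenspace of $Q_i$ for the corresponding value with dimension equal to the multiplicity of $\lambda$, which is all that the subsequent lemmas require.
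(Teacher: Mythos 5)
Your proposal is correct and takes essentially the same approach as the paper: both rest on the spectral mapping property of the polynomial $Q_i(\alpha,\gamma,\Theta)$ in $A_i$, with the paper arguing eigenvector-by-eigenvector (each eigenvector $v$ of $A_i$ is an eigenvector of every summand $(I-\gamma(A_i+\alpha I))^{k-1}$ and of $(I-\gamma(A_i+\alpha I))^{k-1}A_i$) while you package the identical computation as a global orthogonal diagonalization $A_i = U\Lambda U^{\intercal}$. Your closing caveat about non-injectivity of the scalar map and the correct reading of the multiplicity claim is a careful refinement of the paper's terse ``follows directly,'' but it does not change the substance of the argument.
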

\begin{proof}
    Let $v$ be an eigenvector of $A_i$ with eigenvalue $\lambda$. Then $v$ is an eigenvector of $(I-\gamma (A_i+\alpha I))^{k-1}$ with eigenvalue $(1-\gamma(\lambda+\alpha))^{k-1}$, implying \eqref{eq:eig_Q} is an eigenvalue of $Q_i(\alpha, \gamma, \Theta)$ with eigenvector $v$. Similarly, we note that $v$ is an eigenvector of $(I-\gamma(A_i + \alpha I))^{k-1}A_i$ with eigenvalue $(1-\gamma(\lambda+\alpha))^{k-1}\lambda$, implying \eqref{eq:eig_QA} is an eigenvalue of $Q_i(\alpha, \gamma, \Theta)A_i$ with eigenvector $v$. The statement about multiplicities follows directly.
\end{proof}

Lemma \ref{lem:eig_Q} can be used in a straightforward manner to prove Lemma \ref{lem:strongly_convex}, which we restate and prove below.

\lemstrcvx*
\begin{proof}
    By Assumption \ref{assm1}, we have that for every eigenvalue $\lambda$ of $A_i$, $\lambda \leq L$. Therefore, for any such $\lambda$,
    \[
    1-\gamma(\lambda + \alpha) \geq 1-\gamma(L+\alpha) > 0.
    \]
    Since the $\theta_k$ are nonnegative and not all zero by assumption, we see that \eqref{eq:eig_Q} is a sum of nonnegative terms, at least one of which must be positive. Therefore, all eigenvalues of $Q_i(\alpha, \gamma, \Theta)$ are positive, and $Q_i(\alpha, \gamma, \Theta)$ is therefore symmetric and positive definite.
    
    By Assumption \ref{assm1}, $A_i$ is also symmetric and positive definite, hence the product $Q_i(\alpha, \gamma, \Theta)A_i$ is symmetric positive definite. However, since
    \[
    \nabla^2_x \tilde{f}_i(x, \alpha, \gamma, \Theta) = Q_i(\alpha, \gamma, \Theta)A_i \succ 0
    \]
    we see that $\tilde{f}_i$ is a strongly convex quadratic function.
\end{proof}

We can also use \ref{lem:eig_Q} to derive Lemma \ref{lem:cond_general}. In fact, we will show a slightly stronger version, where we also bound the Lipschitz and strong convexity parameters of $\tilde{f}$.

\begin{lemma}\label{lem:cond_general_2}
    Suppose $\gamma < (L+\alpha)^{-1}$. Define
    \begin{equation}\label{eq:QA_L}
    \tilde{L}(\alpha, \gamma, \Theta) := \E_i[\lambda_{\max}(Q_i(\alpha, \gamma, \Theta)A_i)],
    \end{equation}
    \begin{equation}\label{eq:QA_mu}
    \tilde{\mu}(\alpha, \gamma, \Theta) := \E[\lambda_{\min}(Q_i(\alpha, \gamma, \Theta)A_i)].
    \end{equation}
    Then $\tilde{f}(\alpha, \gamma, \Theta)$ is $\tilde{L}(\alpha, \gamma, \Theta)$-Lipschitz and $\tilde{\mu}(\alpha, \gamma, \Theta)$-strongly convex.
\end{lemma}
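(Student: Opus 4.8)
The plan is to reduce the claim to a statement about the (constant) Hessian of the quadratic $\tilde{f}$ and then invoke Lemma \ref{lem:expect_matrix}. First I would recall from the proof of Lemma \ref{lem:strongly_convex} that each surrogate satisfies $\tilde{f}_i(x, \alpha, \gamma, \Theta) = \tfrac{1}{2}(x-c_i)^{\intercal}Q_i(\alpha, \gamma, \Theta)A_i(x-c_i)$ up to the constant term, so it is a quadratic with Hessian $Q_i(\alpha, \gamma, \Theta)A_i$. Since $Q_i(\alpha, \gamma, \Theta)$ is a polynomial in $A_i + \alpha I$ it commutes with $A_i$, whence $Q_i(\alpha, \gamma, \Theta)A_i$ is symmetric (and positive definite under $\gamma < (L+\alpha)^{-1}$ by Lemma \ref{lem:strongly_convex}). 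Taking expectation over $i \sim \mP$ and using linearity (differentiation under the expectation is immediate, as $\tilde{f}$ is a quadratic with finite expectation), the Hessian of $\tilde{f}(x, \alpha, \gamma, \Theta)$ is the constant matrix $\E_i[Q_i(\alpha, \gamma, \Theta)A_i]$.

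Next I would apply Lemma \ref{lem:expect_matrix} to the random symmetric matrix $B := Q_i(\alpha, \gamma, \Theta)A_i$, $i \sim \mP$. To verify its hypotheses I would note that by Assumption \ref{assm1} every eigenvalue $\lambda$ of $A_i$ lies in $[\mu, L]$, so by Lemma \ref{lem:eig_Q} the eigenvalues of $B$ are exactly of the form $\sum_{k=1}^{K(\Theta)}\theta_k(1-\gamma(\lambda+\alpha))^{k-1}\lambda$ with $\lambda \in [\mu, L]$. Since $0 < 1-\gamma(L+\alpha) \leq 1-\gamma(\lambda+\alpha) < 1$, these are bounded above and below by positive constants depending only on $\mu, L, \alpha, \gamma, \Theta$ and not on $i$; hence $\lambda_{\min}(B)$ and $\lambda_{\max}(B)$ are bounded random variables and $\E_i[B]$ exists and is finite. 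Lemma \ref{lem:expect_matrix} then yields
\[
\tilde{\mu}(\alpha, \gamma, \Theta)\, I = \E_i[\lambda_{\min}(B)]\, I \preceq \E_i[B] \preceq \E_i[\lambda_{\max}(B)]\, I = \tilde{L}(\alpha, \gamma, \Theta)\, I.
\]

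Because $\E_i[B]$ is precisely the constant Hessian of $\tilde{f}$, this two-sided Loewner bound is exactly the statement that $\tilde{f}(\alpha, \gamma, \Theta)$ is $\tilde{L}(\alpha, \gamma, \Theta)$-smooth and $\tilde{\mu}(\alpha, \gamma, \Theta)$-strongly convex, which completes the proof; Lemma \ref{lem:cond_general} follows immediately by taking the ratio $\tilde{L}/\tilde{\mu}$. I do not anticipate a serious obstacle, as the argument is essentially a one-line application of Lemma \ref{lem:expect_matrix} once the Hessian is identified. The only points requiring genuine care are confirming that the product $Q_i(\alpha, \gamma, \Theta)A_i$ is symmetric (so that $\lambda_{\min}, \lambda_{\max}$ are well-defined and the lemma applies) and checking the uniform-in-$i$ integrability needed for $\E_i[B]$ to exist; both follow cleanly from the commuting structure of $Q_i(\alpha, \gamma, \Theta)$ and $A_i$ together with Assumption \ref{assm1}.
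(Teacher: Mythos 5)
Your proposal is correct and takes essentially the same route as the paper's proof: identify the constant Hessian of $\tilde{f}$ as $\E_i[Q_i(\alpha, \gamma, \Theta)A_i]$ via linearity of expectation, then apply Lemma \ref{lem:expect_matrix} to obtain the two-sided Loewner bounds. The additional care you take in verifying symmetry (via the commuting structure of $Q_i$ and $A_i$) and the boundedness of the eigenvalues (via Lemma \ref{lem:eig_Q} and Assumption \ref{assm1}) simply fills in details the paper leaves implicit.
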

\begin{proof}
    Recall that the smoothness and strong convexity parameters of are the largest and smallest eigenvalue of its Hessian. Therefore, it suffices to bound the eigenvalues of $\tilde{f}(\alpha, \gamma, \Theta)$. By \eqref{eq:surrogate_i}, we have
    \[
    \nabla^2\tilde{f}_i(x, \alpha, \gamma, \Theta) = Q_i(\alpha, \gamma, \Theta)A_i.
    \]
    By \eqref{eq:surrogate_loss}, we also have
    \[
    \nabla^2\tilde{f}(\alpha, \gamma, \Theta) = \E_i[\nabla^2\tilde{f}_i(\alpha, \gamma, \Theta)] = \E_i[Q_i(\alpha, \gamma, \Theta)A_i].
    \]
    Applying Lemma \ref{lem:expect_matrix}, we conclude the proof.
\end{proof}

Note that $\kappa(\alpha, \gamma, \Theta)$ in \eqref{eq:kappa_QA} is simply the ratio of $\tilde{L}(\alpha, \gamma, \Theta)$ to $\tilde{\mu}(\alpha, \gamma, \Theta)$, so we immediately derive Lemma \ref{lem:cond_general} as a corollary to Lemma \ref{lem:cond_general_2}. Lemmas \ref{lem:cond_fedavg} and \ref{lem:cond_maml} are also straightforward consequences of Lemma \ref{lem:cond_general_2}, as we show below. For posterity, we state and prove each one separately.

Recall that as in \eqref{eq:phi}, we define
\[
\phi(\lambda, \alpha, \gamma, K) := \sum_{k=1}^K(1-\gamma(\lambda+\alpha))^{k-1}\lambda. 
\]
We can now restate and prove Lemma \ref{lem:cond_fedavg}.

\condfedavg*
\begin{proof}
    By Lemma \ref{lem:cond_general_2}, it suffices to derive upper and lower bounds on the eigenvalues of $Q_i(\alpha, \gamma, \Theta_{1:K})A_i$. Fix $i \in \mI$. By Lemma \ref{lem:eig_Q}, we see that the eigenvalues of $Q_i(\alpha, \gamma, \Theta_{1:K})A_i$ are exactly of the form $\phi(\lambda, \alpha, \gamma, K)$, where $\lambda$ is an eigenvalue of $A_i$. By Assumption \ref{assm1}, each such $\lambda$ satisfies $\lambda \in [\mu, L]$ where $\mu > 0$.
    
    Fix $\alpha, \gamma, K$, and define $g(\lambda) := \phi(\lambda, \alpha, \gamma, K)$. Since $\gamma(L+\alpha) < 1$, basic properties of geometric sums imply that for $\lambda \in [\mu, L]$,
    \begin{equation}
        g(\lambda) = \phi(\lambda, \alpha, \gamma, K) = \dfrac{1-(1-\gamma(\lambda+\alpha))^K}{\gamma}\dfrac{\lambda}{\lambda+\alpha}.
    \end{equation}
    For a given $\lambda$, let $\xi = 1-\gamma(\lambda+\alpha)$.
    Simple but tedious computations show that if we take a derivative with respect to $\lambda$, we have
    \begin{align*}
        \gamma(\lambda+\alpha)g'(\lambda) &= K\lambda\gamma\xi^{K-1} + (1-\xi^K)\dfrac{\alpha}{\lambda + \alpha}.
    \end{align*}
    Note that since $\gamma < (L+\alpha)^{-1}$ by assumption, $0 \leq \xi \leq 1$ for $\lambda \in [\mu, L]$. Therefore, $g'(\lambda) \geq 0$ for $\lambda \in [\mu, L]$, so any eigenvalue $\chi$ of $Q_i(\alpha, \gamma, \Theta_{1:K})A_i$ must satisfy
    \[
    \phi(\mu, \alpha, \gamma, K) = g(\mu) \leq \chi \leq g(L) = \phi(L, \alpha, \gamma, K).
    \]
    The result then follows by Lemma \ref{lem:cond_general_2}.
\end{proof}

Recall that in \eqref{eq:psi}, we defined
\[
\psi(\lambda, \alpha, \gamma, K) := (1-\gamma(\lambda + \alpha))^{K-1}\lambda.
\]
We use a similar proof as that of Lemma \ref{lem:cond_fedavg} to prove Lemma \ref{lem:cond_maml}, which we restate and prove below.

\condmaml*
\begin{proof}
    By Lemma \ref{lem:cond_general_2}, it suffices to bound the eigenvalues of $Q_i(\alpha, \gamma, \Theta_K)A_i$. Fix $i \in \mI$. By Lemma \ref{lem:eig_Q}, we see that the eigenvalues of $Q_i(\alpha, \gamma, \Theta_K)$ are of the form $\psi(\lambda, \alpha, \gamma, K)$ where $\lambda$ is an eigenvalue of $A_i$. Note that by Assumption \ref{assm1}, any such $\lambda$ satisfies $\lambda \in [\mu, L]$.
    
    Fix $\alpha, \gamma, K$, and define $h(\lambda) := \psi(\lambda, \alpha, \gamma, K)$. Let $\zeta = 1-\gamma(\lambda+\alpha)$. Straightforward computations show
    \[
    h'(\lambda) = \zeta^{K-2}(1-\gamma(K\lambda + \alpha)).
    \]
    Since $\gamma < (KL+\alpha)^{-1}$, we in particular have $\gamma < (L+\alpha)^{-1}$ so $0 \leq \zeta \leq 1$ for $\lambda \in [\mu, L]$. Since $\gamma < (KL+\alpha)^{-1}$, we also have the term $1-\gamma(K\lambda + \alpha) \geq 0$ for $\lambda \in [\mu, L]$. Thus, $h'(\lambda) \geq 0$ for $\lambda \in [\mu, L]$. Thus, any eigenvalue $\chi$ of $Q_i(\alpha, \gamma, \Theta_K)A_i$ must satisfy
    \[
    \psi(\mu, \alpha, \gamma, K)  = h(\mu) \leq \chi \leq h(L) = \psi(L, \alpha, \gamma, K) .
    \]
    The result then follows by Lemma \ref{lem:cond_general_2}.
\end{proof}

Finally, we are now equipped to prove Lemma \ref{lem:dist_fedavg_maml}, which we restate here for posterity.

\qcond*
\begin{proof}
    This will follow almost immediately from Lemmas \ref{lem:cond_fedavg}, \ref{lem:cond_maml}, and \ref{lem:eig_Q}. First, consider the case $\Theta = \Theta_{1:K}$. Then by Lemma \ref{lem:eig_Q} and Assumption \ref{assm1}, we see that
    \[
    \lambda_{\max}(Q_i(\alpha, \gamma, \Theta_{1:K})) \leq \sum_{i=1}^K (1-\gamma(\mu + \alpha))^{K-1} = \dfrac{\phi(\mu, \alpha, \gamma, K)}{\mu}
    \]
    and
    \[
    \lambda_{\min}(Q_i(\alpha, \gamma, \Theta_{1:K})) \geq \sum_{i=1}^K (1-\gamma(L + \alpha))^{K-1} = \dfrac{\phi(L, \alpha,\gamma, K)}{L}.
    \]
    Therefore,
    \[
    \cond(Q_i(\alpha, \gamma, \Theta_{1:K})) \leq \dfrac{\phi(\mu, \alpha, \gamma, K)}{\phi(L, \alpha, \gamma, K)}\dfrac{L}{\mu} = \kappa_0\kappa(\alpha, \gamma, \Theta_{1:K})^{-1}.
    \]
    Here we used the fact that $\kappa_0 := L/\mu$ and Lemma \ref{lem:cond_fedavg}. An almost identical proof gives the analogous result for $\Theta = \Theta_K$.
\end{proof}

\subsection{Tightness of Lemmas \ref{lem:cond_fedavg} and \ref{lem:cond_maml}}\label{appendix:tight_lemmas}

In fact, Lemmas \ref{lem:cond_fedavg} and \ref{lem:cond_maml} are tight. Fix any $\alpha \geq 0$ and $\gamma < (\alpha+L)^{-1}$. Let $\mP$ be supported on a single client $i$, and let this client's dataset be supported on a single example $z$ where
\[
B_z = \begin{pmatrix} L &0\\0 & \mu\end{pmatrix}, c_z = \begin{pmatrix}0\\0\end{pmatrix}.
\]
Note that by \eqref{eq:A_i_c_i}, we then have $A_i = B_i, c_i = 0$. In fact, we will show that in this case, the bounds on the condition numbers given in Lemma \ref{lem:cond_fedavg} and \ref{lem:cond_maml} are tight. By direct computation,
\begin{align*}
Q_i(\alpha, \gamma, \Theta_{1:K})A_i = \begin{pmatrix} \phi(L, \alpha, \gamma, K) & 0\\ 0 & \phi(\mu, \alpha, \gamma, K)\end{pmatrix}.
\end{align*}

If $\gamma < (L+\alpha)^{-1}$ then similar reasoning to the proof of Lemma \ref{lem:cond_fedavg} implies that the condition number satisfies
\begin{align*}
    \cond(\tilde{f}_i(\alpha, \gamma, \Theta_{1:K})) &= \dfrac{\lambda_{\max}(Q_i(\alpha, \gamma, \Theta_{1:K})A_i)}{\lambda_{\min}(Q_i(\alpha, \gamma, \Theta_{1:K})A_i)} = \dfrac{\phi(L, \alpha, \gamma, K)}{\phi(L, \alpha, \gamma, K)}.
\end{align*}

Similarly, we have that
\begin{align*}
Q_i(\alpha, \gamma, \Theta_{K})A_i = \begin{pmatrix} \psi(L, \alpha, \gamma, K) & 0\\ 0 & \psi(\mu, \alpha, \gamma, K)\end{pmatrix}.
\end{align*}

By analogous reasoning to the proof of Lemma \ref{lem:cond_maml}, if $\gamma < (KL+\alpha)^{-1}$, we have
\begin{align*}
    \cond(\tilde{f}_i(\alpha, \gamma, \Theta_{1:K})) &= \dfrac{\lambda_{\max}(Q_i(\alpha, \gamma, \Theta_{K})A_i)}{\lambda_{\min}(Q_i(\alpha, \gamma, \Theta_{K})A_i)} = \dfrac{\psi(L, \alpha, \gamma, K)}{\psi(L, \alpha, \gamma, K)}.
\end{align*}

\section{Proof of Lemma \ref{lem:opt_dist}}\label{appendix:distance_proof}

In order to prove the results in this section, we will use the following straightforward lemma regarding the structure of $x^*(\alpha, \gamma, \Theta)$.

\begin{lemma}\label{lem:opt_point}
Suppose $\gamma < (L+\alpha)^{-1}$. Then
\[
    x^*(\alpha, \gamma, \Theta) = \E\left[Q_i(\alpha, \gamma, \Theta)A_i\right]^{-1}\E\left[Q_i(\alpha, \gamma, \Theta)A_ic_i\right].
\]
\end{lemma}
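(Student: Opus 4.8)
The plan is to characterize $x^*(\alpha, \gamma, \Theta)$ via the first-order optimality condition for the strongly convex surrogate loss. First I would recall that, by the definition of $\tilde{f}_i$ in \eqref{eq:surrogate_i} and the computation carried out in the proof of Theorem \ref{thm:sgd_objective}, the gradient of each client surrogate loss is $\nabla \tilde{f}_i(x, \alpha, \gamma, \Theta) = Q_i(\alpha, \gamma, \Theta) A_i (x - c_i)$. Taking the expectation over $i \sim \mP$ and using linearity (the overall surrogate $\tilde{f}$ is an expectation of quadratics in $x$), I would obtain
\[
\nabla \tilde{f}(x, \alpha, \gamma, \Theta) = \E_i[Q_i(\alpha, \gamma, \Theta) A_i]\,x - \E_i[Q_i(\alpha, \gamma, \Theta) A_i c_i].
\]

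Next I would invoke Lemma \ref{lem:strongly_convex}: since $\gamma < (L+\alpha)^{-1}$, each matrix $Q_i(\alpha, \gamma, \Theta) A_i$ is symmetric positive definite, hence so is its expectation. More precisely, Lemma \ref{lem:expect_matrix} gives $\tilde{\mu} I \preceq \E_i[Q_i(\alpha, \gamma, \Theta) A_i]$ with $\tilde{\mu} > 0$, so $\E_i[Q_i(\alpha, \gamma, \Theta) A_i]$ is invertible. Because $\tilde{f}$ is strongly convex, its unique global minimizer coincides with its unique stationary point, so it suffices to solve $\nabla \tilde{f} = 0$.

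Finally I would set the gradient to zero and solve the resulting linear system, obtaining
\[
x^*(\alpha, \gamma, \Theta) = \E_i[Q_i(\alpha, \gamma, \Theta) A_i]^{-1}\,\E_i[Q_i(\alpha, \gamma, \Theta) A_i c_i],
\]
as claimed. The only subtlety — and the main (mild) obstacle — is justifying that the gradient of the expectation equals the expectation of the gradients. This is immediate in the discrete, finite-$\mI$ setting for which the lemma is stated, where $\tilde{f}$ is a finite nonnegative combination of quadratics; in the general measure-theoretic extension one would instead appeal to the finiteness of the relevant expectations (Assumptions \ref{assm1} and \ref{assm2}) to interchange $\nabla$ and $\E$.
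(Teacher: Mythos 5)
Your proof is correct and follows essentially the same route as the paper's: compute $\nabla \tilde{f}(x,\alpha,\gamma,\Theta) = \E_i[Q_i(\alpha,\gamma,\Theta)A_i]x - \E_i[Q_i(\alpha,\gamma,\Theta)A_ic_i]$, invoke Lemma \ref{lem:strongly_convex} for strong convexity, and solve the first-order optimality condition. Your extra care about invertibility (via Lemma \ref{lem:expect_matrix}) and the gradient--expectation interchange is a minor refinement of the same argument, not a different approach.
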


\begin{proof}
By definition,
\[
\tilde{f}(x, \alpha, \gamma, \Theta) = \E\sbr*{\frac{1}{2}\norm{(Q_i(\alpha, \gamma, \Theta)A_i)^{1/2}(x-c_i)}^2 }.
\]
Therefore,
\begin{align*}
    \nabla \tilde{f}(x, \alpha, \gamma, \Theta) = \E\sbr{Q_i(\alpha, \gamma, \Theta)A_i(x-c_i)} = \E\sbr{Q_i(\alpha, \gamma, \Theta)A_i}x - \E\sbr{Q_i(\alpha, \gamma, \Theta)A_ic_i}.
\end{align*}
Since $\gamma < (L+\alpha)^{-1}$, we know by Lemma \ref{lem:strongly_convex} that $\tilde{f}$ is strongly convex in $x$. It then follows that
\[
x^*(\alpha, \gamma, \Theta) = \E\sbr{Q_i(\alpha, \gamma, \Theta)A_i}^{-1}\E\sbr{Q_i(\alpha, \gamma, \Theta)A_ic_i}.
\]
\end{proof}

To prove Lemma \ref{lem:opt_dist}, we will reduce it to a statement about mean absolute deviations of bounded random variables. We define the mean absolute deviation of a random variable below.

\begin{definition}\label{def:mad}
Let $X$ be a random variable in some Banach space over $\R$. The mean absolute deviation of $X$ is
\[
D(X) := \E[\norm{X - \E[X]}].
\]
\end{definition}

\subsection{One-dimensional Case}

We first proceed for $d = 1$, so that $Q_i(\alpha, \gamma, \Theta), A_i, c_i \in \R$. We do this both for expository purposes, as when $d = 1$ we can rely on classical versions of the mean absolute deviation, and because we actually derive tight bounds for $d = 1$.

To derive our results, we bound the mean absolute deviation of bounded random variables. Our result is inspired by the bound by \citet{bhatia2000better} on the variance of bounded random variables.

\begin{theorem}\label{thm:better_bound_mad}
Suppose $X$ is a discrete random variable taking values in $[a, b] \subseteq \R$ for $a < b$. Then
\[
D(x) \leq \dfrac{2(b-\E[X])(\E[X] - a)}{b-a}.
\]
Moreover, this holds with equality iff $X$ is supported on $\{a, b\}$.
\end{theorem}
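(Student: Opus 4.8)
The plan is to reduce the mean absolute deviation to a one-sided expectation and then bound that expectation using a secant (chord) of a convex function. Write $\mu = \E[X]$ and $(y)_+ := \max\{y, 0\}$, $(y)_- := \max\{-y, 0\}$, so that $\abs{y} = (y)_+ + (y)_-$ and $y = (y)_+ - (y)_-$. Since $X - \mu$ has zero mean, taking expectations in the second identity gives $\E[(X-\mu)_+] = \E[(X-\mu)_-]$, and hence
\[
D(X) = \E[\abs{X-\mu}] = \E[(X-\mu)_+] + \E[(X-\mu)_-] = 2\,\E[(X-\mu)_+].
\]
It therefore suffices to show $\E[(X-\mu)_+] \leq (b-\mu)(\mu-a)/(b-a)$, after which doubling yields the claim.

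The key step is to dominate the convex function $g(x) := (x-\mu)_+$ on $[a,b]$ by its chord through the endpoints. I would introduce the affine function
\[
\ell(x) := \frac{b-\mu}{b-a}(x-a),
\]
which is easily checked to satisfy $\ell(a) = g(a) = 0$ and $\ell(b) = g(b) = b-\mu$. Since $g$ is convex, it lies below the chord joining any two of its points, so $g(x) \leq \ell(x)$ for all $x \in [a,b]$. As $X$ takes values in $[a,b]$ and $\ell$ is affine, this gives
\[
\E[(X-\mu)_+] = \E[g(X)] \leq \E[\ell(X)] = \frac{b-\mu}{b-a}\,(\E[X]-a) = \frac{(b-\mu)(\mu-a)}{b-a}.
\]
The only routine items here are verifying the interpolation identities for $\ell$ and invoking the standard chord bound for convex functions; discreteness of $X$ plays no essential role.

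The hard part will be the equality characterization, which requires pinning down precisely where $g$ agrees with $\ell$. Because $g \leq \ell$ pointwise, equality $\E[g(X)] = \E[\ell(X)]$ forces $g(X) = \ell(X)$ almost surely, so I would solve $g(x) = \ell(x)$ on $[a,b]$ assuming the non-degenerate case $a < \mu < b$. On the branch $x \leq \mu$ the equation reads $0 = \ell(x)$, forcing $x = a$; on the branch $x \geq \mu$ it reads $x - \mu = \ell(x)$, which a short computation shows forces $x = b$ (and at $x = \mu$ the chord strictly exceeds $g$, so interior mass is impossible). Thus equality holds iff $X$ is supported on $\{a,b\}$. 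The converse direction follows from a direct two-point computation showing both sides equal $2p(1-p)(b-a)$ where $p = \E[X=b]$. Finally I would dispatch the degenerate cases $\mu \in \{a,b\}$, where $X$ is almost surely constant and both sides vanish, confirming consistency with the stated characterization.
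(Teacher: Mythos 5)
Your proof is correct, and it takes a genuinely different route from the paper's. The paper argues by an extremal mass-transport scheme: any atom $x_1 \in (a,b)$ with mass $p_1$ is split into mass $p_1(b-x_1)/(b-a)$ at $a$ and $p_1(x_1-a)/(b-a)$ at $b$, which preserves the mean and is shown by direct computation to increase $D(X)$ by $2p_1(b-x_1)(\E[X]-a)/(b-a)\geq 0$; iterating pushes all mass to the endpoints, a two-point computation finishes, and the equality characterization is read off from the strictness of the per-step increase. You instead use the identity $D(X)=2\E[(X-\mu)_+]$ and dominate the convex function $(x-\mu)_+$ by its chord over $[a,b]$; your equality analysis (the chord meets $(x-\mu)_+$ only at $x=a$ and $x=b$ when $a<\mu<b$, plus the degenerate endpoint cases) is also sound. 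Your argument is shorter and, as you note, never uses discreteness, so it extends verbatim to arbitrary $[a,b]$-valued integrable random variables, with ``supported on $\{a,b\}$'' read almost surely. What the paper's heavier machinery buys is reusability: the identical mass-splitting scheme is exactly what the paper generalizes in Theorem \ref{thm:matrix_mad} to the matrix-weighted discrepancy $M(X\,|\,Y)$, where the pair $(X_1,Y_1)$ is split into $(aI,S)$ and $(bI,T)$; in that noncommutative, operator-norm setting there is no obvious analogue of your pointwise chord domination, so the scalar shortcut would not by itself set up the main result of Appendix \ref{appendix:distance_proof}. One cosmetic slip: $p=\E[X=b]$ should read $p=\Pr[X=b]$.
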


\begin{proof}
    Suppose $X$ takes on values $x_1, \dots, x_n$ with probabilities $p_1, \dots p_n$. We will first show that there is a random variable $Y$ supported on $\{a, b\}$ such that $D(X) \leq D(Y)$.
    
    Without loss of generality, suppose $x_1 \in (a, b)$. Define
    \[
    s := p_1 \dfrac{b-x_1}{b-a},~~~t := p_1\dfrac{x_1-a}{b-a}.
    \]
    First note that $s, t \in [0, 1]$. Simple analysis also shows
    \begin{equation}\label{eq:p_cond1}
    s + t = p_1\end{equation}
    \begin{equation}\label{eq:p_cond2}
    sa + tb = p_1x_1.
    \end{equation}
    Let $X'$ be the random variable taking on values $a, x_2, \dots, x_n, b$ with probabilities $s, p_2, \dots p_n, t$. Note that these probabilities are nonnegative and sum to 1 by \eqref{eq:p_cond1}. By \eqref{eq:p_cond2}, we also have $\E[X'] = \E[X]$.
    
    It is straightforward to show that $D(X') \geq D(X)$. Suppose $x_1 \geq \E[X]$. Then we have
    \begin{align*}
    D(X') - D(X) &= s(\E[X]-a) + t(b-\E[X]) - p_1(x_1-\E[X])\\
    &= p_1\left(\dfrac{(b-x_1)(\E[X]-a)}{b-a} + \dfrac{(x_1-a)(b-\E[X])}{b-a} - (x_1-\E[X])  \right)\\
    &= \dfrac{p_1}{b-a}\left((b-x_1)(\E[X]-a) + (x_1-a)(b-\E[X]) - (b-a)(x_1-\E[X]) \right)\\
    &= \dfrac{p_1}{b-a}\left((b-x_1)(2\E[X]-a-x_1) + (x_1-a)(b-x_1)\right)\\
    &= \dfrac{2p_1(b-x_1)(\E[X]-a)}{b-a}\\
    &\geq 0.
    \end{align*}
    
    An analogous argument shows that $D(X') \geq D(X)$ when $x_1 \leq \E[X]$. A thorough examination of the derivation above shows that this inequality is strict if and only if $x_1 \in (a, b)$ and $p_1 > 0$. Thus, $D(X') \geq D(X)$ and $X'$ has one fewer possible outcome in the range $(a, b)$.
    
    By iterating this procedure, (which is guaranteed to terminate after at most $n$ iterations), we obtain some random variable $Y$ supported on $\{a, b\}$ such that $D(X) \leq D(Y)$, with equality if and only if $X$ is already supported on $\{a, b\}$. Suppose $Y$ takes on $a, b$ with probabilities $(1-p), p$. Straightforward calculation shows
    \[
    D(Y)  = 2(1-p)p(b-a) = \dfrac{2(b-\E[X])(\E[X] - a)}{b-a}.
    \]
\end{proof}

Using the fact that for any real $x$, $(b-a)^2 \geq 4(b-x)(x-a)$ (with equality iff $x = (b+a)/2$) we arrive at the following corollary, analogous to Popoviciu's inequality on variances~\citep{popoviciu1935equations}.
\begin{corollary}
If $X$ is a discrete random variable on $[a, b]$, then
\[
D(x) \leq \frac{1}{2}(b-a),
\]
with equality iff $X$ takes on the values $a$ and $b$, each with probability $1/2$.
\end{corollary}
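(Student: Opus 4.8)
The plan is to derive this directly from Theorem~\ref{thm:better_bound_mad} by bounding the quantity $(b-\E[X])(\E[X]-a)$ that appears in its right-hand side. The key observation is the elementary quadratic inequality already flagged in the text: for any real number $y$, we have $(b-a)^2 \geq 4(b-y)(y-a)$. To see this, note that $(b-a)^2 - 4(b-y)(y-a) = (b-y)^2 - 2(b-y)(y-a) + (y-a)^2 = \big((b-y)-(y-a)\big)^2 = (a+b-2y)^2 \geq 0$, with equality precisely when $y = (a+b)/2$. I would record this identity first, since it supplies both the bound and its equality case.

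Next I would apply Theorem~\ref{thm:better_bound_mad} with $y = \E[X]$ (which lies in $[a,b]$ since $X$ takes values there), obtaining
\[
D(X) \leq \dfrac{2(b-\E[X])(\E[X]-a)}{b-a}.
\]
Substituting the quadratic inequality $4(b-\E[X])(\E[X]-a) \leq (b-a)^2$ into the numerator gives
\[
D(X) \leq \dfrac{2(b-\E[X])(\E[X]-a)}{b-a} \leq \dfrac{(b-a)^2}{2(b-a)} = \dfrac{1}{2}(b-a),
\]
which is the claimed bound. This is a two-line chain, so the computational content is minimal.

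The only part requiring care is the equality characterization, since the bound is a composition of two inequalities and equality in the conclusion forces equality in \emph{both}. By Theorem~\ref{thm:better_bound_mad}, equality in the first step holds iff $X$ is supported on $\{a,b\}$; by the quadratic identity above, equality in the second step holds iff $\E[X] = (a+b)/2$. I would then argue that these two conditions together are equivalent to $X$ being the symmetric two-point distribution: if $X$ is supported on $\{a,b\}$ with $\Pr[X=b]=p$, then $\E[X] = a + p(b-a)$, and setting this equal to $(a+b)/2$ yields $p = 1/2$. Conversely the symmetric two-point law clearly realizes both equalities. I expect this bookkeeping of simultaneous equality conditions to be the main (mild) obstacle, as it is the only place where the two ingredients must be reconciled rather than merely chained.
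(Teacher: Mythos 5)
Your proposal is correct and matches the paper's own route exactly: the paper derives this corollary from Theorem~\ref{thm:better_bound_mad} together with the elementary inequality $(b-a)^2 \geq 4(b-x)(x-a)$ (equality iff $x = (a+b)/2$), and the equality characterization follows by combining the two equality conditions just as you describe. Your write-up is actually more explicit than the paper's one-sentence derivation, particularly in reconciling the simultaneous equality cases to get $p = 1/2$.
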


We can now prove a stronger version of Lemma \ref{lem:opt_dist} when $d = 1$. For simplicity, we assume $\mP$ is a discrete distribution on some finite $\mI$, though the analysis can be generalized to arbitrary distributions.

\begin{lemma}\label{lem:opt_dist_d1}
Let $b = \max_{i \in \mI}\lambda_{\max}(Q_i(\alpha, \gamma, \Theta))$, $a = \min_{i \in \mI}\lambda_{\min}(Q_i(\alpha, \gamma, \Theta))$. Then for $d = 1$,
\[
\norm{x^*(\alpha, \gamma, \Theta)-x^*} \leq 2C\dfrac{\sqrt{b}-\sqrt{a}}{\sqrt{b}+\sqrt{a}}.
\]
\end{lemma}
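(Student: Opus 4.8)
The plan is to reduce the scalar claim to a bound on the mean absolute deviation of the numbers $Q_i := Q_i(\alpha, \gamma, \Theta)$ and then invoke Theorem \ref{thm:better_bound_mad}. First I would write both minimizers in closed form. Since $d = 1$, every quantity in Lemma \ref{lem:opt_point} is a scalar, so with $p_i := \mP(i)$ that lemma gives
\[
x^*(\alpha, \gamma, \Theta) = \frac{\E[Q_i A_i c_i]}{\E[Q_i A_i]}, \qquad x^* = \frac{\E[A_i c_i]}{\E[A_i]},
\]
the second expression following by taking $Q_i \equiv 1$. Both are weighted averages of the $c_i$, and the whole problem becomes controlling how much the minimizer moves when the weights are rescaled by $Q_i$.

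The conceptual step is to expose this discrepancy as a covariance. I would define a probability measure $\nu$ on $\mI$ by $\nu_i \propto p_i A_i$, which is legitimate since $A_i \geq \mu > 0$ by Assumption \ref{assm1}. Then $x^* = \E_\nu[c]$, and a one-line rearrangement shows $x^*(\alpha, \gamma, \Theta) = \E_\nu[Qc]/\E_\nu[Q]$, whence
\[
x^*(\alpha, \gamma, \Theta) - x^* = \frac{\E_\nu[Qc] - \E_\nu[Q]\,\E_\nu[c]}{\E_\nu[Q]} = \frac{\mathrm{Cov}_\nu(Q, c)}{\E_\nu[Q]}.
\]
Writing $m := \E_\nu[Q]$ and using $|c_i| \leq C$ from Assumption \ref{assm2}, I would bound
\[
\abs{\mathrm{Cov}_\nu(Q, c)} = \abs{\E_\nu[(Q - m)c]} \leq C\,\E_\nu[\abs{Q - m}] = C\,D(Q),
\]
where $D(Q)$ is the mean absolute deviation of $Q$ under $\nu$ in the sense of Definition \ref{def:mad}. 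Since each $Q_i$ lies in $[a, b]$ by the definitions of $a$ and $b$, Theorem \ref{thm:better_bound_mad} applies and yields $D(Q) \leq 2(b - m)(m - a)/(b - a)$.

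Combining these bounds gives
\[
\abs{x^*(\alpha, \gamma, \Theta) - x^*} \leq \frac{2C(b - m)(m - a)}{m(b - a)},
\]
and it remains only to maximize the right-hand side over $m \in [a, b]$. A short calculus argument shows that $(b - m)(m - a)/m$ is maximized at $m = \sqrt{ab}$ with value $(\sqrt{b} - \sqrt{a})^2$; substituting and using $(\sqrt{b} - \sqrt{a})^2/(b - a) = (\sqrt{b} - \sqrt{a})/(\sqrt{b} + \sqrt{a})$ produces exactly the claimed bound. The main obstacle is the second step: recognizing that the gap between the two weighted averages is precisely a covariance under the $A$-weighted measure $\nu$, which is what makes the mean-absolute-deviation machinery of Theorem \ref{thm:better_bound_mad} applicable. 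Once that reformulation is in hand, the remaining steps are routine, with the only delicate point being the verification that the optimum at $m = \sqrt{ab}$ recovers the target form (and hence tightness, achieved when $Q$ is supported on $\{a, b\}$ with $\E_\nu[Q] = \sqrt{ab}$).
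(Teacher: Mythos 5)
Your proposal is correct and follows essentially the same route as the paper: the paper also reweights by $q_i = p_iA_i$ (your measure $\nu$), bounds the resulting numerator $\E_\nu[(Q-\E_\nu[Q])c]$ by $C$ times the mean absolute deviation, invokes Theorem \ref{thm:better_bound_mad}, and maximizes $\tfrac{(b-v)(v-a)}{v(b-a)}$ over $v \in [a,b]$ to reach the stated constant. Your covariance framing is just a cleaner name for the identical algebraic manipulation, so the two proofs coincide step for step.
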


\begin{proof}
    Suppose $|\mI| = [n]$ and $\mP$ is the discrete distribution on $\mI$ with associated probabilities $p_1, \ldots, p_n$. Without loss of generality, assume $p_1, \dots, p_n > 0$. For brevity, we will let $Q_i$ denote $Q_i(\alpha, \gamma, \Theta)$. Note that since $d = 1$, $Q_i, A_i, c_i$ are all elements of $\R$. By Assmptions \ref{assm1} and \ref{assm2}, and by definition of $a, b$, we have that for all $i$, $0 < \mu \leq A_i \leq L$, $Q_i \in [a, b]$, and $|c_i| \leq C$. Moreover, by Lemma \ref{lem:strongly_convex}, we must have $a > 0$.
    
    Let $q_i = p_iA_i$. Note that $q_i \geq p_i\mu > 0$. Thus, the $q_i$ define a unique distribution $\mQ$ over $\mI$ where $i$ is sampled with probability proportional to $q_i$. Let $Z$ denote the random variable $Q_i$ where $i \sim \mQ$ and let $v = \E[Z]$. Thus,
    \[
    v = \E[Z] = \dfrac{\sum_{i=1}^n q_iQ_i}{\sum_{i=1}^n q_i} = \dfrac{\sum_{i=1}^n p_iQ_iA_i}{\sum_{i=1}^n p_iA_i}.
    \]
    Since each $Q_i \in [a, b]$, we also have $v \in [a, b]$.By Lemma \ref{lem:opt_point},
    \begin{align*}
        \norm{x^*(\alpha, \gamma, \Theta) - x^*} &= \norm{\dfrac{\sum_{i=1}^n q_iQ_ic_i}{\sum_{i=1}^nq_iQ_i} - \dfrac{\sum_{i=1}^n q_ic_i}{\sum_{i=1}^nq_i}}\\
        &= \norm{\dfrac{\sum_{i=1}^n q_iQ_ic_i - v\sum_{i=1}^nq_ic_i}{\sum_{i=1}^nq_iQ_i}}\\
        &= \norm{\dfrac{\sum_{i=1}^n(Q_i-v)q_ic_i}{\sum_{i=1}^n q_iQ_i}}\\
        &= \norm{\dfrac{\sum_{i=1}^n(Q_i-v)q_ic_i}{\sum_{i=1}^n q_iQ_i}\dfrac{\sum_{i=1}^n q_i}{\sum_{i=1}^nq_i}}\\
        &= \dfrac{\norm{\E_{i \sim \mQ}(Q_i-v)c_i}}{v}\\
        &\leq \dfrac{\E_{i \sim \mQ}[\norm{(Q_i-v)c_i}]}{v}\\
        &\leq \dfrac{C\E[\norm{Z-v}]}{v}.
    \end{align*}
    
    Since $v = \E[Z]$, we can apply Theorem \ref{thm:better_bound_mad}, finding
    \begin{align*}
    \norm{x^*(\alpha, \gamma, \Theta) - x^*} &\leq \dfrac{2C(b-v)(v-a)}{v(b-a)}.
    \end{align*}
    Maximizing the right-hand side for $v \in [a, b]$, we get
    \begin{align*}
        \norm{x^*(\alpha, \gamma, \Theta) - x^*} &\leq 2C\dfrac{\sqrt{b}-\sqrt{a}}{\sqrt{b}+\sqrt{a}}.
    \end{align*}
\end{proof}

\subsection{Tightness of Lemma \ref{lem:opt_dist_d1}}\label{appendix:tight_distance}

In this section, we will show that in some sense, Lemma \ref{lem:opt_dist_d1} is tight. Specifically, we will show that for all $\epsilon > 0$, there are $A_1, A_2 \in \R$, $c_1, c_2 \in \R$ such that $\norm{c_i} \leq 1$, a distribution $\mP$ over $\{1, 2\}$, and $\alpha, \gamma, \Theta$ such that for
\[
a = \min\{\lambda_{\min}(Q_1(\alpha, \gamma, \Theta) | i = 1, 2\},~~~b = \max\{\lambda_{\max}(Q_i(\alpha, \gamma, \Theta) | i=1, 2\},
\]
we have
\[
\norm{x^*(\alpha, \gamma, \Theta) - x^*} \geq 2\dfrac{\sqrt{b}-\sqrt{a}}{\sqrt{b} + \sqrt{a}} - \epsilon.
\]

Let $A_1 = 4, A_2 = 1, c_1 = 1, c_2 = -1$. Let $\alpha = 0, \gamma = 1/8$. Then by \eqref{eq:Q_matrix}, we have
\[
Q_1(\alpha, \gamma, \Theta_K) = \left(\frac{1}{2}\right)^K,~~Q_2(\alpha, \gamma, \Theta_K) = \left(\frac{7}{8}\right)^K.
\]
Therefore, in this setting, $b = (7/8)^K, a = 2^{-K}$. By applying L'Hopital's rule, we find
\begin{equation}\label{eq:tight1}
\lim_{K \to \infty} 2\dfrac{\sqrt{b}-\sqrt{a}}{\sqrt{b} + \sqrt{a}} = 2.
\end{equation}
Let $\mP$ be the distribution that selects $i = 1$ with probability $p$ and $i = 2$ with probability $1-p$. Then
\begin{align*}
\norm{x^*(\alpha, \gamma, \Theta_K) - x^*} &= \left|\dfrac{4pQ_1 - (1-p)Q_2}{4pQ_1 + (1-p)Q_2} - \dfrac{5p-1}{3p+1}\right|\\
&= \left|\dfrac{4p\left(\frac{1}{2}\right)^K -(1-p)\left(\frac{7}{8}\right)^K}{4p\left(\frac{1}{2}\right)^K +(1-p)\left(\frac{7}{8}\right)^K} - \dfrac{5p-1}{3p+1}\right|.
\end{align*}

For any fixed $p \in (0, 1)$, straightforward but tedious applications of L'Hopital's rule yields the fact that
\begin{equation}\label{eq:tight2}
\lim_{K \to \infty}\norm{x^*(\alpha, \gamma, \Theta_K) - x^*} = \dfrac{8p}{3p+1}
\end{equation}
By \eqref{eq:tight1} and \eqref{eq:tight2}, we see that by selecting $K$ sufficiently large and $p$ sufficiently close to 1, we can ensure that
\[
\norm{x^*(\alpha, \gamma, \Theta_K) - x^*} \geq 2-\epsilon
\]
and that
\[
2\dfrac{\sqrt{b}-\sqrt{a}}{\sqrt{b} + \sqrt{a}} \geq 2-\epsilon
\]
thus implying that
\[
\norm{x^*(\alpha, \gamma, \Theta_K) - x^*} \geq 2\dfrac{\sqrt{b}-\sqrt{a}}{\sqrt{b} + \sqrt{a}}-\epsilon.
\]

\subsection{General Case}

In this case, we will use a similar proof strategy. However, we will use a matrix-weighted version of the mean absolute deviation. Suppose we have positive-definite symmetric matrices $X_1, \dots, X_n, Y_1, \dots Y_n \in \R^{d\times d}$ such that for all $i$, $X_i$ and $Y_i$ commute. Let $Y = \sum_{i=1}^n Y_i$.

\begin{definition}\label{def:matrix_mean}
The matrix-weighted mean of $\{X_1, \dots X_n\}$ with respect to $\{Y_1,\dots Y_n\}$ is given by
\[
f(X_1, \dots , X_n | Y_1,\dots, Y_n) := \left(\sum_{i=1}^n X_iY_i\right)Y^{-1}.
\]
\end{definition}
When $d = 1$, and the $Y_i > 0$, this gives the standard mean of a discrete random variable $X$ taking values $X_1, \dots X_n$ with probabilities $Y_1,\dots Y_n$. When the context is clear, we will simply denote this by $f(X | Y)$. We first prove a simple lemma regarding the Loewner ordering and matrix-weighted means.

\begin{lemma}\label{lem:loewner}
Suppose that for all $i$, $aI \preceq X_i \preceq bI$. Then $aI \preceq f(X|Y) \preceq bI$.
\end{lemma}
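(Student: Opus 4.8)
The plan is to reduce the (possibly non-symmetric) matrix-weighted mean to a symmetric matrix via a congruence, and then obtain the two-sided bound by summing termwise Loewner inequalities. First I would record the key structural consequence of the commutativity hypothesis: since $X_i$ and $Y_i$ are symmetric and commute, each product $X_iY_i$ is symmetric, and the hypothesis $aI \preceq X_i \preceq bI$ upgrades to $aY_i \preceq X_iY_i \preceq bY_i$. This holds because $X_i - aI$ and $bI - X_i$ are symmetric positive semi-definite and commute with the positive semi-definite matrix $Y_i$; writing $(X_i - aI)Y_i = (X_i - aI)^{1/2}Y_i(X_i - aI)^{1/2}$ (valid because the matrix square root is a limit of polynomials in $X_i - aI$ and hence commutes with $Y_i$) exhibits this product as a congruence of a positive semi-definite matrix, so it is positive semi-definite, and symmetrically for $(bI - X_i)Y_i$.

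Next I would sum these inequalities over $i$. Because $Y = \sum_{i=1}^n Y_i$ and the Loewner order is additive, this gives $aY \preceq \sum_{i=1}^n X_iY_i \preceq bY$.

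The final step deals with the fact that $f(X \mid Y) = \left(\sum_{i=1}^n X_iY_i\right)Y^{-1}$ need not be symmetric. I would conjugate by $Y^{1/2}$ to observe that $f(X \mid Y)$ is similar to the symmetric matrix $S := Y^{-1/2}\left(\sum_{i=1}^n X_iY_i\right)Y^{-1/2}$, via $Y^{-1/2} f(X \mid Y) Y^{1/2} = S$. Applying the congruence $M \mapsto Y^{-1/2} M Y^{-1/2}$ (which preserves the Loewner order, since $Y^{-1/2}$ is symmetric) to the inequality from the previous paragraph yields $aI \preceq S \preceq bI$. As $f(X \mid Y)$ and $S$ share the same spectrum, the eigenvalues of $f(X \mid Y)$ all lie in $[a,b]$, which is the assertion $aI \preceq f(X \mid Y) \preceq bI$.

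The main obstacle is precisely the non-symmetry of $f(X \mid Y)$: the Loewner order is a priori defined only for symmetric matrices, so the statement must be understood (and is proved) through the similar symmetric matrix $S$, equivalently as a bound on the real spectrum of $f(X \mid Y)$. The only other point needing care is the standard fact that a product of two commuting symmetric positive semi-definite matrices is positive semi-definite, which is exactly what the commutativity hypothesis on $X_i$ and $Y_i$ supplies.
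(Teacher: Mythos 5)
Your proof is correct and follows essentially the same route as the paper's: both establish the termwise bounds $aY_i \preceq X_iY_i \preceq bY_i$ by exploiting commutativity to write the product as a congruence of a positive semi-definite matrix, sum over $i$, conjugate by $Y^{-1/2}$, and conclude via similarity that the spectrum of $f(X|Y)$ lies in $[a,b]$. The only cosmetic difference is that you congruence with $(X_i - aI)^{1/2}$ while the paper congruences with $Y_i^{1/2}$ (writing $X_iY_i = Y_i^{1/2}X_iY_i^{1/2}$); your explicit remark on interpreting the Loewner bound for the possibly non-symmetric $f(X|Y)$ through the similar symmetric matrix is a welcome clarification of what the paper leaves to ``basic properties of matrix similarity.''
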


This generalizes the fact that if $X$ is a random variable taking values in $[a, b]$, then $a \leq \E[X] \leq b$ to symmetric positive-definite matrices.

\begin{proof}
Since the $X_i, Y_i$ are commuting positive definite matrices and $Y$ is positive definite, $f(X|Y)$ is similar to the matrix
\[
P = Y^{-1/2}\left(\sum_{i=1}^n Y_i^{1/2}X_iY_i^{1/2}\right)Y^{-1/2}.
\]
By assumption, $X_i \preceq bI$. Therefore,
\[
Y_i^{1/2}X_iY_i^{1/2} \preceq Y_i^{1/2}bIY_i^{1/2} = bY_i.
\]
Hence, we have
\[
\sum_{i=1}^n Y_i^{1/2}X_iY_i^{1/2} \preceq bY.
\]
Thus,
\[
P \preceq bY^{-1/2}YY^{-1/2} = bI.
\]
An analogous argument shows that $P \succeq aI$. By basic properties of matrix similarity, we therefore find $aI \preceq f(X|Y) \preceq bI$.
\end{proof}

We can use this matrix-weighted mean to define a normalized, matrix-weighted version of the mean absolute deviation.

\begin{definition}\label{def:matrix_mad}
The normalized matrix-weighted discrepancy of $\{X_1, \dots X_n\}$ with respect to $\{Y_1,\dots Y_n\}$ is given by
\[
M(X_1, \dots X_n | Y_1, \dots, Y_n) = \sum_{i=1}^n\norm{Y^{-1}f(X | Y)^{-1}(X_i-f(X | Y))Y_i}
\]
where $\norm{\cdot}$ is the operator norm.
\end{definition}

Note that when $d = 1$, $M(X | Y) = D(X)/|\E[X]|$, where $D(X)$ is as in Definition \ref{def:mad} and $X$ takes on values $X_1, \dots X_n > 0$ with probabilities $Y_1, \dots Y_n > 0$.

We will prove an analog of Theorem \ref{thm:better_bound_mad} for this normalized matrix-weighted discrepancy.

\begin{theorem}\label{thm:matrix_mad}
Let $X_1, \dots X_n$ be symmetric matrices in $\R^{d\times d}$ satisfying $aI \preceq X_i \preceq bI$ for all $i$, and suppose $Y_1, \dots Y_n$ are symmetric, positive-definite matrices in $\R^{d\times d}$. Then
\[
M(X_1, \dots X_n | Y_1,\dots Y_n) \leq \dfrac{2(b-a)}{b}.
\]
\end{theorem}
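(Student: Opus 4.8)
The plan is to mirror, in the matrix setting, the scalar chain of inequalities that yields $2(b-a)/b$, using the symmetrizing similarity behind Lemma~\ref{lem:loewner} to reduce everything to symmetric positive-definite pieces. Write $\mu := f(X\mid Y)$ and $Y=\sum_i Y_i$. From $\sum_i X_iY_i=\mu Y$ I immediately obtain the \emph{mean-zero identity} $\sum_i (X_i-\mu)Y_i=0$, and hence $\sum_i(X_i-aI)Y_i=(\mu-aI)Y$. As in the proof of Lemma~\ref{lem:loewner}, since each $X_i$ commutes with $Y_i$ the matrix $\mu$ is similar, via conjugation by $Y^{1/2}$, to the symmetric matrix $P:=Y^{-1/2}\big(\sum_i Y_i^{1/2}X_iY_i^{1/2}\big)Y^{-1/2}$ satisfying $aI\preceq P\preceq bI$; thus $\mu=Y^{1/2}PY^{-1/2}$, $\mu^{-1}=Y^{1/2}P^{-1}Y^{-1/2}$, and $Y^{-1}\mu^{-1}=Y^{-1/2}P^{-1}Y^{-1/2}$.

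Recall the scalar template I want to reproduce. For $d=1$ one has $|X_i-\mu|\le (X_i-a)+(\mu-a)$ because $X_i,\mu\ge a$, so $\sum_i|X_i-\mu|Y_i \le \sum_i(X_i-a)Y_i+(\mu-a)\sum_iY_i = 2(\mu-a)Y$ by the mean-zero identity; dividing by $\mu Y$ gives $M=\sum_i|Y^{-1}\mu^{-1}(X_i-\mu)Y_i| \le 2(\mu-a)/\mu = 2(1-a/\mu)\le 2(b-a)/b$, the last step using $\mu\le b$. The entire content of the generalization is to carry the two ingredients — the endpoint decomposition of the deviation and the cancellation supplied by the mean-zero identity — past the non-commutativity of the matrices.

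To execute this I would conjugate each summand by $Y^{1/2}$. With $W_i:=(X_i-aI)Y_i\succeq 0$ (a product of commuting PSD matrices, hence symmetric PSD), the ``$(X_i-a)$'' half of a term becomes $Y^{-1/2}P^{-1}\tilde W_i Y^{1/2}$, where $\tilde W_i:=Y^{-1/2}W_iY^{-1/2}\succeq 0$ and, crucially, $\sum_i \tilde W_i = Y^{-1/2}(\mu-aI)Y\,Y^{-1/2} = P-aI$, which obeys $0\preceq P-aI\preceq (b-a)I$. The ``$(\mu-a)$'' half reduces to the fixed factor $I-aP^{-1}$, for which $0\preceq I-aP^{-1}\preceq \tfrac{b-a}{b}I$ follows from $aI\preceq P\preceq bI$. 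These two spectral facts are the matrix replacements for ``$(X_i-a)Y_i\ge 0$ summing to $(\mu-a)Y$'' and for ``$(\mu-a)/\mu\le (b-a)/b$''.

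The main obstacle — and where a naive argument breaks — is that the operator norm is \emph{not} additive across the non-commuting summands, so simply splitting $X_i-\mu=(X_i-aI)-(\mu-aI)$ and applying the triangle inequality term by term over-counts: one can inflate $\sum_i\norm{\cdot}$ by a factor up to $d$ using rank-one, mutually orthogonal $Y_i$, and the only configurations saturating this inflation turn out to be the degenerate ones (e.g.\ $\mu=bI$) where $M=0$. The resolution must therefore use the mean-zero identity as genuine \emph{cancellation}, exploiting the joint constraints $aI\preceq X_i$ and $\mu=f(X\mid Y)$, rather than bounding two independent nonnegative parts. Concretely, I would express $\norm{Y^{-1}\mu^{-1}(X_i-\mu)Y_i}$ through the variational form $\norm{B}=\sup_{\norm{u}=\norm{v}=1}u^{\intercal}Bv$ and pair the ``above-$\mu$'' and ``below-$\mu$'' contributions in a common test direction, exactly as $\sum_i|X_i-\mu|Y_i=2\sum_{X_i\ge\mu}(X_i-\mu)Y_i$ in the scalar case, so that the relevant sum is governed by $\norm{P^{-1}(P-aI)}=\norm{I-aP^{-1}}\le \tfrac{b-a}{b}$, with the factor $2$ accounting for the two deviation directions. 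Verifying that this pairing survives the $Y^{1/2}$-conjugation, making the final bound independent of both $n$ and $d$, is the crux of the argument.
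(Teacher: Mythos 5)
Your proposal identifies the right obstacle but does not overcome it, so there is a genuine gap. The scalar template you invoke works because, for numbers, the summands $(X_i-a)Y_i$ are nonnegative, hence $\sum_i \abs{(X_i-a)Y_i} = \abs{\sum_i (X_i-a)Y_i} = (\mu-a)Y$: the mean-zero cancellation happens inside a single absolute value. In the matrix setting, $M$ is by definition a \emph{sum of operator norms}, and each norm $\norm{Y^{-1}f(X|Y)^{-1}(X_i-f(X|Y))Y_i}$ attains its supremum at its own pair of unit vectors $(u_i,v_i)$. Your proposed fix --- evaluating all terms in a common test direction --- goes in the wrong direction: for any fixed unit vectors $u,v$ one only gets $\sum_i u^{\intercal}B_iv \leq \sum_i \norm{B_i}$, a \emph{lower} bound on the quantity you must upper-bound, so a common test direction cannot transfer the cancellation from the mean-zero identity to the sum of norms. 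Moreover, the scalar identity $\sum_i\abs{X_i-\mu}Y_i = 2\sum_{X_i\geq\mu}(X_i-\mu)Y_i$ has no matrix analog, since $X_i - f(X|Y)$ is in general indefinite and there is no partition into ``above-mean'' and ``below-mean'' classes. You acknowledge that verifying this pairing ``is the crux of the argument''; that crux is precisely what is missing, and the mechanism you sketch for it cannot work as stated.

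The paper's proof avoids this difficulty entirely by a different route: an extremal replacement argument mirroring Theorem \ref{thm:better_bound_mad}. Any pair $(X_1,Y_1)$ with $X_1 \notin \{aI,bI\}$ is replaced by the two pairs $(aI,S)$ and $(bI,T)$ with $S = \tfrac{bI-X_1}{b-a}Y_1$ and $T = \tfrac{X_1-aI}{b-a}Y_1$; this preserves $\sum_i Y_i$ and $f(X|Y)$, and can only \emph{increase} $M$ --- here the triangle-type inequality $\norm{T_1-T_2}\leq\norm{T_1}+\norm{T_2}$ for positive semi-definite $T_1,T_2$ is used in the direction that is actually available (to lower-bound the new configuration, not to upper-bound the original one). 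Iterating reduces everything to the two-matrix configuration $\{(aI,C_1),(bI,C_2)\}$, where the sum has only two terms and can be computed explicitly, yielding $M \leq 2(b-a)/\bigl((1+q)(ap+b)\bigr) \leq 2(b-a)/b$. To complete your approach you would need a genuinely new idea replacing the sum-of-norms cancellation step; as written, the proposal is a plan whose decisive step is both unproven and, in the form described, unworkable.
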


To prove this, we will require a straightforward lemma regarding eigenvalues of symmetric positive definite matrices.
\begin{lemma}\label{lem:pd_inv}
Let $P_1, P_2$ be symmetric positive definite matrices and let $P = P_1 + P_2$. Then
\[
\norm{P_1P^{-1}} \leq 1.
\]
\end{lemma}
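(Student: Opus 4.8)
The plan is to reduce the claim to a statement about a symmetric matrix trapped between $0$ and $I$ in the Loewner order, reusing the conjugation device from Lemma \ref{lem:loewner}. Since $P_2 \succ 0$ we have $P_1 = P - P_2 \preceq P$, and conjugating by $P^{-1/2}$ (which preserves $\preceq$) yields
\[
0 \preceq P^{-1/2} P_1 P^{-1/2} \preceq P^{-1/2} P P^{-1/2} = I.
\]
Hence the symmetric matrix $S := P^{-1/2} P_1 P^{-1/2}$ is positive semidefinite with every eigenvalue in $[0,1]$, so $\norm{S} \leq 1$ is immediate. I would then relate $S$ to the matrix of interest via the identity
\[
P_1 P^{-1} = P^{1/2}\,(P^{-1/2} P_1 P^{-1/2})\,P^{-1/2} = P^{1/2} S P^{-1/2},
\]
which exhibits $P_1 P^{-1}$ as similar to $S$; in particular $P_1 P^{-1}$ shares the eigenvalues of $S$, all lying in $[0,1]$, so its spectral radius is at most $1$. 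This eigenvalue bound is the clean true core of the statement and is what the surrounding argument (the chain $\norm{\cdot}$ estimates feeding into Theorem \ref{thm:matrix_mad}) actually seems to require.

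The step I expect to be the main obstacle is passing from this eigenvalue control to the literal operator-norm bound. The spectral norm is the largest singular value, not the spectral radius, and it is \emph{not} invariant under the similarity $S \mapsto P^{1/2} S P^{-1/2}$, so knowing $S \preceq I$ does not by itself force $\norm{P_1 P^{-1}} \leq 1$. Equivalently, squaring gives $\norm{P_1 P^{-1}}^2 = \lambda_{\max}\!\big(P^{-1} P_1^2 P^{-1}\big)$, so the claim is exactly $P_1^2 \preceq P^2$, i.e. $P_2 P_1 + P_1 P_2 + P_2^2 \succeq 0$. Because $t \mapsto t^2$ is not operator monotone, $P_1 \preceq P$ does not upgrade to $P_1^2 \preceq P^2$ for arbitrary non-commuting $P_1, P_2$; this squared inequality is the genuinely delicate point rather than a routine consequence of $P_1 \preceq P$.

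I would therefore close the gap using the hypothesis available in the intended application: whenever $P_1$ and $P_2$ commute (as is the case for the matrices built from the commuting $X_i, Y_i$ in Theorem \ref{thm:matrix_mad}), the matrices $P_1, P_2, P$ are simultaneously diagonalizable, so $P^{1/2}$ commutes with $S$, the product $P_1 P^{-1} = S$ is itself symmetric and diagonal in the common eigenbasis with entries $\lambda/(\lambda+\mu) \in (0,1)$ (for paired eigenvalues $\lambda$ of $P_1$ and $\mu$ of $P_2$), and the inequality $P_1^2 \preceq P^2$ holds eigenvalue-by-eigenvalue. Thus the plan is: (i) establish $0 \preceq S \preceq I$ by Loewner conjugation; (ii) record the similarity $P_1 P^{-1} = P^{1/2} S P^{-1/2}$, giving spectral radius $\leq 1$ in full generality; and (iii) invoke commutativity to identify $P_1 P^{-1}$ with the symmetric matrix $S$, so that the eigenvalue bound becomes the desired norm bound. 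I would flag explicitly that step (iii) is where commutativity (or some comparable additional structure) is essential, since the plain operator-norm inequality does not hold for unrestricted symmetric positive definite $P_1, P_2$.
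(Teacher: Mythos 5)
Your proposal follows the same route as the paper's own proof---conjugation by $P^{-1/2}$ and the Loewner bound $0 \preceq P^{-1/2}P_1P^{-1/2} \preceq I$---but the ``main obstacle'' you flag is not hypothetical: it is exactly the step on which the paper's proof breaks. The paper concludes $\norm{P_1P^{-1}} = \norm{P^{-1/2}P_1P^{-1/2}}$ on the grounds that the two matrices are similar, but similarity preserves eigenvalues, not singular values, and $P_1P^{-1}$ is not symmetric unless $P_1$ and $P_2$ commute. Your diagnosis that the operator-norm claim is equivalent to $P_1^2 \preceq P^2$, which can fail since squaring is not operator monotone, is correct, and the failure is realizable: take
\[
P_1 = \begin{pmatrix} 1 & 0 \\ 0 & \tfrac{1}{10} \end{pmatrix}, \qquad P_2 = \frac{1}{20}\begin{pmatrix} 1 & 1 \\ 1 & 2 \end{pmatrix}, \qquad \text{so that} \qquad P_1P^{-1} = \frac{2}{83}\begin{pmatrix} 40 & -10 \\ -1 & 21 \end{pmatrix}.
\]
The largest singular value of $P_1P^{-1}$ is $\approx 1.007 > 1$ (equivalently, $P^2 - P_1^2$ has a negative eigenvalue), even though both of its eigenvalues lie in $(0,1)$. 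Since the paper explicitly defines $\norm{\cdot}$ as the spectral norm for matrices, the lemma as stated is false in general; only the spectral-radius bound, which your step (ii) isolates, holds unconditionally. So your proposal is not merely a careful rederivation of the paper's argument---it is a correct proof of a corrected statement (spectral radius in general, operator norm under commutativity), whereas the paper's argument establishes only the spectral-radius version while asserting the norm version.

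One caution on your step (iii): commuting $P_1, P_2$ do make the norm bound true (then $P_1$ commutes with $P^{-1/2}$, so $P_1P^{-1} = P^{-1/2}P_1P^{-1/2} = S$ is symmetric), but your parenthetical claim that this hypothesis is available in the intended application deserves more scrutiny. In Theorem \ref{thm:matrix_mad}, each $X_i$ commutes with its own $Y_i$, but distinct $Y_i$'s need not commute with one another; the matrices playing the role of $P_1$ and $P_2$ in the paper's final computation (for instance $C_1$ and $C_2$, each a sum of different $Y_i''$'s) are therefore not obviously commuting, and the same radius-versus-norm conflation reappears in downstream identities such as $\norm{C^{-1}C_1} = \lambda_{\min}(CC_1^{-1})^{-1}$. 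In other words, your patch repairs the lemma, but it does not automatically repair every use the paper makes of it; flagging that distinction explicitly would make your analysis complete.
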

\begin{proof}
By basic properties of the Loewner ordering,
\[
P_1 \preceq P \implies P^{-1/2}P_1P^{-1/2} \preceq P^{-1/2}PP^{-1/2} = I.
\]
Since $P_1P^{-1}$ is similar to $P^{-1/2}P_1P^{-1/2}$, we have
\[
\norm{P_1P^{-1}} = \norm{P^{-1/2}P_1P^{-1/2}} \leq 1.
\]
\end{proof}

We can now prove Theorem \ref{thm:matrix_mad}.

\begin{proof}[Proof of Theorem \ref{thm:matrix_mad}]
We will proceed in a similar manner to the proof of Theorem \ref{thm:better_bound_mad}. We will first show that we can always find a set of symmetric positive definite matrices $X_1', \dots, X_m', Y_1', \dots, Y_m'$ such that:
\begin{enumerate}
    \item For all $i$, $aI \preceq X_i' \preceq bI$.
    \item For all $i$, $X_i', Y_i'$ commute.
    \item $\sum_i Y_i' = \sum_i Y_i$.
    \item $f(X' |Y') = f(X | Y)$.
    \item $M(X' |Y') \geq M(X | Y)$.
    \item For $t = |\{X_1, \dots , X_n\}\backslash\{aI, bI\}|$, we have $|\{X_1', \dots , X_m'\}\backslash\{aI, bI\}| \leq \max\{0, t-1\}$.
\end{enumerate}

By iterating this procedure, we can replace $X_1, \dots, X_n, Y_1, \dots, Y_n$ with matrices $X''_1,\dots, X''_l, Y''_1,\dots, Y''_l$ where the $X''_i$ are all in the set $\{aI, bI\}$. It will then suffice to show that $M(X''|Y'')$ satisfies the desired bound, which we do by a somewhat direct computation, though one that is made much easier due to the fact that $aI, bI$ are diagonal.

We now proceed in detail. Define the matrices
\[
S := \dfrac{bI-X_1}{b-a}Y_1,~~~T:= \dfrac{X_1-aI}{b-a}Y_1.
\]

Note that since $X_1, Y_1$ commute, $S$ and $T$ are products of symmetric, positive definite, commuting matrices. They are therefore symmetric, positive definite, commuting matrices as well. One can easily verify that
\begin{equation}\label{eq:Y_cond1}
S + T = Y_1
\end{equation}
and
\begin{equation}\label{eq:Y_cond2}
aS + bT = X_1Y_1.
\end{equation}

By \eqref{eq:Y_cond1} we have
\begin{equation}\label{eq:Y_cond3}
S + Y_2 + \dots + Y_n + T = Y.
\end{equation}
and combining \eqref{eq:Y_cond2} and \eqref{eq:Y_cond3}, we have
\begin{equation}\label{eq:Y_cond4}
f(aI, X_2, \dots, X_n, bI | S, Y_2, \dots Y_n, T) = f(X_1, \dots, X_n | Y_1, \dots, Y_n).
\end{equation}
We will use $Z$ to denote the matrices in \eqref{eq:Y_cond4}. Note that by Lemma \ref{lem:loewner}, we know that $aI \preceq Z \preceq bI$. We will show that this replacement of $(X_1, Y_1)$ by $(aI, S)$ and $(bI, T)$ does not decrease the mean absolute deviation. By \eqref{eq:Y_cond3} and \eqref{eq:Y_cond4},
\begin{align*}
    & M(aI, X_2, \dots, X_n, bI | S, Y_2, \dots, Y_n, T) - M(X_1,\dots, X_n | Y_1, \dots, Y_n)\\
    &= \norm{Y^{-1}Z^{-1}(Z-aI)S} + \norm{Y^{-1}Z^{-1}(bI-Z)T} - \norm{Y^{-1}Z^{-1}(Z-X_1)Y_1}\\
    &= \dfrac{\norm{Y^{-1}Z^{-1}(Z-aI)(bI-X_1)Y_1} + \norm{Y^{-1}Z^{-1}(bI-Z)(X_1-aI)Y_1} - \norm{Y^{-1}Z^{-1}(bI-aI)(Z-X_1)Y_1}}{b-a}\\
\end{align*}

Define
\begin{align*}
    T_1 &= Y^{-1}Z^{-1}(Z-aI)(bI-X_1)Y_1,\\
    T_2 &= Y^{-1}Z^{-1}(bI-Z)(X_1-aI)Y_1,\\
    T_3 &= Y^{-1}Z^{-1}(bI-aI)(Z-X_1)Y_1.
\end{align*}

Note that since $aI \preceq Z \preceq bI, aI\preceq X_1 \preceq bI$ and the $Y_i$ are positive definite, $T_1$ and $T_2$ are positive semi-definite matrices. Simple algebraic manipulation implies that $T_3 = T_1 - T_2$. Since $T_1, T_2$ are positive definite matrices, we have
\[
\norm{T_1-T_2} \leq \max\left\{ \lambda_{\max}(T_1) - \lambda_{\min}(T_2), \lambda_{\max}(T_2) - \lambda_{\min}(T_1)\right\} \leq \lambda_{\max}(T_1) + \lambda_{\max}(T_2) = \norm{T_1} + \norm{T_2}.
\]
Thus, 
\[
M(aI, X_2, \dots, X_n, bI | S, Y_2, \dots, Y_n, T) \geq M(X_1,\dots, X_n | Y_1, \dots, Y_n).
\]

We therefore exhibit exactly the matrices satisfying properties (1)-(6) described above. By iterating this procedure, we obtain positive definite, symmetric matrices $(X_1'',\dots X_m''), (Y_1'',\dots, Y_m'')$ such that $X_i'', Y_i''$ commute, each $X_i''$ is equal to $aI$ or $bI$, and such that
\[
M(X_1'', \dots, X_m'' | Y_1'' \dots Y_m'') \geq M(X_1, \dots X_n | Y_1 \dots Y_n).
\]

By consolidating $X_i''$ that are equal, we can assume without loss of generality that we have matrices $(aI, bI)$ with associated symmetric positive definite matrices $(C_1, C_2)$. Let $C = C_1 + C_2$, and let $R = aC_1 + bC_2$. We then have
\begin{align*}
f(aI, bI | C_1, C_2) = RC^{-1}.
\end{align*}
Let $Z = RC^{-1}$. Then by direct computation,
\begin{align*}
    M(aI, bI | C_1, C_2) &= \norm{R^{-1}(aI-Z)C_1} + \norm{R^{-1}(bI-Z)C_2}\\
    &= \norm{(aR^{-1} - C^{-1})C_1} + \norm{(C^{-1}-bR^{-1})C_2}.
\end{align*}
After some straightforward but tedious algebraic manipulation, we find
\begin{align*}
    M(aI, bI | C_1, C_2) &= (b-a)\norm{R^{-1}C_2C^{-1}C_1} + (b-a)\norm{C^{-1}C_1R^{-1}C_2}\\
    &\leq (b-a)\norm{R^{-1}C_2}\norm{C^{-1}C_1} + (b-a)\norm{C^{-1}C_1}\norm{R^{-1}C_2}.
\end{align*}

Since $C_1, C_2$ are symmetric positive definite matrices, we have
\[
\norm{C^{-1}C_1} = \lambda_{\min}(CC_1^{-1})^{-1} = \dfrac{1}{\lambda_{\min}(I + C_2C_1^{-1})} = \dfrac{1}{1+ \lambda_{\min}(C_2C_1^{-1})}.
\]
An analogous computation shows that
\[
\norm{R^{-1}C_2} = \dfrac{1}{a\lambda_{\min}(C_1C_2^{-1}) + b}.
\]

Letting $p, q$ denote $\lambda_{\min}(C_2C_1^{-1}), \lambda_{\min}(C_1C_2^{-1})$ respectively, and noting that we therefore have $p, q > 0$, we have
\begin{align*}
    M(aI, bI | C_1, C_2) &\leq \dfrac{2(b-a)}{(1+q)(ap+b)} \leq \dfrac{2(b-a)}{b}.
\end{align*}

\end{proof}

We can now prove Lemma \ref{lem:opt_dist}.

\begin{proof}
Suppose $|\mI| = n$ and $\mP$ is the discrete distribution on $\mI$ with associated probabilities $p_i$. For brevity, we will let $Q_i$ denote $Q_i(\alpha, \gamma, \Theta)$. We will let $Y_i = p_iA_i$ and $Y = \sum_{i=1}^n Y_i$. By Lemma \ref{lem:opt_point}, we have
\begin{align*}
    \norm{x^*(\alpha, \gamma, \Theta) - x^*} &= \norm{\left(\sum_{i=1}^n Q_iY_i\right)^{-1}\left(\sum_{i=1}^n Q_iY_ic_i\right) + \left(\sum_{i=1}^n Y_i\right)^{-1}\left(\sum_{i=1}^n Y_ic_i\right)}\\
    &= \norm{\left(\sum_{i=1}^n Q_iY_i\right)^{-1}\left(\left( \sum_{i=1}^n Q_iY_ic_i\right) + \left(\sum_{i=1}^n Q_iY_i\right)Y^{-1}\left(\sum_{i=1}^n Y_ic_i\right)\right)}\\
    &= \norm{\left(\sum_{i=1}^n Q_iY_i\right)^{-1}\left(\sum_{i=1}^n (Q_i-f(Q | Y))Y_ic_i \right)}\\
    &\leq \sum_{i=1}^n \norm{Y^{-1}f(Q|Y)^{-1}(Q_i-f(Q|Y))Y_i}\norm{c_i}\\
    &\leq CM(Q|Y).
\end{align*}

Here we used Lemma \ref{lem:eig_Q}, which in particular shows that since $\gamma < (L+\alpha)^{-1}$, all the $Q_i$ are positive definite. Moreover, Lemma \ref{lem:eig_Q} shows that $Q_i, A_i$ share the same eigenvectors, and therefore commute with one another. Hence, the $Q_i$ commute with the $Y_i$. Moreover, by Assumption \ref{assm1}, the $Y_i$ are positive definite symmetric matrices, and by Assumption \ref{assm2}, $\norm{c_i} \leq C$. Applying Theorem \ref{thm:matrix_mad}, we have
\begin{align*}
    \norm{x^*(\alpha, \gamma, \Theta) - x^*} &\leq 2C\dfrac{b-a}{b} \leq 8C\dfrac{\sqrt{b}-\sqrt{a}}{\sqrt{b}+\sqrt{a}}
\end{align*}

The last inequality holds from simple algebraic manipulation.

\end{proof}

\section{Additional Pareto Frontiers}\label{appendix:extra_comparisons}

\subsection{Simulated \maml-style Pareto Frontiers}\label{appendix:simulate_maml}

In order to plot Pareto frontiers for \maml-style methods ($\Theta = \Theta_K$) when $\gamma > (KL+\alpha)^{-1}$, we generate random symmetric matrices $A \in \R^{d\times d}$ satisfying $\mu I \preceq A \preceq L I$. We then compute the associated matrix $Q(\alpha, \gamma, \Theta_K)$. Once we have these matrices, we can compute the condition number $\kappa$ of $Q(\alpha, \gamma, \Theta_K)A$. We can then compute $\rho, \Delta$ by substituting $\kappa$ into Table \ref{table:conv_local_update} and Lemma \ref{lem:opt_dist}.

To generate $A$, we generate $B$ by sampling its entries independently from $\mathcal{N}(0, 1)$. We then set $A = \beta_1B^{\intercal}B + \beta_2I$, where $\beta_1, \beta_2$ are the unique scalars such that
\[
\mu = \lambda_{\min}(A) \leq \lambda_{\max}(A) = L.
\]

We plot the resulting Pareto frontiers for varying $L$ and fixed $\mu$ in Figure \ref{fig:extended_sim_maml}. We see that as $L$ increases with respect to $\mu$, the discrepancy between the \maml curves and the \fedavg curves grows. In particular, for small $L/\mu$, we see that the \maml curve recovers most of the \fedavg curve before diverging, while for large $L/\mu$, the two diverge almost immediately. Again, we see that when $d = 5$, there is some noise in $(\rho, \Delta)$, which seems to approach some limiting behavior for $d = 100$.

\begin{figure}[ht]
\centering
    \begin{subfigure}{.4\linewidth}
    \centering
    \includegraphics[width=\linewidth]{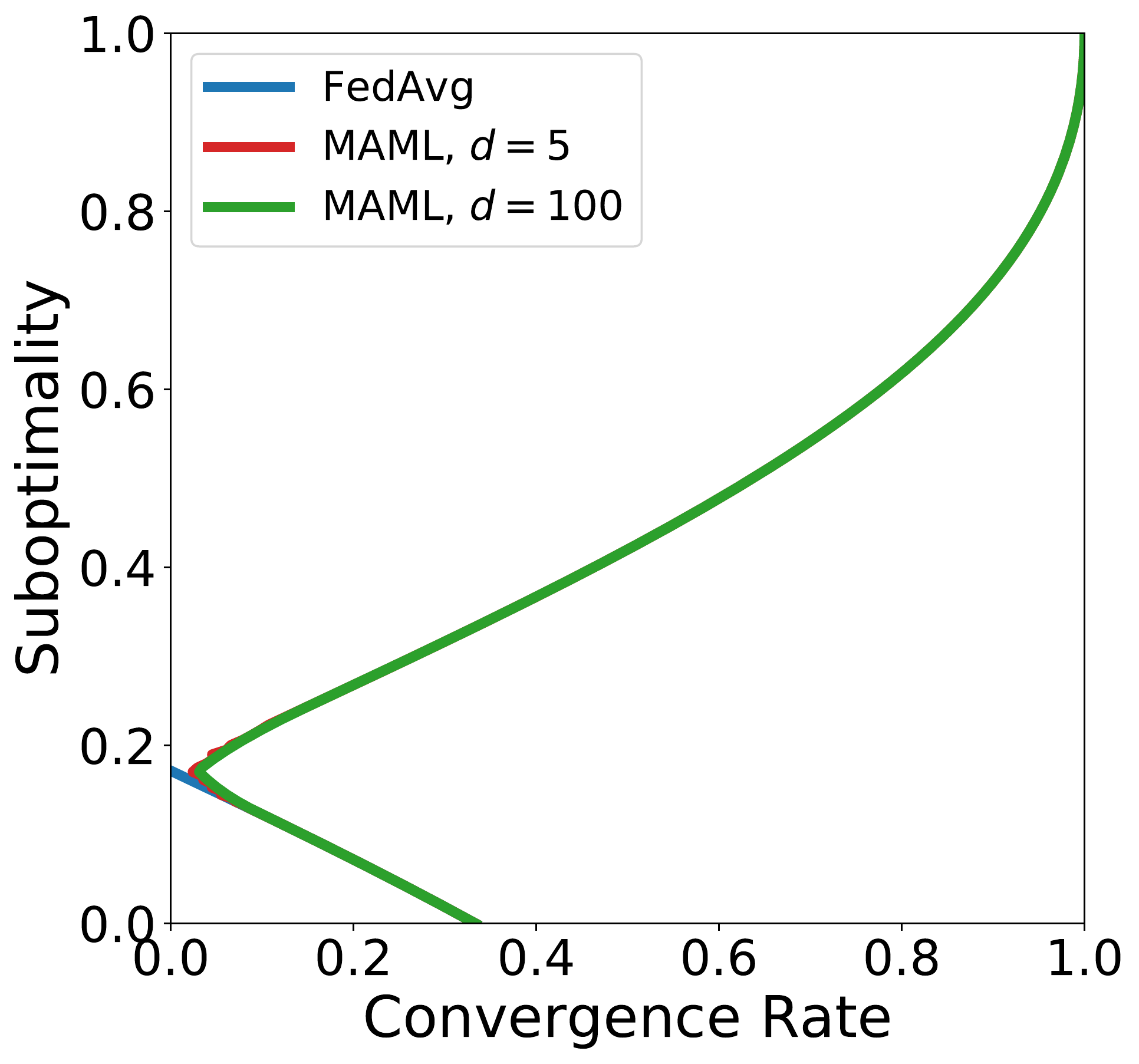}
    \caption{$L = 2$}
    \end{subfigure}
    \begin{subfigure}{.4\linewidth}
    \centering
    \includegraphics[width=\linewidth]{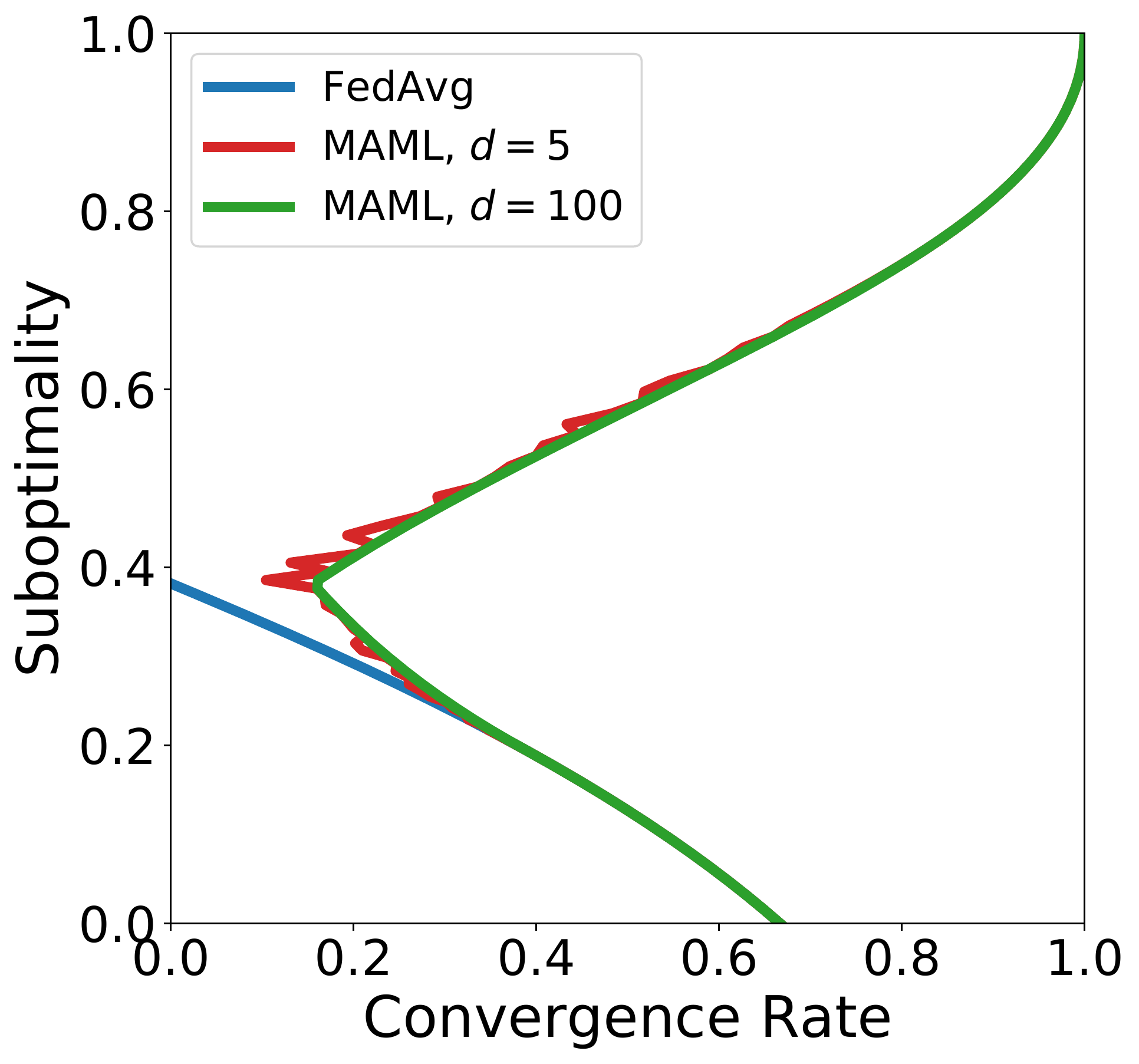}
    \caption{$L = 5$}
    \end{subfigure}
    \begin{subfigure}{.4\linewidth}
    \centering
    \includegraphics[width=\linewidth]{figures/simulated_maml_gradient_k10.pdf}
    \caption{$L = 10$}
    \end{subfigure}
    \begin{subfigure}{.4\linewidth}
    \centering
    \includegraphics[width=\linewidth]{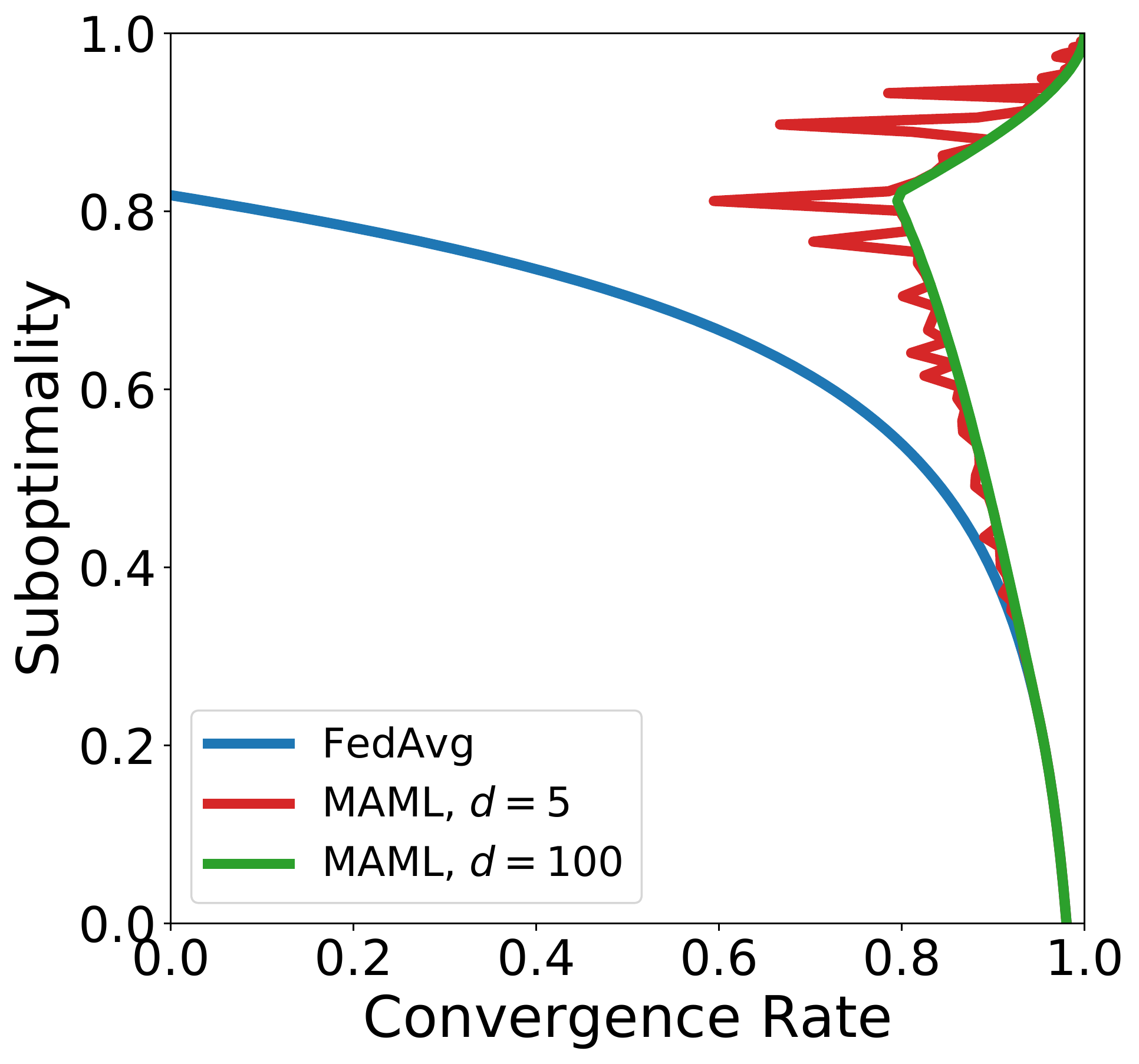}
    \caption{$L = 100$}
    \end{subfigure}
\caption{Simulated \maml Pareto frontiers for $\mu = 1, \gamma = 0.001, \Theta = \Theta_K$, varying $L$, and where \ServerOpt is gradient descent. We vary $L \in \{2, 5, 10, 100\}$ and $K \in [1, 10^6]$. We randomly generate $A \in \R^{d\times d}$ with $\mu I \preceq A \preceq LI$ and compute the associated $(\rho, \Delta)$. We also plot the Pareto frontier for $\Theta_{1:K}$ and the same $L$.}
\label{fig:extended_sim_maml}
\end{figure}

We perform a similar experiment, but where we fix $L = 10$ and vary \ServerOpt over gradient descent with no momentum, with Nesterov momentum, and with heavy-ball momentum. The results are given in Figure \ref{fig:sim_maml_momentum}. While the differences are not huge, we see that momentum helps convergence in all cases, \fedavg or \maml. Moreover, we see an interesting phenomenon where the type of momentum changes the concavity of the \maml Pareto frontier for $d = 100$. As we add momentum, the region to the right of the \maml curve becomes more convex, becoming more rounded for heavy-ball momentum than for Nesterov momentum.

\begin{figure}[ht]
\centering
    \begin{subfigure}{.31\linewidth}
    \centering
    \includegraphics[width=\linewidth]{figures/simulated_maml_gradient_k10.pdf}
    \caption{\ServerOpt: Gradient descent with no momentum}
    \end{subfigure}
    \begin{subfigure}{.31\linewidth}
    \centering
    \includegraphics[width=\linewidth]{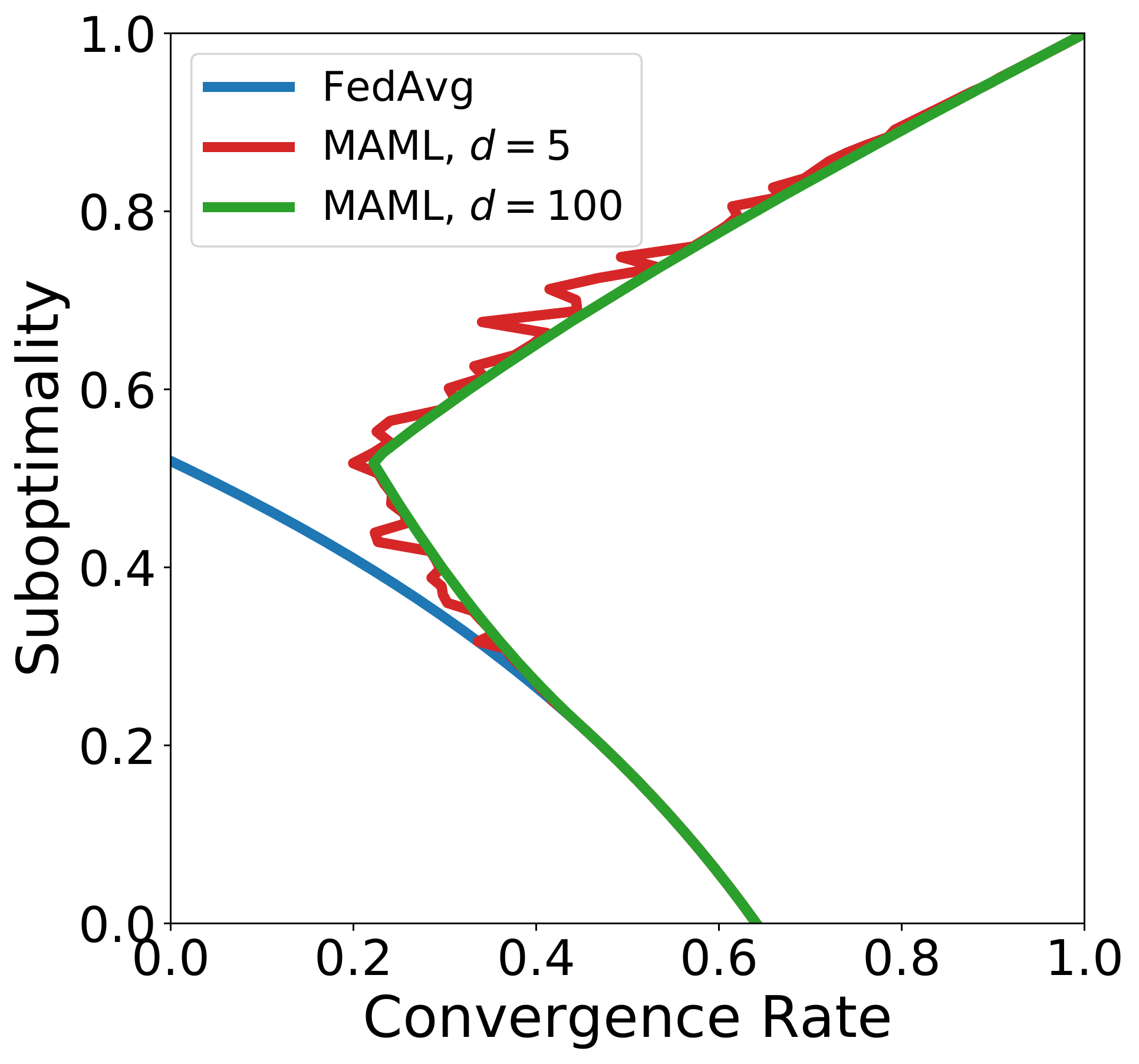}
    \caption{\ServerOpt: Gradient descent with Nesterov momentum}
    \end{subfigure}
    \begin{subfigure}{.31\linewidth}
    \centering
    \includegraphics[width=\linewidth]{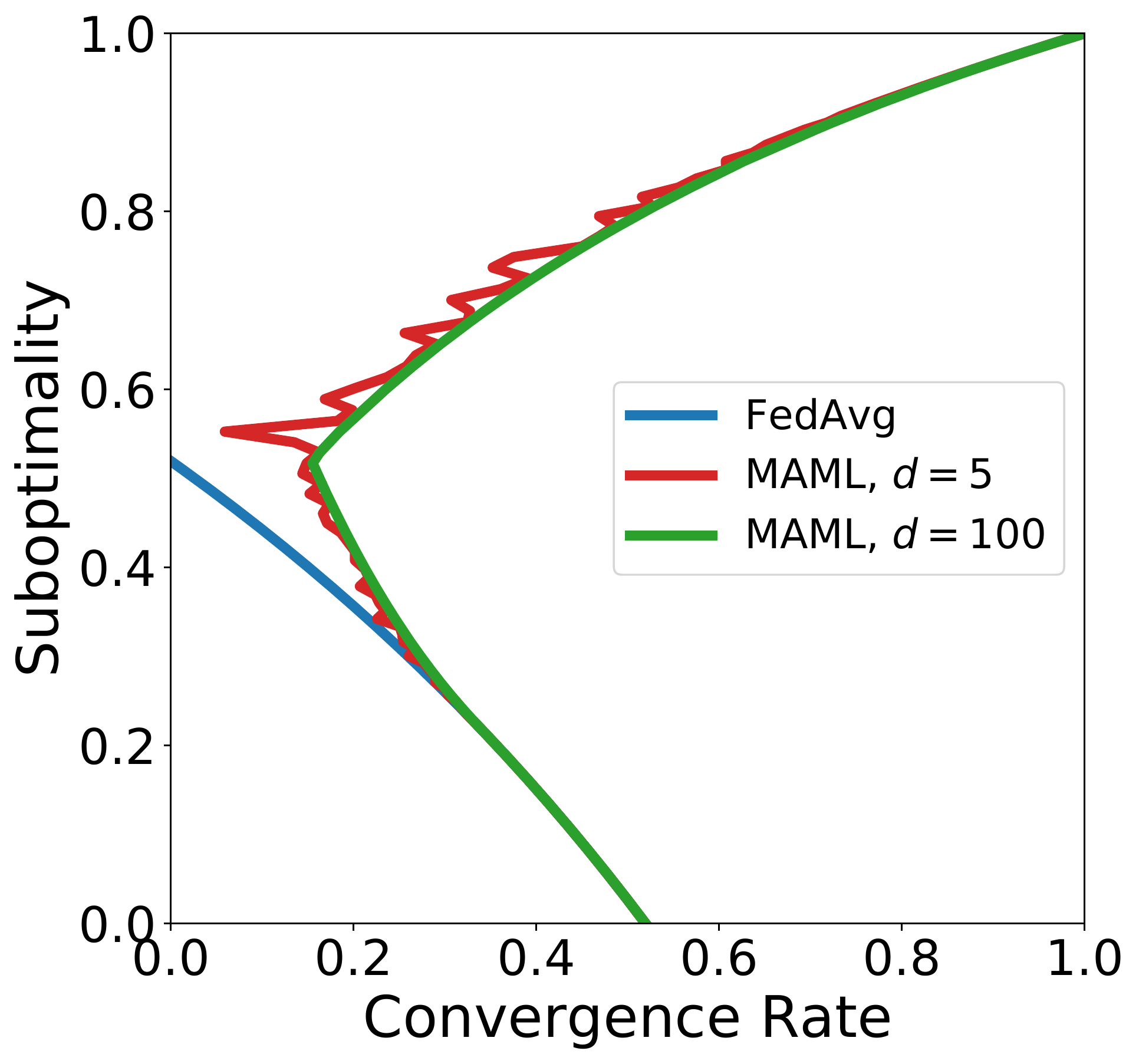}
    \caption{\ServerOpt: Gradient descent with heavy-ball momentum}
    \end{subfigure}
\caption{Simulated \maml Pareto frontiers for $\mu = 1, L = 10, \gamma = 0.001, \Theta = \Theta_K$, and where \ServerOpt is gradient descent with various types of momentum. We vary $K \in [1, 10^6]$. We randomly generate $A \in \R^{d\times d}$ with $\mu I \preceq A \preceq LI$ and compute the associated $(\rho, \Delta)$. We also compare to the Pareto frontier for $\Theta_{1:K}$ with the same \ServerOpt.}
\label{fig:sim_maml_momentum}
\end{figure}

\subsection{Proximal \maml-style Pareto Frontiers}

In Figure \ref{fig:maml_prox} we plot the analog of the Pareto frontiers in Figure \ref{fig:compare_fedprox}, but for \maml-style algorithms where $\Theta = \Theta_{K}$. We see a similar, though more subdued, version of the behavior in Figure \ref{fig:compare_fedprox}. That is, adding a proximal term simply alters how much of the Pareto frontier is traversed; it does not change the fundamental shape. Note that here we only used $\gamma$ satisfying $\gamma < (KL+\alpha)^{-1}$, as required by Theorem \ref{thm:conv_rates}. In particular, the only restriction on the shape of the curve seems to be coming from the fact that larger $\alpha$ reduces the set of $\gamma$ satisfying $\gamma < (KL+\alpha)^{-1}$.

\begin{figure}[ht]
\centering
    \begin{subfigure}{.35\linewidth}
    \centering
    \includegraphics[width=\linewidth]{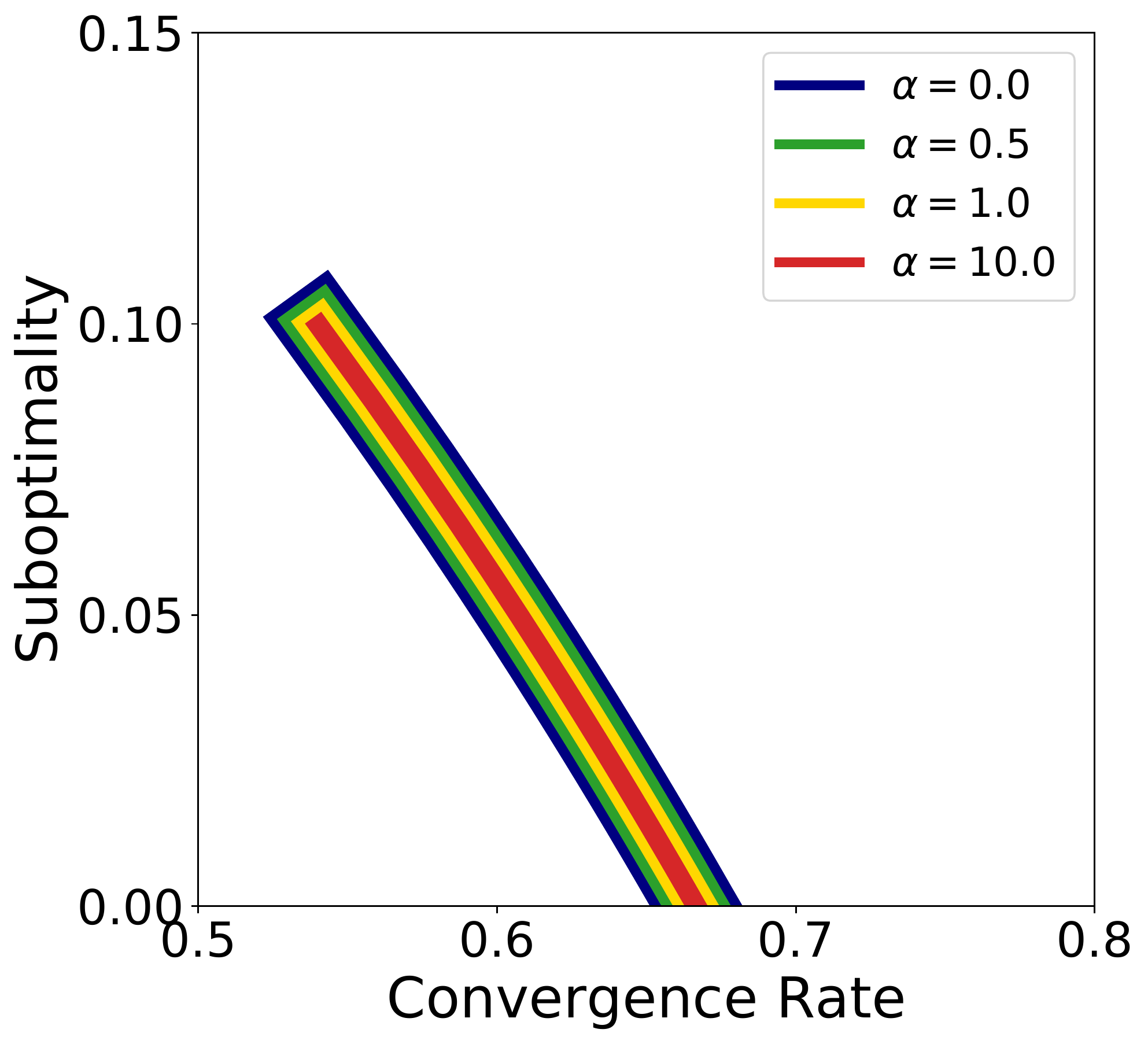}
    \end{subfigure}
\caption{Pareto frontiers for $\mu=1$, $L = 10, \gamma = 10^{-7}, \Theta = \Theta_{K}$ and varying $\alpha$. We generate the frontiers by varying $K \in [1, 10^6]$. We let \ServerOpt be gradient descent. For clarity, we used plots of varying width.}
\label{fig:maml_prox}
\end{figure}

To see the effects of $\alpha$ when $\gamma \geq (KL+\alpha)^{-1}$, we use the same simulated approach as in Figure \ref{fig:compare_simulated_fedavg_maml}. We do this for varying $\alpha$ in Figure \ref{fig:sim_maml_prox}. We see that while increasing $\alpha$ shrinks the space of the Pareto curve of \fedavg, it does not seem to change the \maml curves by a meaningful amount.

\begin{figure}[ht]
\centering
    \begin{subfigure}{.4\linewidth}
    \centering
    \includegraphics[width=\linewidth]{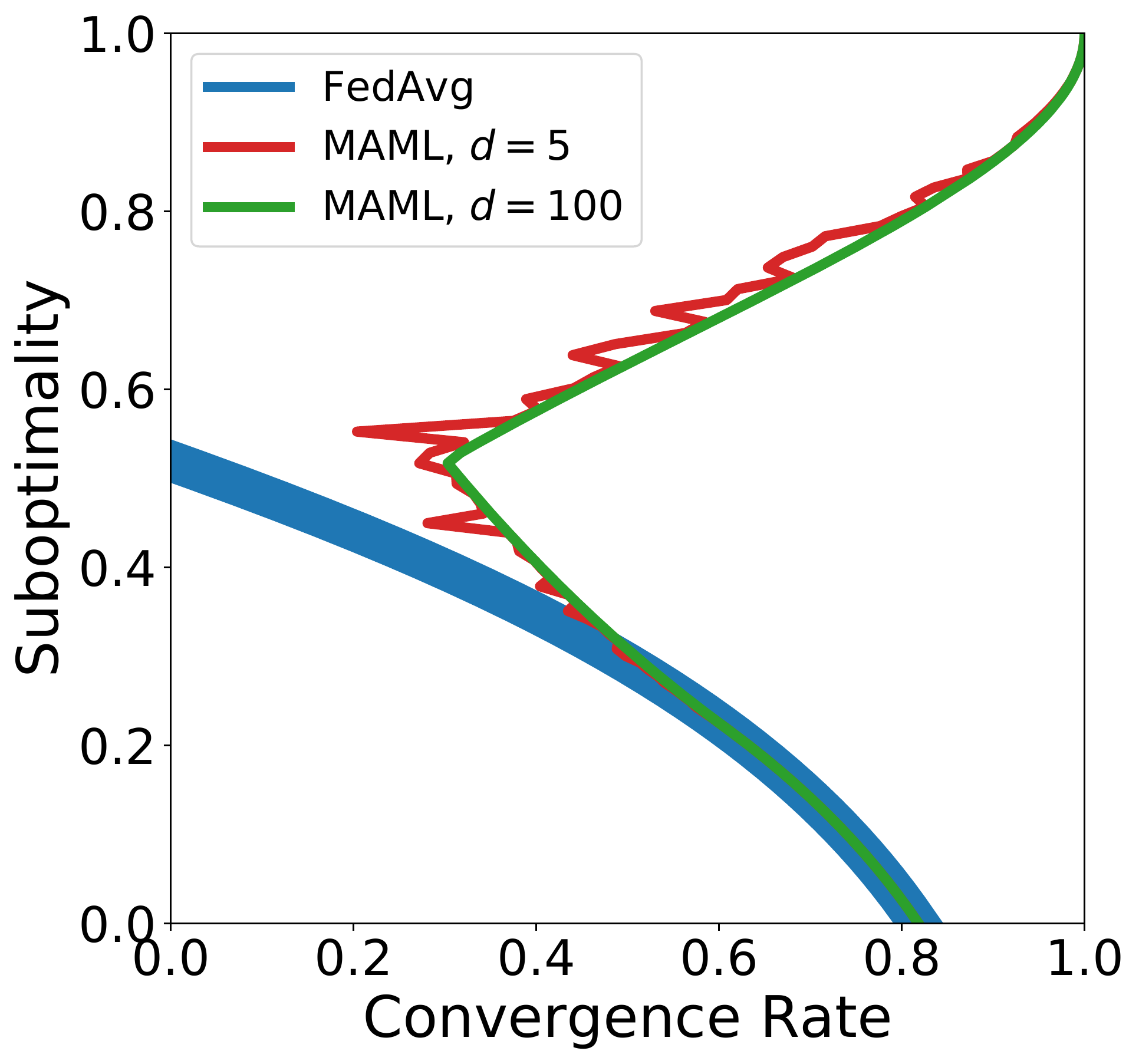}
    \caption{$\alpha = 0$}
    \end{subfigure}
    \begin{subfigure}{.4\linewidth}
    \centering
    \includegraphics[width=\linewidth]{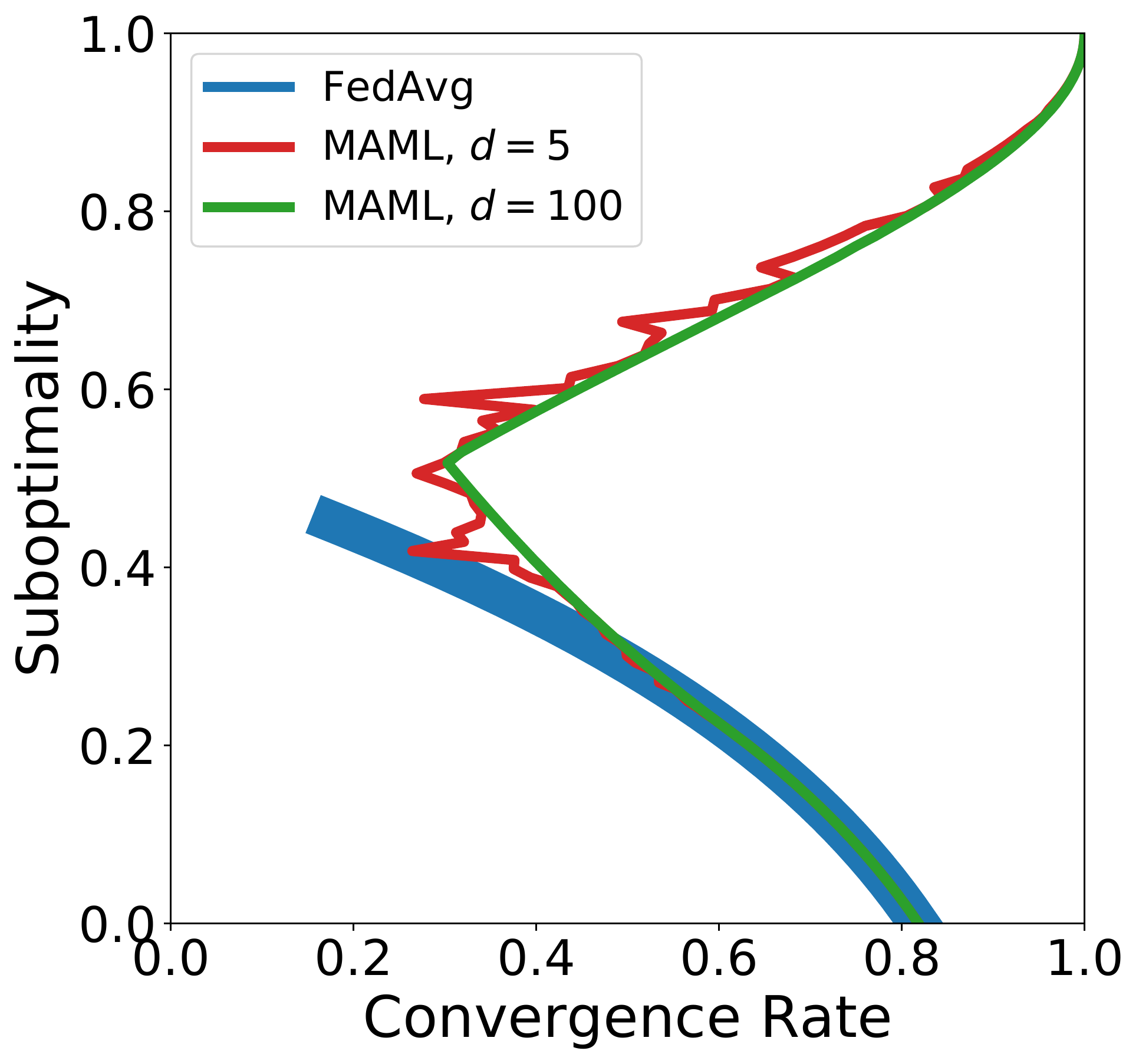}
    \caption{$\alpha = 0.5$}
    \end{subfigure}
    \begin{subfigure}{.4\linewidth}
    \centering
    \includegraphics[width=\linewidth]{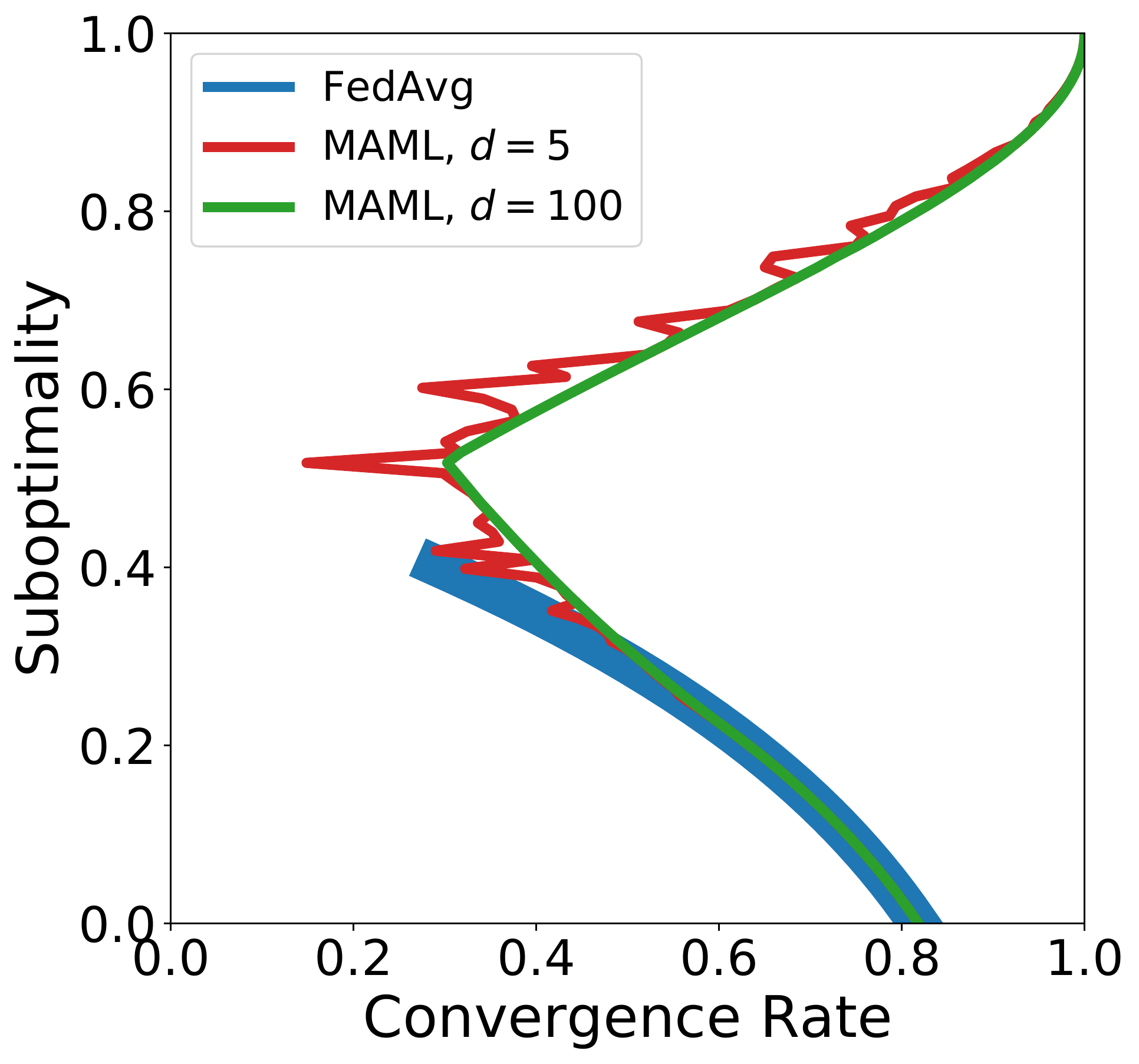}
    \caption{$\alpha = 1.0$}
    \end{subfigure}
    \begin{subfigure}{.4\linewidth}
    \centering
    \includegraphics[width=\linewidth]{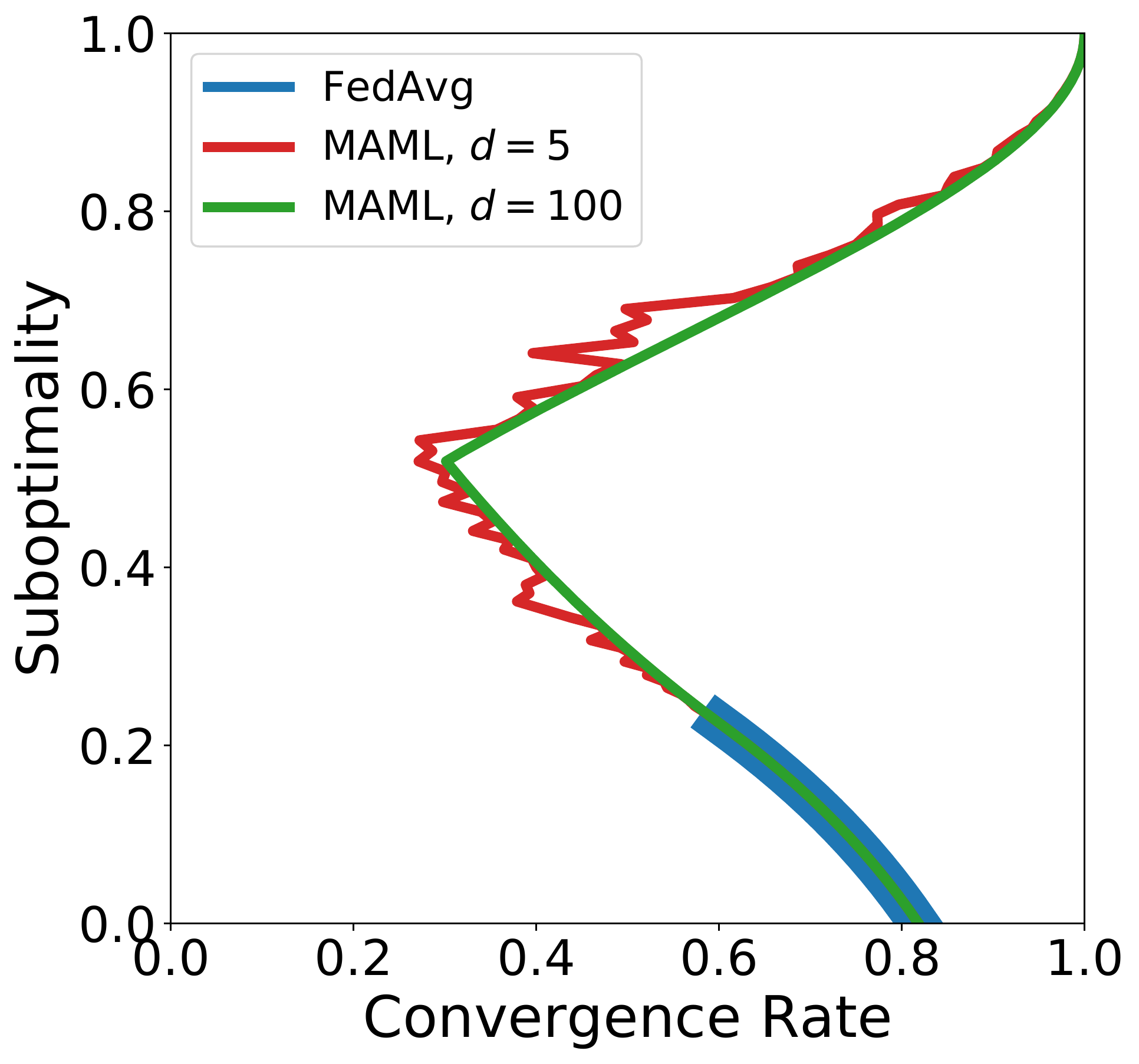}
    \caption{$\alpha = 5.0$}
    \end{subfigure}
\caption{Simulated Pareto frontiers for $\mu = 1, L = 10, \gamma = 0.001, \Theta = \Theta_K$ (\maml), where \ServerOpt is gradient descent. We use $\alpha \in \{0, 0.5, 1, 10\}$, $K \in [1, 10^6]$. We randomly generate $A \in \R^{d\times d}$ with $\mu I \preceq A \preceq LI$ and compute the associated $(\rho, \Delta)$. We also compare to the Pareto frontier for $\Theta_{1:K}$ (\fedavg) and the same $\alpha$.}
\label{fig:sim_maml_prox}
\end{figure}

\clearpage

\section{Experimental Setup}\label{appendix:experiment_setup}

\subsection{Datasets and Models}

We use three datasets: the federated extended MNIST dataset (FEMNIST)~\citep{caldas2018leaf}, CIFAR-100~\citep{krizhevsky2009learning}, and Shakespeare~\citep{caldas2018leaf}. The first two are image datasets, and the third is a language dataset. All datasets are publicly available. We specifically use the versions available in TensorFlow Federated~\citep{ingerman2019tff}, which gives a federated structure to all three. We keep the client partitioning when training, and create a test dataset by taking a union over all test client datasets. Statistics on the number of clients and examples in each dataset are given in Table \ref{table:datasets}.
    
\begin{table}[ht]
    \caption{Dataset statistics.}
    \label{table:datasets}
    \begin{center}
    \begin{sc}
    \begin{tabular}[t]{@{}lrrrr@{}}    
        \toprule
        Dataset & Train Clients & Train Examples & Test Clients & Test Examples \\
        \midrule
        FEMNIST & 3,400 & 671,585 & 3,400 & 77,483\\
        CIFAR-100 & 500 & 50,000 & 100 & 10,000 \\
        Shakespeare & 715 & 16,068 & 715 & 2,356\\
        \bottomrule
    \end{tabular}
    \end{sc}
    \end{center}
\end{table}

\paragraph{FEMNIST} The FEMNIST dataset consists of images hand-written alphanumeric characters. There are 62 total alphanumeric characters represented in the dataset. The images are partitioned among clients according to their author. The dataset has natural heterogeneity stemming from the writing style of each person. We train a convolutional network on the dataset (the same one used by \citet{reddi2020adaptive}). The network has two convolutional layers. Each convolutional layer uses $3\times 3$ kernels, max pooling, and then dropout with probability $p = 0.25$. The model has a final dense softmax output layer.

\paragraph{CIFAR-100} The CIFAR-100 dataset is a computer vision dataset consisting of $32 \times 32 \times 3$ images with 100 possible labels. While this dataset does not have a natural partition among clients, a federated version was created by \citet{reddi2020adaptive} using hierarchical latent Dirichlet allocation to enforce moderate amounts of heterogeneity among clients. We train a ResNet-18 on this dataset, where we replace all batch normalization layers with group normalization layers~\citep{wu2018group}. The use of group norm over batch norm in federated learning was first advocated by \citet{hsieh2019non}.

We perform small amounts of data augmentation and preprocessing, as is standard with CIFAR-100. We first perform a random crop to shape $(24, 24, 3)$, followed by a random horizontal flip. We then normalize the pixel values according to their mean and standard deviation. Thus, given an image $x$, we compute $(x - \mu)/\sigma$ where $\mu$ is the average of the pixel values in $x$, and $\sigma$ is the standard deviation.

\paragraph{Shakespeare} The Shakespeare dataset is derived from the benchmark designed by \citet{caldas2018leaf}. The dataset corpus is the collected works of William Shakespeare, and the clients correspond to roles in Shakespeare's plays with at least two lines of dialogue. To eliminate confusion, \emph{character} here will refer to alphanumeric and other such symbols, while we will use \emph{client} to denote the various roles in plays. We split each client's lines into sequences of 80 characters, padding if necessary. We use a vocabulary size of 90: 86 characters contained in Shakespeare's work, beginning and end of line tokens, padding tokens, and out-of-vocabulary tokens. We perform next-character prediction on the clients' dialogue using an RNN. The RNN takes as input a sequence of 80 characters, embeds it into a learned 8-dimensional space, and passes the embedding through 2 LSTM layers, each with 256 units. Finally, we use a softmax output layer with 80 units, where we try to predict a sequence of 80 characters formed by shifting the input sequence over by one. Therefore, our output dimension is $80\times 90$. We compute loss using cross-entropy loss.

\subsection{Implementation and Hyperparameters}

We implement \localupdate in TensorFlow Federated~\citep{ingerman2019tff}. We use \localupdate with $\Theta = \Theta_{1:K}$ and client learning rate $\gamma$. In all experiments, \ServerOpt is gradient descent with server learning rate $\eta$, with either no momentum, Nesterov momentum, or heavy-ball momentum. We sample $M = 10$ clients per round. We sample without replacement within a given round, and with replacement across rounds. In order to derive fair comparisons between different hyperparameter settings, we use a random seed to fix which clients are sampled at each round. We use a batch size of $B = 20$ for FEMNIST and CIFAR-100, and $B = 4$ for Shakespeare.

\subsection{Details of Figure \ref{fig:emnist_results}}For posterity's sake, we re-plot Figure \ref{fig:emnist_results} in Figure \ref{fig:emnist_results_2}. To generate these plots, we perform two distinct experiments. In the first experiment (Figures \ref{fig:emnist_results} and \ref{fig:emnist_results_2}, left), we fix $\alpha = 0$ and vary \ServerOpt. Specifically, we let \ServerOpt be gradient descent with no momentum (gradient), gradient descent with Nesterov momentum (nesterov), and gradient descent with heavy-ball momentum (momentum). When \ServerOpt uses Nesterov or heavy-ball momentum, we use a momentum parameter of $\beta = 0.9$. In the second experiment (Figures \ref{fig:emnist_results} and \ref{fig:emnist_results_2}, right), we fix \ServerOpt to be gradient descent with no momentum, and vary the proximal strength $\alpha$. In both cases, we fix $\Theta = \Theta_{1:50}$, and tune $\gamma, \eta$ over the range
\[
\gamma, \eta \in \{10^{-3}, 10^{-2.5}, \dots, 10^{0.5}, 10\}.
\]
We select the values of $\gamma, \eta$ attaining the best average test accuracy over the last 100 rounds.

\begin{figure}[ht]
\centering
    \begin{subfigure}{.4\linewidth}
    \centering
    \includegraphics[width=\linewidth]{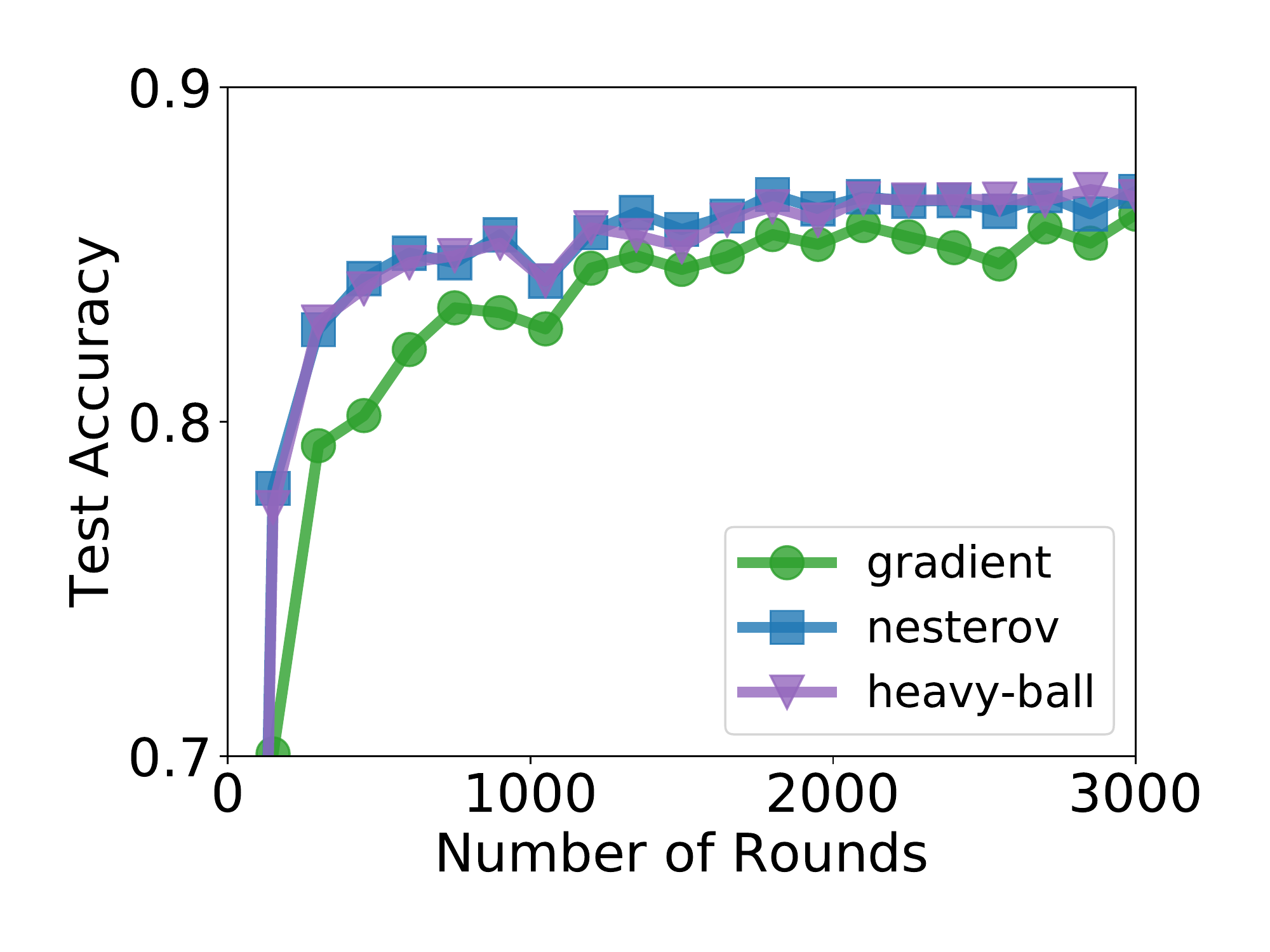}
    \end{subfigure}
    \begin{subfigure}{.4\linewidth}
    \centering
    \includegraphics[width=\linewidth]{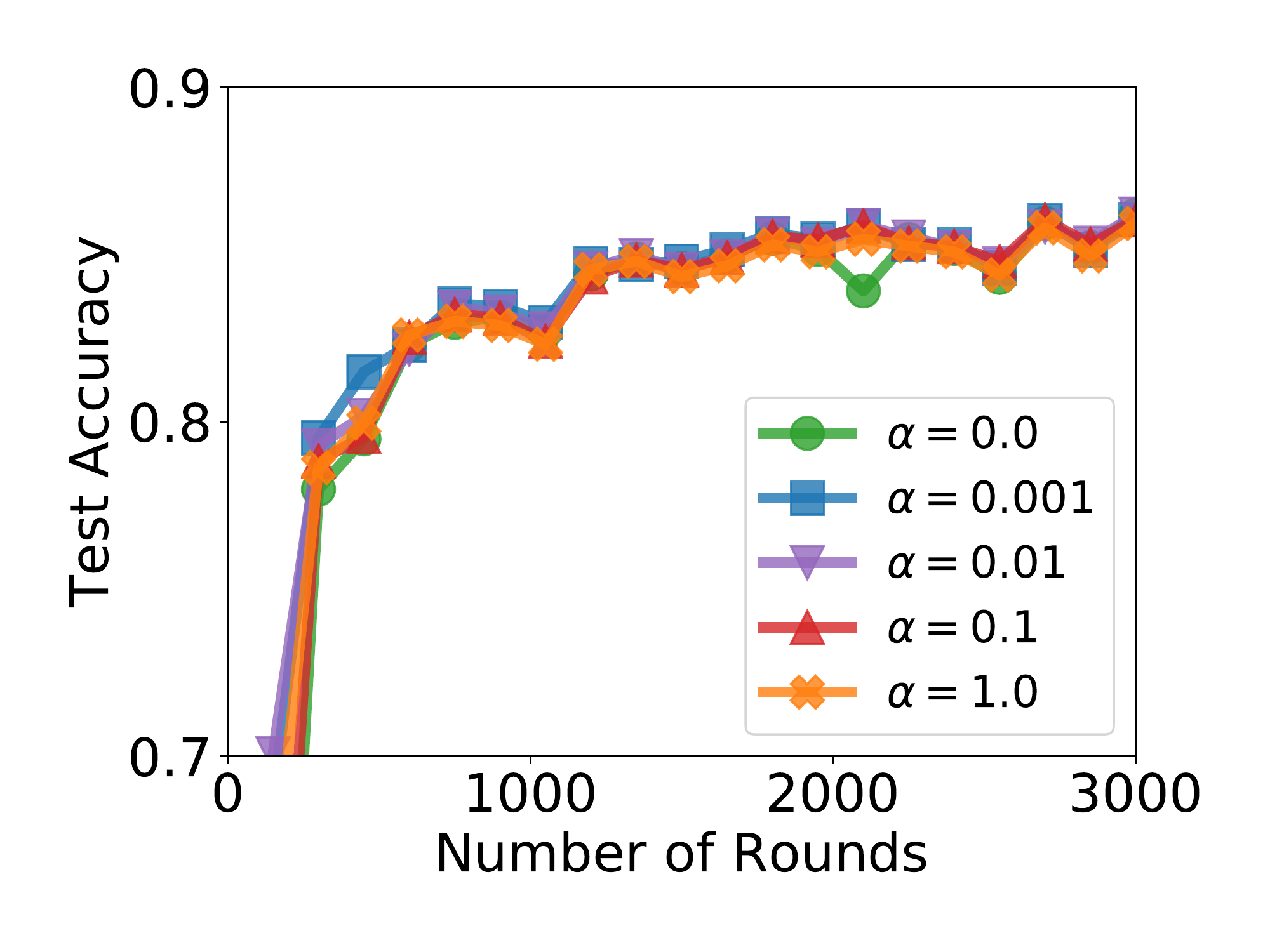}
    \end{subfigure}
\caption{Test accuracy of \localupdate on FEMNIST with tuned learning rates. (Left) Varying types of server momentum, $\alpha = 0$. (Right) No momentum and varying $\alpha$.}
\label{fig:emnist_results_2}
\end{figure}

\section{Additional Experiments}\label{appendix:additional_experiments}

We wish to showcase the convergence-accuracy trade-off discussed in Section \ref{sec:convergence_and_accuracy} in non-convex settings. We train \localupdate with $\alpha = 0$, $\Theta = \Theta_{1:10}$, and let \ServerOpt be gradient descent with learning rate $\eta$. First, we fix $\eta = 0.01$ and vary $\gamma$ over
\[
\gamma \in \{0, 10^{-3}, 10^{-2}, 10^{-1}, 1, 10\}.
\]
We plot the training loss over time on all three datasets in Figure \ref{fig:constant_server_lr}, omitting results that diverge due to $\gamma$ being too large. 

\begin{figure}[ht]
\centering
    \begin{subfigure}{.31\linewidth}
    \centering
    \includegraphics[width=\linewidth]{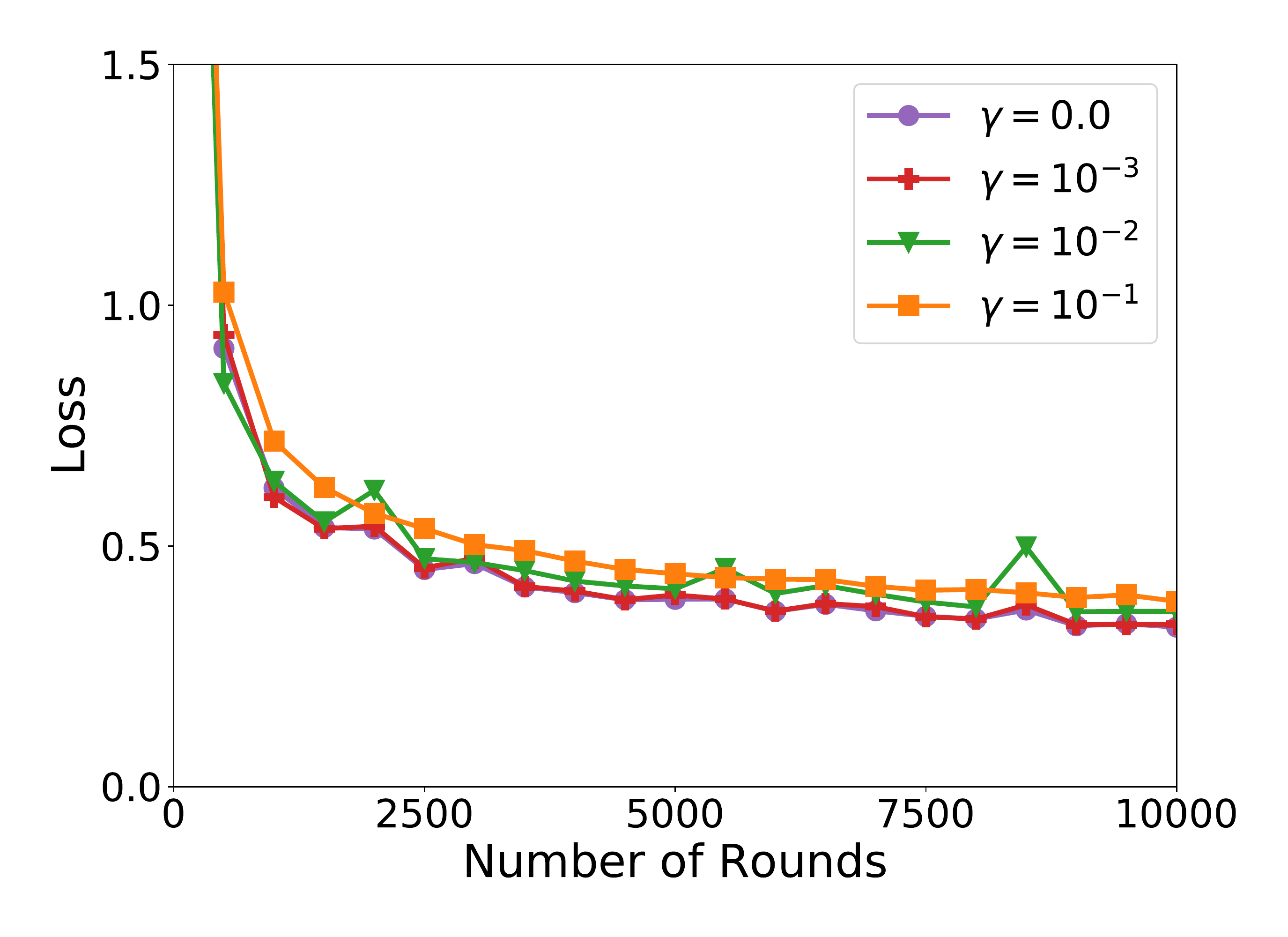}
    \caption{FEMNIST}
    \end{subfigure}
    \begin{subfigure}{.31\linewidth}
    \centering
    \includegraphics[width=\linewidth]{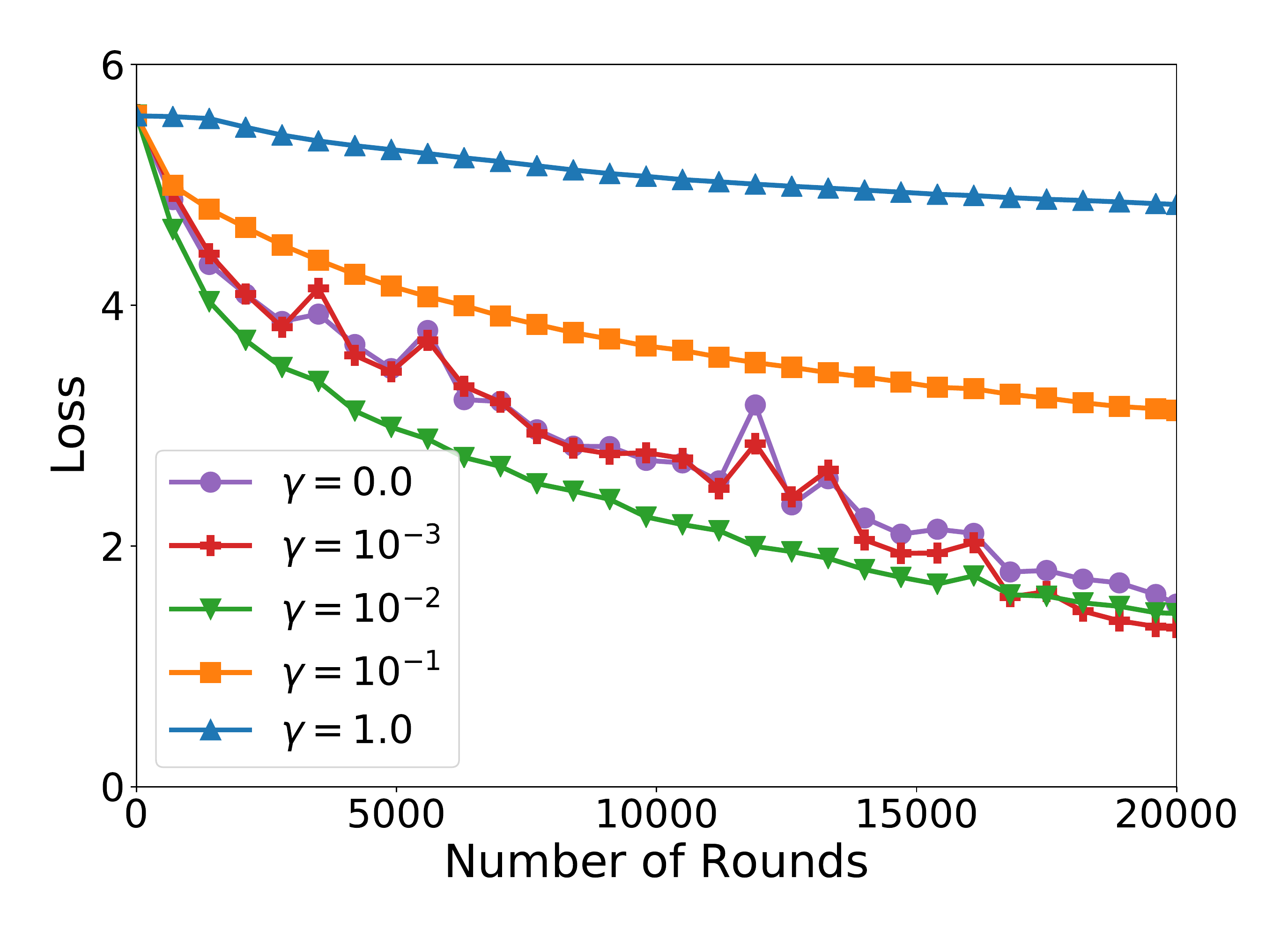}
    \caption{CIFAR-100}
    \end{subfigure}
    \begin{subfigure}{.31\linewidth}
    \centering
    \includegraphics[width=\linewidth]{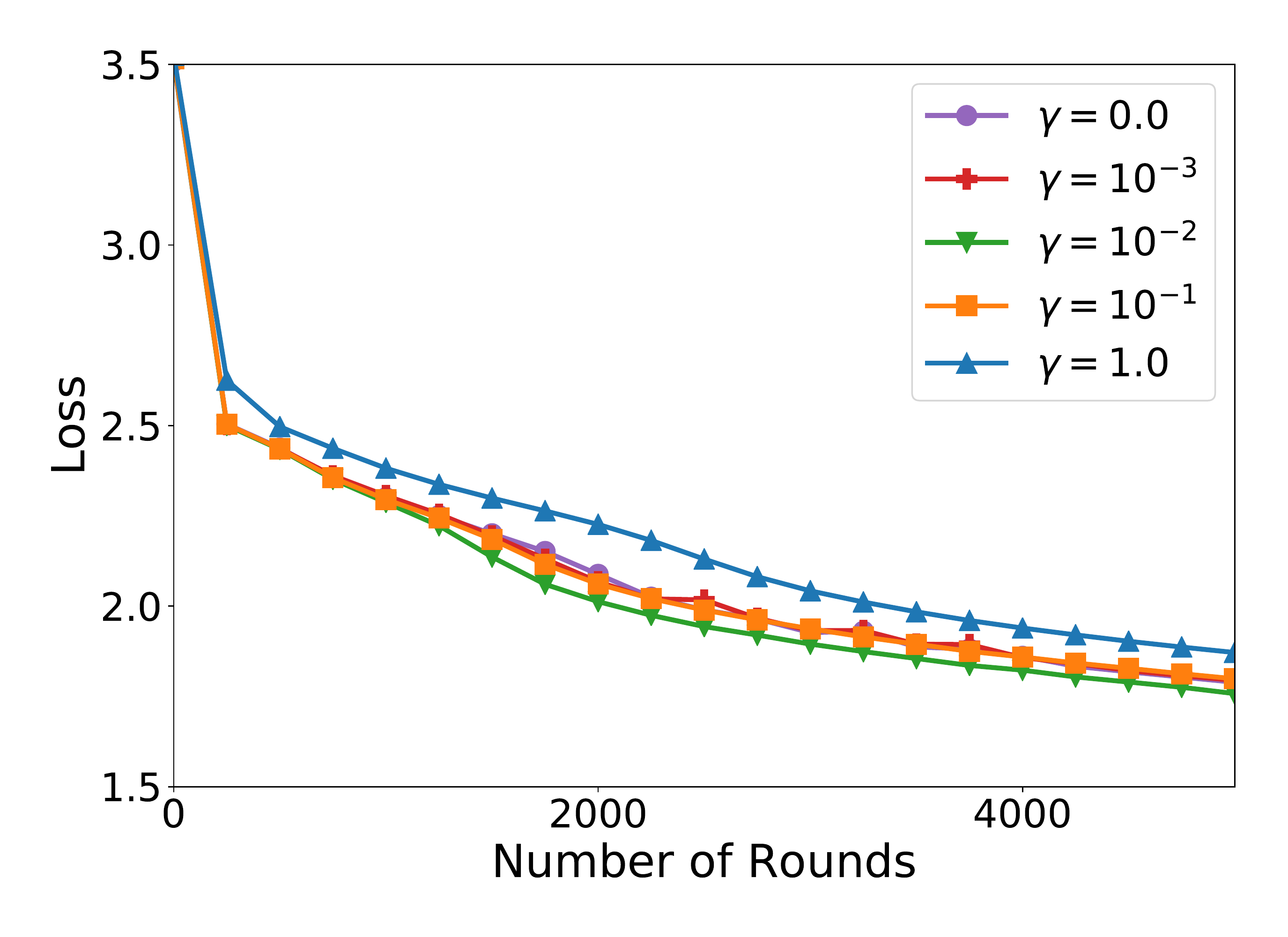}
    \caption{Shakespeare}
    \end{subfigure}
\caption{Training loss of \localupdate with $\alpha = 0$, $\Theta = \Theta_{1:10}$, varying client learning rate $\gamma$, and where \ServerOpt is gradient descent with learning rate $\eta = 0.01$ and no momentum.}
\label{fig:constant_server_lr}
\end{figure}

We see that on all three tasks, especially CIFAR-100, the choice of client learning rate can impact not just the speed of convergence, but what point the algorithm converges to. In general, we see very similar behavior to that described in Sections \ref{sec:convergence_and_accuracy} and \ref{sec:compare}, despite the non-convex loss functions involved in all three tasks. For both FEMNIST and CIFAR-100, smaller client learning rates eventually reach lower training losses than higher learning rates. This is particularly evident in the results for CIFAR-100. While $\gamma = 10^{-2}$ initially performs better than all other methods, it is eventually surpassed by $\gamma = 10^{-3}$, and $\gamma = 0$ ends up obtaining a comparable accuracy. This reflects the idea presented in Section \ref{sec:compare} that hyperparameters should be chosen according to the desired convergence-accuracy trade-off. In communication-limited settings, we should use larger $\gamma$ (or $K$), while in cases where we can run many communication rounds, we should use smaller $\gamma$ (or $K$).

\begin{figure}[ht]
\centering
    \begin{subfigure}{.31\linewidth}
    \centering
    \includegraphics[width=\linewidth]{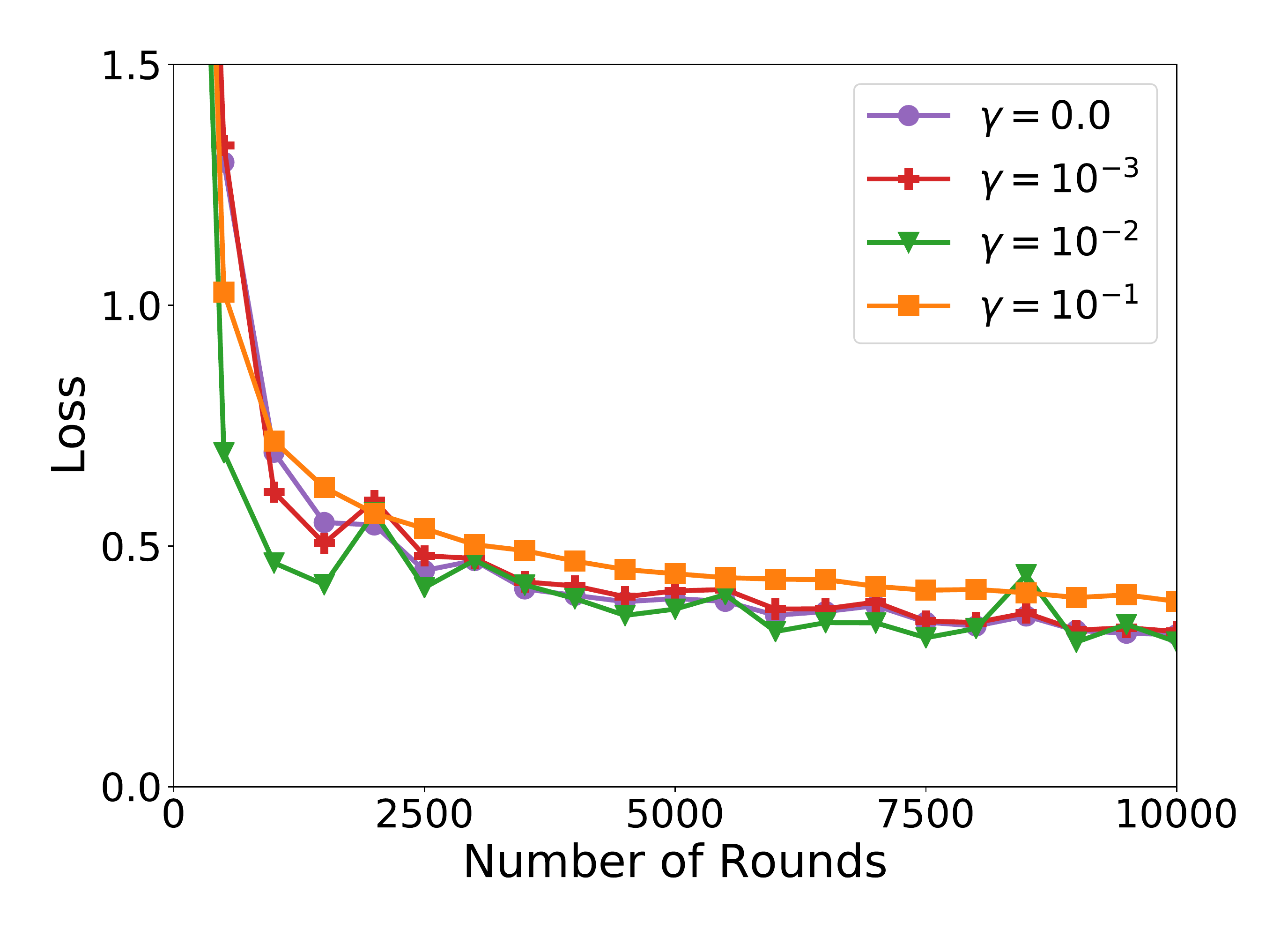}
    \caption{FEMNIST}
    \end{subfigure}
    \begin{subfigure}{.31\linewidth}
    \centering
    \includegraphics[width=\linewidth]{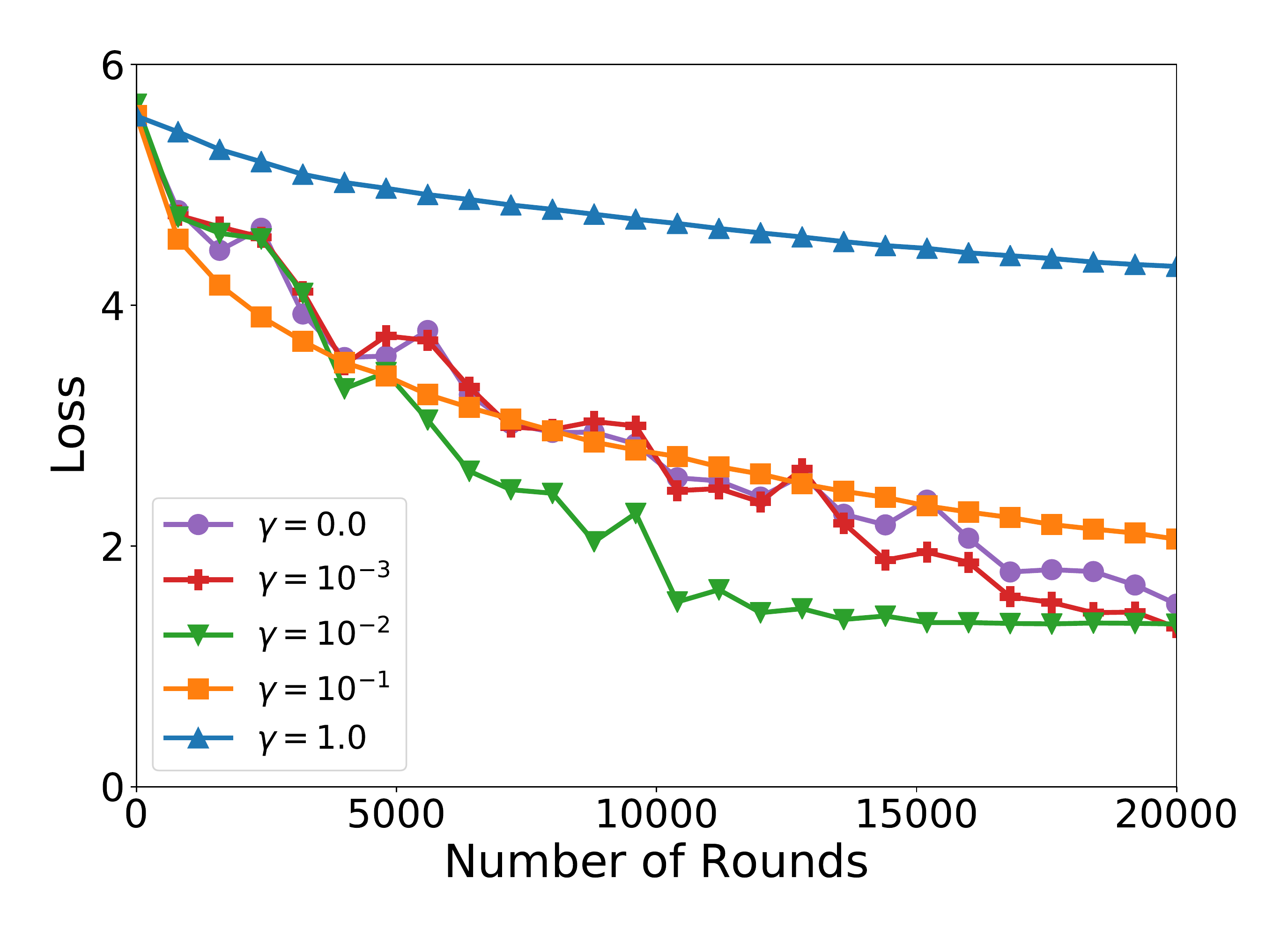}
    \caption{CIFAR-100}
    \end{subfigure}
    \begin{subfigure}{.31\linewidth}
    \centering
    \includegraphics[width=\linewidth]{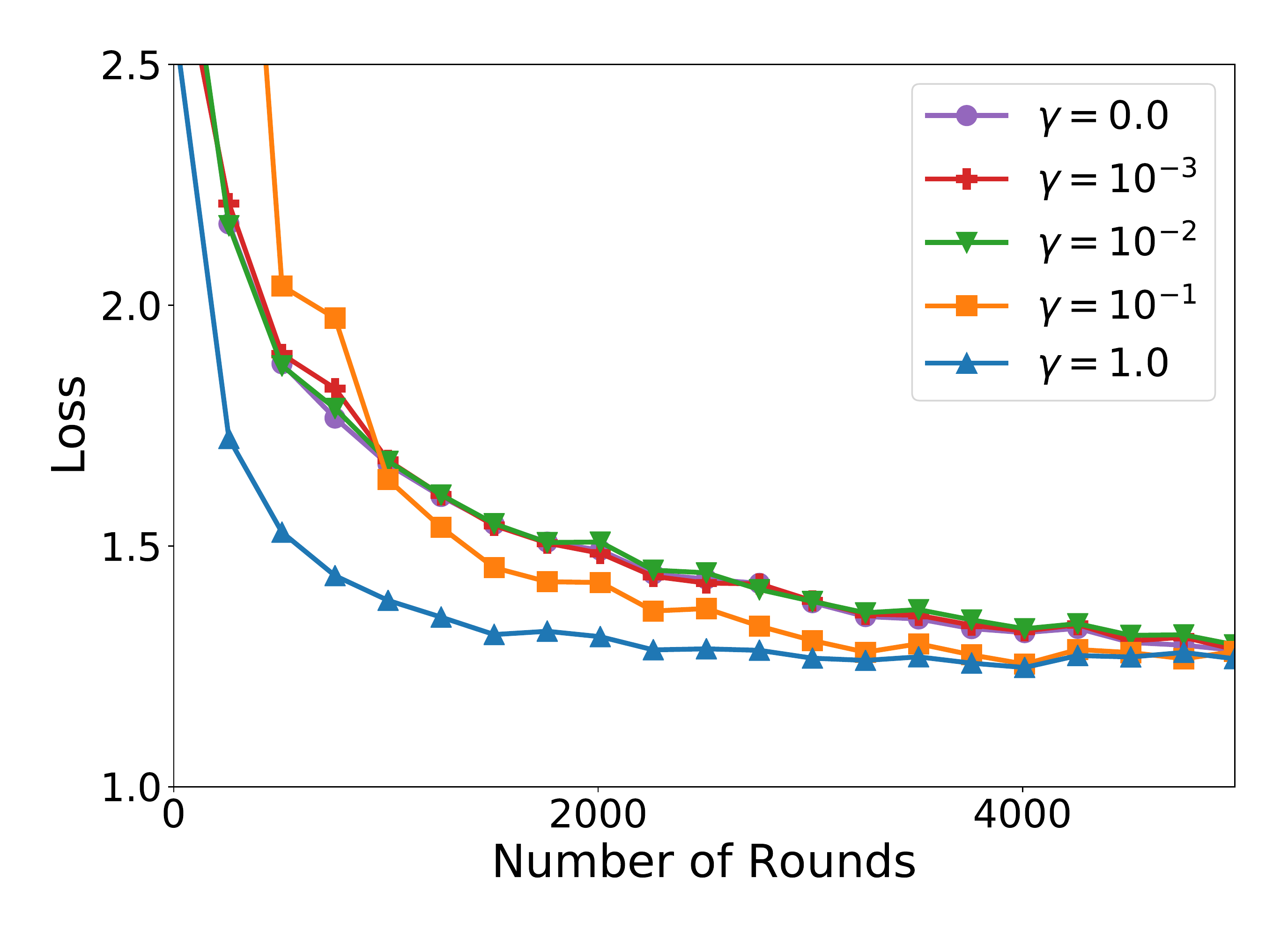}
    \caption{Shakespeare}
    \end{subfigure}
\caption{Training loss of \localupdate with $\alpha = 0, \Theta = \Theta_{1:10}$, varying client learning rate $\gamma$, and where \ServerOpt is gradient descent with tuned server learning rate $\eta$.}
\label{fig:tuned_server_lr}
\end{figure}

In short, we see clear evidence that the choice of client learning rate $\gamma$ leads to a trade-off between convergence and accuracy. However, as shown in Lemmas \ref{lem:cond_fedavg} and \ref{lem:cond_maml}, the condition number of the surrogate loss changes depending on parameters such as $\gamma$. To derive asymptotically optimal rates for strongly convex functions (such as the ones in Table \ref{table:conv_local_update}), one must generally set the learning rate $\eta$ according to the condition number. Thus, we repeat the experiments in Figure \ref{fig:constant_server_lr}, but where we tune the server learning rate $\eta$ instead of fixing it. This helps account for how the optimization dynamics can change as a function of the client learning rate $\gamma$. We vary the server learning rate $\eta$ over
\[
\eta \in \{ 10^{-3}, 10^{-2.5}, \dots, 10\}
\]
and select $\eta$ that leads to the smallest average training loss over the last 100 rounds. The result is given in Figure \ref{fig:tuned_server_lr}. Again, we see similar behavior, but see that when the server learning rate is tuned, larger client learning rates may do much better initially. This reflects the fact that in Table \ref{table:conv_local_update}, the best convergence rates can only be obtained by setting parameters of \ServerOpt correctly.

This points to another benefit of the Pareto frontiers proposed in Section \ref{sec:compare}. Many comparisons of different algorithms, especially empirical ones, can miss good hyperparameter settings. This is heightened by the fact that many FL and ML methods have hyperparameters for both client and server optimizers, comprehensive tuning extremely difficult. This may lead to unfair comparisons between methods. By contrast, the Pareto frontiers showcase convergence-accuracy trade-offs when the hyperparameters of \ServerOpt are selected in an ``optimal'' way, helping derive fair comparisons between methods.

\end{document}